\documentclass{article}

 \usepackage[preprint]{neurips_2026}

\usepackage[utf8]{inputenc} % allow utf-8 input
\usepackage[T1]{fontenc}    % use 8-bit T1 fonts
\usepackage{hyperref}       % hyperlinks
\usepackage{url}            % simple URL typesetting
\usepackage{booktabs}       % professional-quality tables
\usepackage{amsfonts}       % blackboard math symbols
\usepackage{nicefrac}       % compact symbols for 1/2, etc.
\usepackage{microtype}      % microtypography
\usepackage{xcolor}         % colors
\usepackage{enumitem}
\usepackage{amsmath}
\usepackage{amsthm}
\usepackage{bm}
\usepackage[table]{xcolor}
\usepackage{tcolorbox}

\newtheorem*{proposition*}{Proposition}

\newtheorem{lemma}{Lemma}
\newtheorem{proposition}{Proposition}
\newtheorem{corollary}{Corollary}

\usepackage{amssymb}
\usepackage{mathtools}
\usepackage{caption}  
\usepackage{array}    
\usepackage{algorithm}
\usepackage{algorithmic}
\usepackage{float}             
\usepackage{placeins} 
\usepackage{xcolor} 
\usepackage{setspace}
\usepackage{multirow}
\usepackage{graphicx}
\usepackage{adjustbox}
\usepackage{array}
\usepackage{tabularx}
\usepackage[misc]{ifsym}
\usepackage{natbib}
\setcitestyle{square}

\renewcommand{\cite}{\citep}
\usepackage{wrapfig}
\usepackage{nicefrac}

\title{
\Large
The Hidden Power of Scaling Factor in LoRA Optimization
}

\author{
 Zicheng Zhang$^1$\thanks{Correspondence:  zhangzicheng6@jd.com}
\quad\  {Haoran Li}$^2$
\quad\  \textbf{Jiaxing Wang}$^1$
\quad\   {Guoqiang Gong}$^1$
\quad\   \textbf{Anqi Li}$^3$
\vspace{0.3em}
\\
\quad\   \textbf{Yudong Hu}$^1$
\quad\   \textbf{Ting Xiong}$^1$
\quad\   \textbf{Yurong Gao}$^4$
\quad\   \textbf{Junxing Hu}$^1$
\quad\   \textbf{Zhida Jiang}$^1$
\vspace{0.3em}
\\
\quad\   \textbf{Yifeng Zhang}$^1$
\quad\  \textbf{Pengzhang Liu$^1$}
\quad\  \textbf{Qixia Jiang$^1$}
\vspace{0.5em} 
\\
{$^1$JD}\quad{$^2$School of Mathematical Sciences, UCAS}\\
{$^3$School of Mathematical Sciences, NKU} \\
{$^4$School of Advanced Interdisciplinary Sciences, UCAS}
}

\begin{document}

\maketitle

\begin{abstract}

In Low-Rank Adaptation (LoRA), the scaling factor $\alpha$ is often treated as a mere complement to the learning rate, yet its role in optimization remains poorly understood.
In this paper, we reveal that {the scaling factor $\alpha$ and the learning rate function differently}, with $\alpha$ emerging as the dominant driver of effective optimization, delivering gains that cannot be replicated by learning rate scaling alone.
Through the synergy of extensive empirical analysis and a theoretical Signal-Drift framework, we uncover three findings into LoRA’s scaling mechanism: First, LoRA’s spectral suppression smooths the optimization landscape, rendering standard hyperparameters overly conservative and creating an optimization gap. Second, when leveraging this smoothness to accelerate convergence, $\alpha$ outperforms the learning rate by amplifying the task signal without increasing the drift ratio. Third, the optimal scaling factor follows a sublinear relationship with the rank, well characterized by a square-root law with an unexpectedly large coefficient, revealing the insufficient scaling of existing rank-tied heuristics.
Based on these insights, we propose LoRA-$\alpha$, a minimalist framework that restores $\alpha$ to its principled regime, making LoRA compatible with standard small learning rates. Extensive evaluations across diverse tasks demonstrate that LoRA-$\alpha$ consistently improves performance while streamlining hyperparameter search,  unleashing the learning potential of LoRA.

\end{abstract}

\section{Introduction}

The rapid growth of large pretrained models~\cite{Brown2020LanguageMA,OpenAI2023GPT4TR,touvron2023llama,bai2023qwen,guo2025deepseek} has made efficient adaptation a central challenge, driving the development of Parameter-Efficient Fine-Tuning (PEFT) methods~\cite{peft,Lester2021ThePO,He2021TowardsAU,Edalati2022KronAPE,zhang2025parameter}. Among these, Low-Rank Adaptation (LoRA)~\cite{Hu2021LoRALA} has emerged as a dominant approach due to its efficiency and stability.
LoRA parameterizes weight updates as $\Delta W = \frac{\alpha}{r}BA$, where low-rank factors $B$ and $A$ approximate the update with rank $r$ and scaling factor $\alpha$. This simple formulation reinforced by framework support~\cite{peft}, has enabled widespread adoption from Natural Language Processing (NLP)~\cite{liu2022few,Ding2022DeltaTA,zhao2024lora} to multimodal generation~\cite{guo2023animatediff,blattmann2023stable,ruiz2023dreambooth}.

Despite its conceptual simplicity, the optimization behavior of LoRA remains poorly understood due to its inherent bilinear architecture and the complex interplay of hyperparameters. The scaling factor $\alpha$, originating from feature learning principles~\cite{yang2020feature}, aims to decouple the optimal hyperparameters from the rank selection, thereby streamlining hyperparameter search.
While research has  explored initialization~\cite{lora-ga,PiSSA,zhang2025primacy} and learning rates~\cite{biderman2024lora,schulman2025lora,chen2026learning}, the scaling factor $\alpha$ remains systematically underexplored, typically tethered to simplistic rank-based heuristics, such as $\alpha = r$~\cite{Hu2021LoRALA} or $2r$~\cite{biderman2024lora}.
Consequently, $\alpha$ is often regarded as a secondary alternative to the learning rate for scaling updates, obscuring its pivotal role in reducing hyperparameter search and, more fundamentally, in shaping the underlying optimization regime.

In this work, we analyze the scaling mechanism of LoRA through a joint empirical and theoretical study. 
Across extensive hyperparameter sweeps, we consistently observe that effective optimization hinges more on a sufficiently large scaling factor $\alpha$ than on elevating the learning rate $\eta$. 
To understand this phenomenon, we develop a Signal-Drift framework that delineates LoRA's  optimization characteristics. 
In this view, the task-relevant signal and the bilinear-induced drift respond differently to $\alpha$ and $\eta$, providing a principled explanation for our empirical observations. 
Together, these empirical and theoretical evidence lead to the following three key findings:
\begin{itemize}[leftmargin=*]
    \item \textbf{Spectral Suppression Causes a Scaling Misalignment.}
    We find that LoRA’s low-rank parameterization induces a spectral suppression of the task Hessian, effectively smoothing the optimization landscape. 
    While this improves stability, it also renders standard hyperparameters overly conservative, leading to a significant optimization gap and motivating the aggressive scaling practices observed in prior work~\cite{lora+,schulman2025lora,zhang2025primacy}.

    \item \textbf{$\alpha$ and $\eta$ Play Fundamentally Different Roles.}
    Empirically, we observe that increasing $\alpha$ consistently yields better convergence than increasing $\eta$. 
    Our framework explains this by showing that $\alpha$ amplifies the task-aligned signal, whereas $\eta$ amplifies both the signal and bilinear drift. 
    As a result, $\alpha$ acts as a purity-preserving accelerator, enabling faster and more stable optimization.

    \item \textbf{Optimal Scaling Follows a Sublinear Law.}
    We identify a sublinear relationship between the optimal $\alpha$ and rank $r$, concisely characterized by a square-root law with a large scaling coefficient. 
    This reveals that commonly used rank-tied heuristics~\cite{Hu2021LoRALA,biderman2024lora} operate in a severely under-scaled regime. 
    With proper scaling, LoRA can directly adopt the standard small learning rates used in Full Fine-Tuning (FFT), while achieving superior performance.
\end{itemize}

Based on both empirical and theoretical findings, we identify a key limitation in prevailing LoRA practices: commonly used heuristics restrict $\alpha$ to insufficient magnitudes, limiting LoRA’s optimization capacity. 
To address this, we propose LoRA-$\alpha$, which introduces a large base value with a square-root scaling law to restore $\alpha$ to a principled regime. 
This formulation enables practitioners to bypass costly hyperparameter tuning and directly adopt standard FFT learning rates. 
Extensive experiments across model scales (184M–12B), task domains (natural language, reasoning, multimodal), and training paradigms (supervised, contrastive, and reinforcement learning) show that larger $\alpha$ with standard small $\eta$ consistently yields superior performance. In summary, our contributions include:

\begin{itemize}[leftmargin=*, topsep=0em, itemsep=0.2em]
    \item Empirically, we establish three key findings regarding LoRA's optimization through extensive hyperparameter sweeps. We uncover that performance hinges critically on a large scaling factor, exposing a pitfall in current rank-tied heuristics that leave LoRA's fitting potential underutilized.
    \item Theoretically, we develop a Signal-Drift framework to provide a principled explanation of these findings. By characterizing Hessian spectral suppression, we identify $\alpha$ as a better accelerator and derive a square-root law that aligns LoRA's optimization regime with that of FFT.
    
    \item We propose {LoRA-$\alpha$}, a minimalist framework that elevates the scaling factor to its principled regime while adopting standard FFT learning rates. Across diverse models, tasks, and training paradigms, it significantly improves over LoRA, often reaching performance comparable to FFT.
\end{itemize}

\begin{figure}[t]

    \centering
\includegraphics[width=1.0\linewidth]{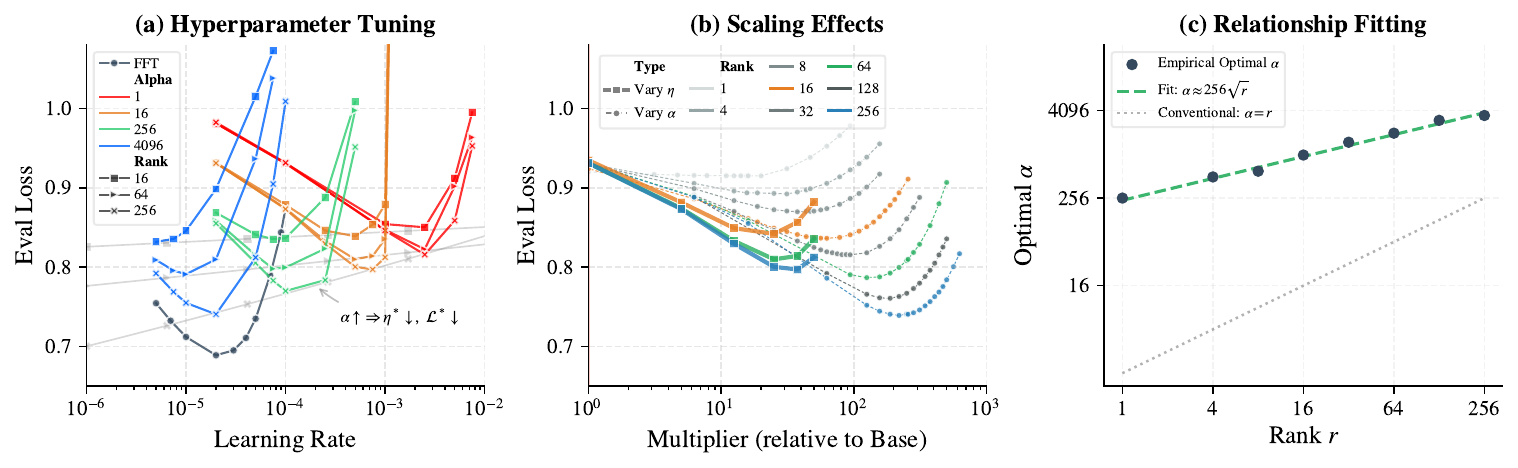}
\caption{Hyperparameter analysis of Llama 3-1B on the Tulu 3 dataset. 
(a) Evaluation loss as a function of the learning rate $\eta$ across different ranks $r$ and scaling factors $\alpha$. Gray lines denote linear fits of the minimum loss for each $(r, \alpha)$. Increasing $\alpha$ lowers the optimal loss and shifts $\eta^*$ downward.
(b) Scaling paths defined in Eq.~\eqref{eq:scaling_path} relative to the baseline ($\eta_{\text{FFT}} = 2\times10^{-5}$, $\alpha_0 = 16$), where $\eta_{\text{FFT}}$ denotes the optimal FFT learning rate. Varying $\alpha$ reaches lower-loss regimes that are inaccessible via $\eta$-tuning alone. 
(c) Optimal scaling factor $\alpha^*$ as a function of rank $r$, evaluated at $\eta_{\text{FFT}}$. The observed sublinear trend and large magnitude of $\alpha$ challenge conventional scaling heuristics.
}
    \label{fig:llama3_tulu3}
\end{figure}

\section{An Empirical Study of LoRA Scaling}
\label{sec:empirical_validation}

To study the scaling behavior of LoRA, we perform systematic hyperparameter sweeps to characterize the interplay between the scaling factor $\alpha$, learning rate $\eta$, and rank $r$. We focus on \textit{fitting behavior}, isolating optimization dynamics from generalization effects such as implicit low-rank regularization~\cite{jang2024lora} and task diversity. 
This enables a clean analysis of the optimization landscape, while generalization is evaluated separately in Section~\ref{sec:experiments}.

\textbf{Optimization Setup and Metrics.}
Following \citet{schulman2025lora}, we conduct Supervised Fine-Tuning (SFT) on two model scales (Llama 3-1B and 8B~\cite{llama3}) and datasets (Tulu 3~\cite{lambert2024tulu} and OpenThoughts~\cite{guha2025openthoughts}), training each configuration for one epoch with a batch size of 32 using AdamW. 
For clarity, we report results on Llama 3-1B with Tulu 3, deferring the rest to Appendix~\ref{appendix:more_fitting}. 
To manage the large search space over $\alpha$, $\eta$, and $r$, we use a 100k-example subset of Tulu 3 with a maximum token length $T=1024$, and reserve 10k examples as a proxy evaluation set $\mathcal{D}_{prox}$.
Let $\theta(r, \eta, \alpha)$ denote the trained parameters under a given configuration. 
We evaluate fitting via the Expected Negative Log-Likelihood (NLL):
\begin{equation}
\mathcal{L}(r, \eta, \alpha) = 
\mathbb{E}_{x \sim \mathcal{D}_{prox}}  \left[-\Sigma _{t=1}^{T} \log P(x_t \mid x_{<t}; \theta(r, \eta, \alpha)) \right].
\end{equation}

\textbf{Interplay between $\alpha$ and $\eta$.} Let $\eta^*(r, \alpha) = \arg\min_{\eta} \mathcal{L}(r, \eta, \alpha)$ denote the optimal learning rate and $\mathcal{L}^*(r, \alpha) = \mathcal{L}(r, \eta^*, \alpha)$ the corresponding minimum loss. As illustrated in Fig.~\ref{fig:llama3_tulu3}(a), we confirm the observation of \citet{schulman2025lora} that LoRA requires roughly $10\times$ larger learning rates than FFT at small $\alpha$ (\textit{e.g.}, $\alpha=16$). 
Nonetheless, our extended sweeps reveal a clear coupling: as $\alpha$ increases, $\eta^*$ consistently decreases, while the optimal loss $\mathcal{L}^*$ improves monotonically. 
This gain becomes more pronounced at higher ranks (gray lines in Fig.~\ref{fig:llama3_tulu3}(a)). \textit{Finding I: The need for large learning rates in LoRA is a symptom of inadequate $\alpha$-scaling, rather than an intrinsic property.}

\textbf{Superiority of $\alpha$-Scaling over $\eta$-Scaling.} To disentangle the roles of the learning rate and scaling factor, we trace two independent scaling paths from the same baseline:
\begin{equation}\label{eq:scaling_path}
\mathcal{L}_{\eta}(m;r,\eta_{\text{FFT}},\alpha_0) = \mathcal{L}(r, m\eta_{\text{FFT}}, \alpha_0), \quad
\mathcal{L}_{\alpha}(m;r,\eta_{\text{FFT}},\alpha_0) = \mathcal{L}(r, \eta_{\text{FFT}}, m\alpha_0),
\end{equation}
where $\eta_{\text{FFT}} = 2\times 10^{-5}$ is the optimal FFT learning rate, and $\alpha_0 = 16$.
As illustrated in Fig.~\ref{fig:llama3_tulu3}(b), although both trajectories exhibit U-shaped loss surfaces, they reveal a clear asymmetry. First, $\alpha$-scaling accommodates a substantially larger optimal multiplier, indicating a more stable optimization regime. Second, while $\eta$-scaling prematurely plateaus, $\alpha$-scaling consistently converges to deeper loss minima, corroborating the broad trends in Fig.~\ref{fig:llama3_tulu3}(a).
\textit{Finding II: $\alpha$ is not merely an alternative to $\eta$ for scaling step sizes; instead, it reshapes the optimization landscape to facilitate deeper fitting.}

\textbf{Scaling Law between $\alpha$ and $r$.}
We examine the relationship between the rank and the optimal scaling factor, defined as $\alpha^*(r) = \arg\min_{\alpha} \mathcal{L}(r, \eta_{\text{FFT}}, \alpha)$. 
 Fig.~\ref{fig:llama3_tulu3}(c) shows that $\alpha^*$ follows a sublinear trend:
\begin{equation}\label{eq:fitting_law}
\alpha^*(r) \approx C \sqrt{r}, \quad C \gg 1,
\end{equation}
with $C$ obtained and rounded via log-scale fitting\footnote{Estimated via linear regression in log space, \textit{i.e.,} $\log\alpha=0.5\log r + \log {C}$}. 
Across all four diverse optimization settings, the optimal $\alpha$ consistently lies in a high-magnitude regime ($\alpha \ge 256$) under typical FFT learning rates.
This indicates that conventional choices are not only misaligned in functional form but also severely under-scaled in magnitude.
 \textit{Finding III: Conventional scaling is mis-scaled, both in its linear dependence on $r$ and its insufficient magnitude, thereby limiting LoRA’s fitting effectiveness.}

\textbf{Relationship to Prior Works.}
Our findings unify and extend prior works on LoRA scaling. 
The original formulation of \citet{Hu2021LoRALA} adopts $\alpha=r$ as a simplifying default, while \citet{biderman2024lora} observes that increasing $\alpha$ improves performance, albeit only exploring up to $\alpha=2r$. 
Meanwhile, \citet{kalajdzievski2023rank} derives the mild scaling ($1/\sqrt{r}$) for stability, without identifying the appropriate magnitude. 
In addition, \citet{biderman2024lora,schulman2025lora} report that LoRA requires learning rates $10\times$ larger than FFT. Our results both confirm this observation and attribute it to insufficient scaling of $\alpha$. 
By determining the concrete scaling law, our work consolidates these perspectives into a unified principle, enabling superior optimization with standard learning rates and substantially reducing the need for hyperparameter search.

\section{A Theoretical Analysis from the Signal-Drift Perspective}
\label{sec:theory}

In this section, we formally characterize how LoRA reshapes optimization regime to ground our empirical observations. Our primary analytical approach is to introduce the Signal-Drift decomposition, which isolates the structural drift inherently induced by the adapter's bilinear parameterization.

\textbf{Notations.} Consider a pre-trained weight $W_0 \in \mathbb{R}^{d_{out} \times d_{in}}$ and trainable adapters ${\theta} = \{A, B\}$, where $B \in \mathbb{R}^{d_{out} \times r}$ and $A \in \mathbb{R}^{r \times d_{in}}$. The LoRA mapping is defined as $W(\bm{\theta}) = W_0 + \frac{\alpha}{r}BA$. Let $\bm{w}(\bm{\theta}) = \operatorname{vec}(W(\bm{\theta}))\in \mathbb{R}^{D}$ denote the vectorized weight, and $\bm{\theta} =[\operatorname{vec}(A); \operatorname{vec}(B)] \in \mathbb{R}^p$ the concatenated parameters, where $D = d_{out}d_{in}$ and $p = r(d_{in} + d_{out})$.
For the objective $\ell(\bm{w})$, we define the task gradient $\bm{g} = \nabla_{\bm{w}} \ell \in \mathbb{R}^D$ and the task Hessian $\mathbf{H}_\ell = \nabla^2_{\bm{w}} \ell \in \mathbb{R}^{D \times D}$, with $g_k$ and $w_k$ representing their $k$-th scalar entries. The mapping geometry is governed by the Jacobian $J(\bm{\theta}) = \nabla_{\bm{\theta}} \bm{w} \in \mathbb{R}^{D \times p}$ and the structural Hessian $\nabla^2_{\bm{\theta}} \bm{w} \in \mathbb{R}^{D \times p \times p}$, interpreted as a bilinear operator: for any $\bm{v} \in \mathbb{R}^p$, $\nabla^2_{\bm{\theta}} \bm{w}[\bm{v}, \bm{v}] \in \mathbb{R}^D$ has entries $\bm{v}^\top (\nabla^2_{\bm{\theta}} w_k)\bm{v}$.

\subsection{Decomposing the Optimization Dynamics}
\label{subsec:static_decomp}
By parameterizing the objective as $\ell(\bm{w}(\bm{\theta}))$, LoRA optimization inevitably inherits artifacts from its bilinear architecture. Specifically, the dynamics couple with the mapping's intrinsic structure $\nabla^2_{\bm{\theta}} \bm{w}$, prompting us to decouple the task-aligned projection signal from this task-agnostic structure {drift}.

\begin{figure}[t]
    \centering
    \includegraphics[width=1.0\linewidth]{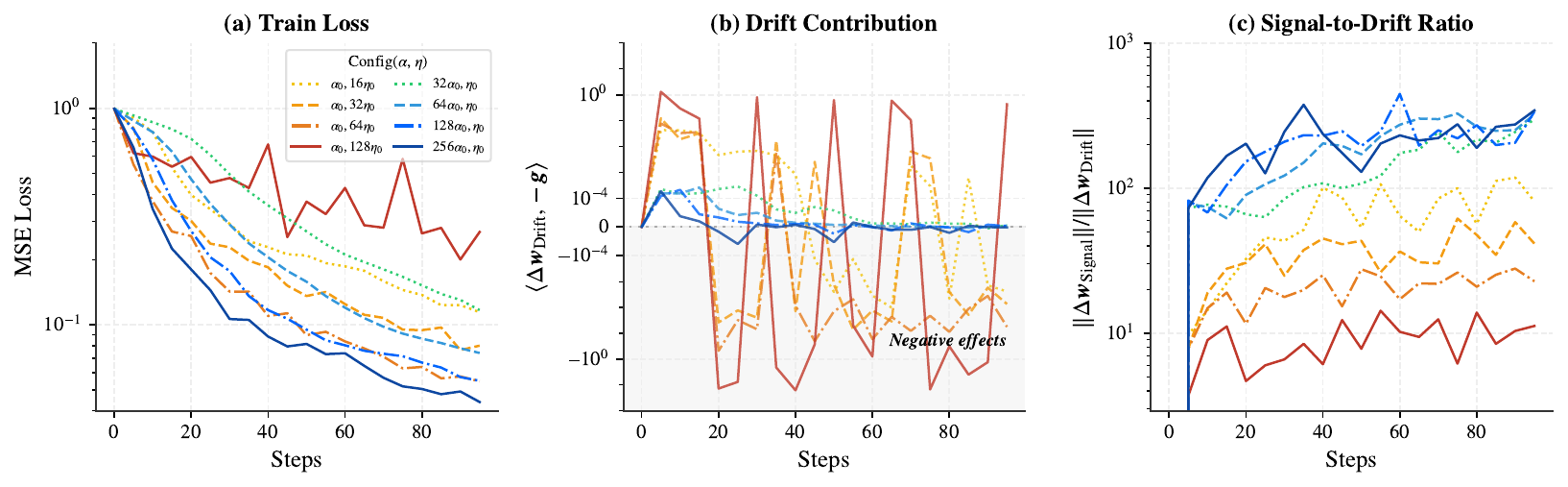}
\caption{
    The asymmetric optimization dynamics of $\alpha$-scaling versus $\eta$-scaling from a base configuration of rank $r=16$, $\alpha_0=1$, and $\eta_0=10^{-4}$. 
    We compare increasing $\eta$ (warm colors) versus increasing $\alpha$ (cool colors).
    (a) $\eta$-scaling leads to early saturation and instability, while $\alpha$-scaling enables smooth acceleration and deeper fitting.
    (b)   Increasing $\eta$ amplifies the pronounced stochasticity of structural drift, whereas $\alpha$-scaling does not further amplify these fluctuations, maintaining the smoother optimization profile seen in (a).
    (c) $\alpha$-scaling consistently maintains a higher Signal-to-Drift ratio, preserving update purity and accelerating convergence.  
}
  \label{fig:sdr_convergence}
\end{figure}

\begin{proposition}[Signal-Drift Decomposition]
\label{prop:sd_decomp}
Let $\Delta \bm{w}_{\text{LoRA}} = \bm{w}(\bm{\theta} + \Delta\bm{\theta}) - \bm{w}(\bm{\theta})$ denote the effective step in the weight space, and $\mathcal{H}_{\text{LoRA}} = \nabla^2_{\bm{\theta}} \ell$ denote the parameter-space Hessian. Under the LoRA parameterization, both admit an exact decomposition into a task-aligned signal and a structural drift:
\begin{align}
\Delta \bm{w}_{\text{LoRA}}
= \underbrace{J(\theta)\Delta \bm{\theta}}_{\Delta \bm{w}_{\text{Signal}}}
+ \underbrace{\tfrac{1}{2}\nabla^2_{\bm{\theta}} \bm{w}[\Delta \bm{\theta}, \Delta \bm{\theta}]}_{\Delta \bm{w}_{\text{Drift}}}, \quad
\mathcal{H}_{\text{LoRA}} 
= \underbrace{J(\theta)^\top \mathbf{H}_\ell J(\theta)}_{\mathcal{H}_{\text{Signal}}}
+ \underbrace{\Sigma_{k=1}^{D} g_k \nabla^2_{\bm{\theta}} w_k}_{\mathcal{H}_{\text{Drift}}}.
\end{align}
\end{proposition}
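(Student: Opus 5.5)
The plan is to exploit the fact that the LoRA map $\bm{\theta}\mapsto \bm{w}(\bm{\theta}) = \operatorname{vec}(W_0 + \frac{\alpha}{r}BA)$ is a polynomial of degree exactly two in $\bm{\theta}$. For such a map the second-order Taylor expansion is exact, so both identities follow from the chain rule with no remainder terms. The two halves are handled separately.

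\textbf{Weight-space step.} Write the perturbation as $\Delta\bm{\theta} = [\operatorname{vec}(\Delta A);\operatorname{vec}(\Delta B)]$ and expand the product directly:
\begin{equation*}
(B+\Delta B)(A+\Delta A) - BA = \Delta B\,A + B\,\Delta A + \Delta B\,\Delta A .
\end{equation*}
After multiplying by $\frac{\alpha}{r}$ and vectorizing, the first two terms are linear in $\Delta\bm{\theta}$. Using $\operatorname{vec}(XYZ) = (Z^\top\otimes X)\operatorname{vec}(Y)$, they equal $J(\bm{\theta})\Delta\bm{\theta}$ with
\begin{equation*}
J = \tfrac{\alpha}{r}\big[\, I_{d_{in}}\otimes B,\; A^\top\otimes I_{d_{out}} \,\big].
\end{equation*}
The last term is a quadratic form in $\Delta\bm{\theta}$. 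Since $w_k$ is quadratic, $\nabla^2_{\bm{\theta}} w_k$ is a constant matrix, and a quadratic form $q$ satisfies $q(\bm{v}) = \tfrac12 \bm{v}^\top (\nabla^2 q)\bm{v}$. Hence $\frac{\alpha}{r}\operatorname{vec}(\Delta B\,\Delta A) = \tfrac12\nabla^2_{\bm{\theta}}\bm{w}[\Delta\bm{\theta},\Delta\bm{\theta}]$. An equivalent route is Taylor's theorem applied to each $w_k$: all derivatives of order three and higher vanish, so the expansion stops exactly at second order.

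\textbf{Parameter-space Hessian.} Apply the chain rule twice to $\ell(\bm{w}(\bm{\theta}))$. The first derivative is $\nabla_{\bm{\theta}}\ell = J^\top \bm{g} = \sum_k g_k \nabla_{\bm{\theta}} w_k$. Differentiating again, the product rule gives two contributions:
\begin{itemize}
\item differentiating $g_k$ yields $\nabla_{\bm{\theta}} g_k = J^\top (\mathbf{H}_\ell)_{k,:}^\top$, which sums to $J^\top\mathbf{H}_\ell J$;
\item differentiating $\nabla_{\bm{\theta}} w_k$ yields $\sum_k g_k\nabla^2_{\bm{\theta}} w_k$.
\end{itemize}
This is the standard Gauss--Newton splitting and holds for any twice-differentiable $\ell$ and $\bm{w}$. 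Here it is also exact, and $\mathcal{H}_{\text{Drift}}$ depends on $\bm{\theta}$ only through $\bm{g}$, because each $\nabla^2_{\bm{\theta}} w_k$ is constant.

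\textbf{Main obstacle.} There is no real analytic difficulty. The step needing the most care is the bookkeeping that identifies the cross term $\frac{\alpha}{r}\Delta B\,\Delta A$ with $\tfrac12\nabla^2_{\bm{\theta}}\bm{w}[\Delta\bm{\theta},\Delta\bm{\theta}]$, including the factor $\tfrac12$. Each $\nabla^2_{\bm{\theta}} w_k$ has only off-diagonal $A$--$B$ blocks: entries $\frac{\alpha}{r}$ at positions pairing $B_{ij}$ with $A_{jl}$ for $k \leftrightarrow (i,l)$, and zero $AA$ and $BB$ blocks. The symmetric bilinear form therefore counts each pair twice, which cancels the $\tfrac12$. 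I would verify this entrywise: $(\Delta B\,\Delta A)_{il} = \sum_j \Delta B_{ij}\Delta A_{jl}$.
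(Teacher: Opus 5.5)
Your proposal is correct and takes essentially the same approach as the paper. Like the paper, you argue that the second-order expansion is exact because the LoRA map is bilinear, and you apply the chain and product rules twice to obtain $J^\top\mathbf{H}_\ell J + \sum_k g_k\nabla^2_{\bm{\theta}} w_k$; your entrywise check that $\tfrac{1}{2}\nabla^2_{\bm{\theta}}\bm{w}[\Delta\bm{\theta},\Delta\bm{\theta}] = \tfrac{\alpha}{r}\operatorname{vec}(\Delta B\,\Delta A)$ is a useful addition, since the paper asserts this identification without verifying the factor $\tfrac12$.
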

Algebraically, the components take the form $\Delta \bm{w}_{\text{Signal}} = \operatorname{{vec}}\big(\frac{\alpha}{r} (B \Delta A + \Delta B A)\big)$ and the bilinear drift $\Delta \bm{w}_{\text{Drift}} = \operatorname{{vec}}\big(\frac{\alpha}{r} \Delta B \Delta A\big)$, where $\Delta A$ and $\Delta B$ denote the  updates within $\Delta {\theta}$.

\begin{proposition}[Geometric Properties of Signal and Drift]
\label{prop:geometric_properties}
For any gradient-based update step\footnote{This conceptually extends to adaptive optimizers (\textit{e.g.}, Adam, Muon) operating via preconditioned gradients.} $\Delta \bm{\theta} = -\eta \nabla_{\bm{\theta}} \ell$, the decomposed components in Proposition~\ref{prop:sd_decomp} satisfy: 

\textbf{(i) Constructive Signal:} The signal component  aligns with the  descent direction, $\langle \Delta \bm{w}_{\text{Signal}}, -\bm{g} \rangle \ge 0$, and preserves the local convexity of the loss landscape, \textit{i.e.}, $\mathcal{H}_{\text{Signal}}  \succeq 0$ given $\mathbf{H}_\ell \succeq 0$.

\textbf{(ii) Adversarial Drift:} The structural Hessian $\mathcal{H}_{\text{Drift}}$ is strictly indefinite, and its induced update $\Delta \bm{w}_{\text{Drift}}$ exerts an uncontrolled force with no guaranteed alignment with $-\bm{g}$.
\end{proposition}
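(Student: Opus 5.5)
For part (i) I would work entirely through the chain rule behind Proposition~\ref{prop:sd_decomp}. The parameter gradient is $\nabla_{\bm{\theta}}\ell = J(\bm{\theta})^\top \bm{g}$. Substituting the gradient step $\Delta\bm{\theta} = -\eta J^\top \bm{g}$ gives $\Delta\bm{w}_{\text{Signal}} = -\eta J J^\top \bm{g}$, and hence
\[
\langle \Delta\bm{w}_{\text{Signal}}, -\bm{g}\rangle = \eta\, \bm{g}^\top J J^\top \bm{g} = \eta \|J^\top \bm{g}\|^2 \ge 0 .
\]
For a preconditioned step $\Delta\bm{\theta} = -\eta P J^\top\bm{g}$ with $P \succeq 0$, the same computation gives $\eta\, \bm{g}^\top J P J^\top \bm{g} \ge 0$, which covers the footnote. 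Positive semidefiniteness of $\mathcal{H}_{\text{Signal}}$ is a congruence argument: for any $\bm{v}\in\mathbb{R}^p$,
\[
\bm{v}^\top J^\top \mathbf{H}_\ell J \bm{v} = (J\bm{v})^\top \mathbf{H}_\ell (J\bm{v}) \ge 0
\]
whenever $\mathbf{H}_\ell \succeq 0$. Both facts hold for any smooth parameterization and use nothing specific to LoRA.

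For part (ii) I would use the explicit bilinear form. Since $W = W_0 + \frac{\alpha}{r}BA$ and $G = \operatorname{mat}(\bm{g})$, the drift Hessian is the Hessian in $(A,B)$ of the bilinear scalar $\frac{\alpha}{r}\langle G, BA\rangle = \frac{\alpha}{r}\operatorname{tr}(G^\top BA)$, which I will write as $\Phi(A,B)$. Its quadratic form along $\bm{v} = (\Delta A, \Delta B)$ is therefore
\[
\bm{v}^\top \mathcal{H}_{\text{Drift}} \bm{v} = \tfrac{2\alpha}{r}\operatorname{tr}(G^\top \Delta B\, \Delta A).
\]
The form has no $\Delta A\Delta A$ or $\Delta B\Delta B$ blocks, so $\mathcal{H}_{\text{Drift}}$ is the off-diagonal block matrix $\frac{\alpha}{r}\begin{pmatrix} 0 & M \\ M^\top & 0\end{pmatrix}$, where $M$ is the linear map $\Delta B \mapsto (G^\top \Delta B)$ suitably vectorized. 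A symmetric matrix of the form $\begin{pmatrix} 0 & M \\ M^\top & 0 \end{pmatrix}$ has eigenvalues $\pm\sigma_i(M)$ together with zeros. So as soon as $G \neq 0$, which makes $M \neq 0$, there is an eigenvalue $+\sigma_1$ and another $-\sigma_1$, and the matrix is strictly indefinite. Concretely, I would take singular vectors $\bm{u},\bm{x}$ of $G$ with $\bm{u}^\top G\bm{x} = \sigma_1(G) > 0$, set $\Delta B = \bm{u} e_1^\top$ and $\Delta A = \pm e_1 \bm{x}^\top$, and read off the two signs directly. This assumes $\bm{g} \neq 0$, i.e.\ away from stationary points, and I would state that caveat explicitly.

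For the claim that the drift has no guaranteed alignment, I would compute the sign of $\langle \Delta\bm{w}_{\text{Drift}}, -\bm{g}\rangle$ under the actual gradient step. With $\nabla_A \ell = \frac{\alpha}{r}B^\top G$ and $\nabla_B \ell = \frac{\alpha}{r}GA^\top$, the step gives
\[
\Delta B\,\Delta A = \eta^2\tfrac{\alpha^2}{r^2}\, G A^\top B^\top G ,
\]
so
\[
\langle \Delta\bm{w}_{\text{Drift}}, -\bm{g}\rangle = -\eta^2\tfrac{\alpha^3}{r^3}\operatorname{tr}(G^\top G A^\top B^\top G).
\]
This cubic form in $G$ has no fixed sign. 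The counterexample is to replace $(A,B)$ by $(-A,B)$: the trace flips sign while everything else stays admissible, and one can choose $A,B$ so that it is nonzero. I expect this step to be the main obstacle. The difficulty is phrasing ``uncontrolled'' rigorously: the drift is not always adversarial, only not guaranteed to help. The cleanest version is an existence statement: there are states $(A,B,G)$ where the drift opposes descent and others where it helps. I would prove exactly that.
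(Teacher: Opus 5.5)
Your proposal is correct and follows essentially the same route as the paper. Part (i) uses the chain rule plus a congruence argument. For indefiniteness, both use the block off-diagonal structure with $\pm$ eigenvalue pairing. The paper's scalar counterexample $w=ab$, $a=b=g=1$ is the $d_{in}=d_{out}=r=\alpha=1$ case of your trace formula for $\langle \Delta\bm{w}_{\text{Drift}}, -\bm{g}\rangle$, and your single (possibly gradient-dependent) PSD preconditioner $P$ covers the paper's separate sign-GD and Muon cases. Your caveat $\bm{g}\neq 0$ is a real improvement: the paper's claim of strict indefiniteness tacitly needs its aggregated block $\mathbf{C}$ to be nonzero, and that fails at stationary points.
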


The signal guarantees descent aligned with the task objective, whereas the drift introduces task-agnostic instability from the bilinear parameterization. Detailed proofs are deferred to Appendix~\ref{appendix:proof}.

\subsection{Theoretical Justifications for Empirical Findings}
\label{subsec:theory_results}
Based on the Signal-Drift decomposition, we theoretically justify the empirical phenomena observed in Section~\ref{sec:empirical_validation}. For intuition, we utilize a simple MLP for visualization, circumventing the intractable $\mathcal{O}(D^2)$ complexity of tracking $\mathbf{H}_\ell$ and $\nabla^2_{\bm{\theta}}\bm{w}$ in large models. See Appendix~\ref{appendix:toy_example_details} for details.

\textbf{Justification for Finding I: Spectral Suppression Enables Aggressive Hyperparameters.}
Empirically, LoRA tolerates much larger learning rates than FFT without diverging. This enhanced stability stems from the inherent spectral suppression of the LoRA optimization landscape.

\begin{proposition}[Spectral Suppression]
\label{prop:suppression}
Given the initial state of LoRA where $B=0$ and $A_{i,j} \sim \mathcal{N}(0, \sigma^2_{A})$, the expected signal curvature satisfies $\mathbb{E}[\operatorname{Tr}(\mathcal{H}_{\text{Signal}})] = \alpha^2\rho \operatorname{Tr}(\mathbf{H}_{\ell})$, where $\rho = \sigma^2_{A}/r$.
\end{proposition}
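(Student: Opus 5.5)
The plan is a direct computation of the Jacobian at the LoRA initialization, followed by a trace identity and a second-moment calculation over the Gaussian $A$. The key observation is that at $B=0$ we have $W(\bm{\theta}) = W_0$ whatever $A$ is. So the task gradient $\bm{g}$ and the task Hessian $\mathbf{H}_\ell$ are evaluated at $\bm{w}_0 = \operatorname{vec}(W_0)$ and are deterministic, independent of the random draw of $A$. All the randomness in $\mathcal{H}_{\text{Signal}} = J^\top \mathbf{H}_\ell J$ from Proposition~\ref{prop:sd_decomp} therefore sits in $J$.

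First, I will write out $J(\bm{\theta})$ explicitly. The first-order variation of the mapping is $\frac{\alpha}{r}(B\,\Delta A + \Delta B\, A)$, as already noted after Proposition~\ref{prop:sd_decomp}. At $B=0$ the $\Delta A$ block vanishes. Using the column-major identity $\operatorname{vec}(XYZ) = (Z^\top \otimes X)\operatorname{vec}(Y)$, we get $\operatorname{vec}(\Delta B\, A) = (A^\top \otimes I_{d_{out}})\operatorname{vec}(\Delta B)$. Hence
\begin{equation*}
J = \tfrac{\alpha}{r}\,\big[\,\mathbf{0}_{D\times r d_{in}},\; A^\top \otimes I_{d_{out}}\,\big].
\end{equation*}

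Second, by cyclicity of the trace, $\operatorname{Tr}(\mathcal{H}_{\text{Signal}}) = \operatorname{Tr}(\mathbf{H}_\ell\, J J^\top)$. The Kronecker mixed-product rule then gives
\begin{equation*}
J J^\top = \tfrac{\alpha^2}{r^2}\,(A^\top A)\otimes I_{d_{out}}.
\end{equation*}
Because $\mathbf{H}_\ell$ is deterministic, linearity of trace and expectation lets us pass the expectation inside, giving $\mathbb{E}[\operatorname{Tr}(\mathcal{H}_{\text{Signal}})] = \frac{\alpha^2}{r^2}\operatorname{Tr}\big(\mathbf{H}_\ell\,(\mathbb{E}[A^\top A]\otimes I_{d_{out}})\big)$.

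Third, I will compute $\mathbb{E}[A^\top A]$. For i.i.d. $A_{i,j}\sim\mathcal{N}(0,\sigma_A^2)$, the entry $(j,l)$ is $\sum_{i=1}^r \mathbb{E}[A_{ij}A_{il}] = r\sigma_A^2\,\delta_{jl}$. So $\mathbb{E}[A^\top A] = r\sigma_A^2 I_{d_{in}}$, and then $\mathbb{E}[A^\top A]\otimes I_{d_{out}} = r\sigma_A^2 I_D$. Substituting back yields
\begin{equation*}
\mathbb{E}[\operatorname{Tr}(\mathcal{H}_{\text{Signal}})] = \tfrac{\alpha^2}{r^2}\, r\sigma_A^2 \operatorname{Tr}(\mathbf{H}_\ell) = \alpha^2 \tfrac{\sigma_A^2}{r}\operatorname{Tr}(\mathbf{H}_\ell),
\end{equation*}
which is the claim with $\rho=\sigma_A^2/r$.

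No step is genuinely hard. The main thing to get right is the bookkeeping: the vectorization and Kronecker conventions must match the paper's definition of $\bm{w}=\operatorname{vec}(W)$ and the ordering $\bm{\theta}=[\operatorname{vec}(A);\operatorname{vec}(B)]$, so that the zero block and the factor $A^\top\otimes I$ come out correctly. I will also state explicitly that $\mathbf{H}_\ell$ does not depend on $A$ at initialization, since this is what justifies moving the expectation inside the trace.
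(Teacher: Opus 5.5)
Your proposal is correct and follows essentially the same route as the paper: you take the Kronecker form $J_B = \frac{\alpha}{r}(A^\top \otimes I_{d_{out}})$ with $J_A = 0$ at $B=0$, use trace cyclicity and the mixed-product rule, and then substitute $\mathbb{E}[A^\top A] = r\sigma_A^2 I_{d_{in}}$. You also state explicitly that $\mathbf{H}_\ell$ is evaluated at $W_0$ and is therefore independent of $A$; the paper relies on this only implicitly when it moves the expectation inside the trace.
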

Standard initialization like $\sigma_A^2 = 1/d_{in}$ yields $\rho \ll 1$, substantially compressing its maximum eigenvalue $\lambda_{\max}[\mathcal{H}_{\text{Signal}}]$ relative to $\mathbf{H}_{\ell}$. According to optimization theory~\cite{nesterov2013introductory}, this reduced maximum eigenvalue lowers the gradient Lipschitz constant, thereby expanding the stable learning rate bound that $\eta \le 2/\lambda_{\max}[\mathcal{H}_{\text{Signal}}]$. Consequently, spectral suppression smooths the landscape, rendering hyperparameters from FFT overly conservative and suboptimal for LoRA. This behavior is derived in Appendix Lemma~\ref{lemma:stability_bound} and further evidenced by spectral analysis in Fig.~\ref{fig:spectral_properties}(a-b).

\textbf{Justification for Finding II: The Asymmetric Roles of Scaling Factor and Learning Rate.}
To explain why $\alpha$-scaling achieves superior fitting capacity compared to $\eta$-scaling, our decomposition reveals a fundamental asymmetry between the two hyperparameters.

\begin{proposition}[Asymmetric Scaling]
\label{prop:asymmetric_scaling}
The components of the landscape and the weight update scale asymmetrically with respect to $\alpha$ and $\eta$. 
For the landscape: $\mathcal{H}_{\text{Signal}} = \Theta(\alpha^2)$ while $\mathcal{H}_{\text{Drift}} = \Theta(\alpha)$. 
For the update under adaptive optimizers like Adam: $\Delta \bm{w}_{\text{Signal}} = \Theta(\alpha\eta)$ while $\Delta \bm{w}_{\text{Drift}} = \Theta(\alpha\eta^2)$.
\end{proposition}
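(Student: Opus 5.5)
The plan is to make the dependence on $\alpha$ explicit in every object from Proposition~\ref{prop:sd_decomp}. Write $s=\alpha/r$, so $W(\bm\theta)=W_0+sBA$. The maps $A\mapsto \bm w$ and $B\mapsto \bm w$ are linear with coefficient $s$, so both blocks of the Jacobian carry exactly one factor of $s$:
\[
J(\bm\theta)\,[\operatorname{vec}(\Delta A);\operatorname{vec}(\Delta B)]=s\,\operatorname{vec}(B\Delta A+\Delta B A).
\]
Hence $J=\alpha\,\tilde J$, where $\tilde J$ depends only on $(A,B,r)$. Likewise the structural Hessian $\nabla^2_{\bm\theta}\bm w$ is the constant bilinear form $(\Delta\bm\theta,\Delta\bm\theta)\mapsto 2s\operatorname{vec}(\Delta B\Delta A)$, which gives $\nabla^2_{\bm\theta}\bm w=\alpha\,\tilde{\mathcal T}$ for an $\alpha$-independent tensor $\tilde{\mathcal T}$.

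\textbf{Landscape.} Substituting into the decomposition gives $\mathcal H_{\text{Signal}}=\alpha^2\,\tilde J^\top\mathbf H_\ell\tilde J$ and $\mathcal H_{\text{Drift}}=\alpha\sum_k g_k\tilde{\mathcal T}_k$. Here $\mathbf H_\ell$ and $\bm g$ are evaluated at $\bm w(\bm\theta)$. To read these as $\Theta(\alpha^2)$ and $\Theta(\alpha)$, I would work in the regime the paper already uses, namely near initialization or for small $\|sBA\|$. There $\bm w\approx \bm w_0$, so $\mathbf H_\ell$ and $\bm g$ are $O(1)$ in $\alpha$. At initialization ($B=0$), Proposition~\ref{prop:suppression} already gives the trace $\alpha^2\rho\operatorname{Tr}(\mathbf H_\ell)$, and this serves as a consistency check. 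The drift term is nonzero whenever $\bm g\neq 0$, so its order is exactly $\alpha$.

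\textbf{Updates.} I start from the algebraic forms stated after Proposition~\ref{prop:sd_decomp}: $\Delta\bm w_{\text{Signal}}=s\operatorname{vec}(B\Delta A+\Delta BA)$ and $\Delta\bm w_{\text{Drift}}=s\operatorname{vec}(\Delta B\Delta A)$. The key input is that Adam's update is approximately scale-invariant with respect to the gradient. The raw gradients $\nabla_A\ell=sB^\top G$ and $\nabla_B\ell=sGA^\top$ scale with $\alpha$, but the normalization $m/\sqrt{v}$ cancels this factor (up to $\epsilon$). Therefore $\Delta A,\Delta B=\Theta(\eta)$, with entries of order $\eta$ times a sign-like $O(1)$ quantity independent of $\alpha$. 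I treat $A,B$ as $O(1)$ along the trajectory, or at least $\alpha$-independent at a fixed point of comparison. It then follows that $\Delta\bm w_{\text{Signal}}=s\cdot\Theta(\eta)=\Theta(\alpha\eta)$ and $\Delta\bm w_{\text{Drift}}=s\cdot\Theta(\eta)\Theta(\eta)=\Theta(\alpha\eta^2)$. Under plain SGD the same bookkeeping would give $\Delta\bm\theta=\Theta(\alpha\eta)$, and hence signal $\Theta(\alpha^2\eta)$ and drift $\Theta(\alpha^3\eta^2)$. I would mention this only as a remark explaining why the adaptive-optimizer hypothesis is imposed.

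\textbf{Main obstacle.} The difficult step is justifying the $\Theta(\cdot)$ claims rather than mere $O(\cdot)$ claims. This requires that the $\alpha$-independent factors are bounded both above and below. Two points need care. First, one must fix the point of evaluation: either fix $(A,B)$ and compare hyperparameters at the same iterate, or restrict to early training where $\bm w\approx\bm w_0$. Second, one must treat Adam's $\epsilon$ as negligible relative to $\sqrt{v}$, so that the normalized step is genuinely $\alpha$-free. I would state both as explicit assumptions. I would also handle the degenerate case $B=0$ at initialization, where the $\Delta B A$ term alone carries the signal, so it does not vanish. With these assumptions the result reduces to the homogeneity bookkeeping above.
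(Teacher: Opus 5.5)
Your proposal follows the paper's proof essentially step for step. You factor $\alpha/r$ out of the Jacobian and the structural tensor, and treat $\bm g$ and $\mathbf H_\ell$ as $\alpha$-independent to get $\Theta(\alpha^2)$ versus $\Theta(\alpha)$ for the landscape. You then use Adam's normalization, $\|\Delta\bm\theta\|=\Theta(\eta)$, for the updates; your explicit assumptions (negligible $\epsilon$, a fixed evaluation point, $\bm w\approx\bm w_0$) just make precise what the paper leaves implicit.

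One correction to your degenerate-case remark: at $B=0$ the $A$-gradient $sB^\top G$ vanishes. Adam's first step therefore has $\Delta A=0$ (absent weight decay), so the drift $s\,\Delta B\,\Delta A$ is exactly zero there. The lower bound in $\Delta\bm w_{\text{Drift}}=\Theta(\alpha\eta^2)$ only holds once $B\neq 0$, a caveat the paper's argument shares.
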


This asymmetry dictates their distinct optimization roles. Specifically, increasing $\eta$ strictly degrades the Signal-to-Drift Ratio, defined as $\|\Delta \bm{w}_{\text{Signal}}\| / \|\Delta \bm{w}_{\text{Drift}}\| = \Theta(1/\eta)$. This  degradation manifests empirically as the drift-induced volatility and premature saturation observed in Fig.~\ref{fig:sdr_convergence}.  Conversely, $\alpha$-scaling enables spectral purification. As shown in Fig.~\ref{fig:spectral_properties}(c-d), increasing $\alpha$ reshapes the LoRA landscape $\mathcal{H}_{\text{LoRA}} $ to recover the spectral profile of signal term.

\textbf{Justification for Finding III: Conventional Scaling is Fundamentally Mis-scaled.}
Empirically, the optimal $\alpha$ is exceptionally large and scales sublinearly with rank $r$. This follows from Proposition~\ref{prop:suppression}: to counteract spectral suppression and restore expressiveness, the adapter's expected signal curvature must match the FFT curvature scale.

\begin{corollary}[Initial Curvature Alignment]
\label{cor:optimal_scaling}
To match the signal curvature to the full task curvature at initialization, \textit{i.e.}, $\mathbb{E} [\operatorname{Tr}(\mathcal{H}_{\text{Signal}})] = \Theta\big(\operatorname{Tr}(\mathbf{H}_{\ell})\big)$, the scaling factor follows  $\alpha = \Theta\big(\sqrt{{r}/{\sigma_A^2}}\big)$.
\end{corollary}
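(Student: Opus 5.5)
The corollary follows directly from Proposition~\ref{prop:suppression}, so the plan is to substitute the trace identity into the matching condition and solve for $\alpha$. At initialization ($B=0$, $A_{i,j}\sim\mathcal{N}(0,\sigma_A^2)$), Proposition~\ref{prop:suppression} gives
\begin{equation*}
\mathbb{E}[\operatorname{Tr}(\mathcal{H}_{\text{Signal}})] = \alpha^2 \frac{\sigma_A^2}{r}\operatorname{Tr}(\mathbf{H}_\ell).
\end{equation*}
Imposing $\mathbb{E}[\operatorname{Tr}(\mathcal{H}_{\text{Signal}})] = \Theta(\operatorname{Tr}(\mathbf{H}_\ell))$ and dividing by $\operatorname{Tr}(\mathbf{H}_\ell)$ yields $\alpha^2\sigma_A^2/r = \Theta(1)$. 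Since $\alpha>0$, this gives $\alpha = \Theta(\sqrt{r/\sigma_A^2})$. The converse direction is also immediate: any $\alpha$ in this range makes the ratio $\Theta(1)$, so the condition is both necessary and sufficient.

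The only step that needs care is the division. The statement is nonvacuous only when $\operatorname{Tr}(\mathbf{H}_\ell)\neq 0$. I would assume $\mathbf{H}_\ell\succeq 0$, as in Proposition~\ref{prop:geometric_properties}(i), together with $\mathbf{H}_\ell\neq 0$; then $\operatorname{Tr}(\mathbf{H}_\ell)>0$. This positivity also keeps the $\Theta$ relation meaningful as a two-sided bound between positive quantities. The hidden constants in the $\Theta$ should be read as uniform in $r$, $\sigma_A$ and $\alpha$: the factor $\operatorname{Tr}(\mathbf{H}_\ell)$ cancels exactly, so no dependence on the task enters the ratio.

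As a sanity check linking the result to Finding III, I would specialize to the standard choice $\sigma_A^2 = 1/d_{in}$. This gives $\alpha = \Theta(\sqrt{r\,d_{in}})$, which is the square-root law in $r$ with a large coefficient $C\sim\sqrt{d_{in}}\gg 1$. I expect no real obstacle, since everything is inherited from Proposition~\ref{prop:suppression}. The one interpretive subtlety is that the result concerns an expectation at initialization only, so I would state explicitly that it says nothing about curvature alignment later in training.
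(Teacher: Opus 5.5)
Your proposal is correct and matches the paper's argument: the paper gives no separate proof and reads the corollary off Proposition~\ref{prop:suppression} by requiring $\alpha^2\sigma_A^2/r = \Theta(1)$ and solving for $\alpha$. Your extra nondegeneracy assumption $\operatorname{Tr}(\mathbf{H}_\ell)\neq 0$ (for instance via $\mathbf{H}_\ell\succeq 0$, $\mathbf{H}_\ell\neq 0$) is left implicit in the paper, but it is the right condition for dividing by $\operatorname{Tr}(\mathbf{H}_\ell)$.
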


This derivation proves that conventional linear scaling is mis-specified. Instead, $\alpha$ should scale sublinearly with rank and grow substantially with model width under standard initialization.

\section{The LoRA-$\alpha$ Protocol}
\label{sec:methodology}
Our empirical and theoretical analyses converge on a central principle: 
\textit{effective LoRA tuning requires a properly calibrated scaling factor}. 
By aligning the signal curvature of LoRA with that of FFT, such calibration provides two key benefits: 
(i) it enables the direct use of the standard FFT learning rate ($\eta_{\text{FFT}}$) without costly hyperparameter sweeps, and 
(ii) it unlocks stronger fitting capacity than learning-rate scaling alone. Based on this insight, we propose {LoRA-$\alpha$} to redefine LoRA as:
\begin{equation}
    \Delta W = \frac{\alpha \cdot \alpha_{\text{base}}}{r} BA, 
    \quad \text{with} \ \eta = \Theta(\eta_{\text{FFT}}).
\end{equation}

This decouples the roles of scaling and optimization: 
$\alpha_{\text{base}}$ restores the suppressed curvature induced by LoRA, 
while $\alpha$ serves as a lightweight tuning knob that can be safely fixed to $1$ by default. 
If further adjustment is needed, $\alpha$ can be searched within a narrow and intuitive range 
(\textit{e.g.}, $\alpha \in [0.1, 10]$), eliminating the need for expensive joint tuning over scaling factors and learning rates. We provide two complementary strategies for determining $\alpha_{\text{base}}$:

\textbf{Variant I: Empirical Scaling for LLMs.} Modern LLMs exhibit strong architectural homogeneity, with hidden dimensions typically on the order of thousands (\textit{e.g.}, $d_{in} = 4096$), allowing for a universal hyperparameter regime. Leveraging the scaling law from Section~\ref{sec:empirical_validation}, we adopt $C=256$ as a robust default, yielding: $\alpha_{\text{base}} = 256\sqrt{r}$.

\textbf{Variant II: Analytic Scaling.}
For general architectures, we derive a principled scaling from Corollary~\ref{cor:optimal_scaling} by matching the expected task curvature at initialization:
$\alpha_{\text{base}} = \frac{1}{\sigma_A}\sqrt{r}$.
This closed-form solution provides a theoretical baseline that aligns LoRA with the hyperparameter regime of FFT.

Importantly, Variant I serves as a \textit{global empirical heuristic}, while Variant II provides a \textit{layer-wise theoretical initialization}. 
In practice, the empirical scaling in Variant I  yields marginally larger magnitudes than the analytic prediction of Variant II for modern LLMs.  A detailed magnitude comparison is provided in Appendix~\ref{appendix:Comparison_Scaling_Factor}. We identify this gap as an important empirical finding. 
A plausible explanation is that the analytic scaling matches the {average} curvature via the trace of the Hessian, whereas optimization stability is governed by the {largest} eigenvalue. As shown in Fig.~\ref{fig:spectral_properties}(a-b), these quantities can differ within a moderate range, making such discrepancies reasonable.
Ultimately, this principled choice resolves the $16\times$--$256\times$ under-scaling inherent in prevailing heuristics.

\section{Empirical Validation of LoRA-$\alpha$}
\label{sec:experiments}

To rigorously evaluate the effectiveness  of LoRA-$\alpha$, we conduct extensive experiments along three dimensions: (i) Model diversity, covering encoder, decoder, diffusion, and vision-language architectures; (ii) Task heterogeneity, including natural language and multimodal tasks, encompassing cross entropy, reward-based, flow matching and contrastive learning objectives; (iii)  Training horizon, ranging from few-shot adaptation to long-horizon optimization. This comprehensive design validates LoRA-$\alpha$ across both adaptation benchmarks and post-training scenarios. Specifically, we verify the effectiveness of \textit{Analytic Scaling} on diverse architectures in Section~\ref{section:various_tasks}, while adopting \textit{Empirical Scaling} as the default configuration for the remaining LLM-centric experiments. All experiments are implemented using the PEFT library~\cite{peft} and executed on eight NVIDIA H800 GPUs. Due to space limitations, we briefly present the core settings and main results in the following subsections. Comprehensive implementation details are deferred to Appendix~\ref{appendix:Implementation}.

\begin{table*}[t]
\centering
\caption{Comparison on the GLUE benchmark. 
Experiments are conducted using the DeBERTa-v3-base-184M model with a base learning rate of $1\times 10^{-4}$ and a rank of $8$.
Asterisk ($*$) denotes optimal results from a $0.1\times$--$10\times$ sweep over $\eta$ or $\alpha$. Best and second-best values are bold and underlined.}
\label{tab:NLU_comparison_final}
\small
\renewcommand{\arraystretch}{0.95}
\setlength{\tabcolsep}{4.5pt}
\begin{tabular*}{\textwidth}{@{\extracolsep{\fill}}lccccccccc}
\toprule
\textbf{Method} & \textbf{MNLI} & \textbf{SST-2} & \textbf{MRPC} & \textbf{CoLA} & \textbf{QNLI} & \textbf{QQP} & \textbf{RTE} & \textbf{STS-B} & \textbf{Avg.} \\ \midrule
Full Fine-Tuning & 88.31 & 93.57 & 89.46 & 67.26 & 92.80 & 91.52 & 83.75 & 86.87 & 86.69 \\ \midrule
LoRA    & 90.23 & \textbf{95.87} & 84.06 & 63.56 & 93.88 & 90.55 & 50.18 & 87.20 & 81.94 \\
RsLoRA  & 90.33 & 95.64 & 86.38 & 64.85 & 93.97 & 90.26 & 60.31 & 88.37 & 83.76 \\
LoRA+   & 90.37 & 95.32 & 87.54 & 64.79 & \textbf{94.32} & 90.93 & 65.37 & 89.20 & 84.73 \\
PiSSA   & \underline{90.38} & 95.64 & \underline{89.21} & 65.06 & 93.84 & 91.35 & 74.36 & 88.90 & 86.09 \\
LoRAM   & 90.34 & 95.29 & \textbf{89.95} & \underline{65.53} & \underline{94.08} & \underline{91.70} & \underline{74.72} & \underline{89.93} & \underline{86.44} \\
{LoRA-$\alpha$} & \textbf{90.43} & \underline{95.75} & \textbf{89.95} & \textbf{70.89} & 94.03 & \textbf{91.74} & \textbf{80.86} & \textbf{90.12} & \textbf{87.97} \\ \midrule
LoRA-$\eta^{*}$ & 90.69 & 96.21 & 91.05 & 69.27 & \textbf{94.05} & 91.76 & 84.47 & 90.72 & 88.53 \\
{LoRA-$\alpha^{*}$} & \textbf{90.75} & \textbf{96.44} & \textbf{90.44} & \textbf{70.89} & \textbf{94.05} & \textbf{92.06} & \textbf{87.00} & \textbf{91.11} & \textbf{89.09} \\
\bottomrule
\end{tabular*}
\end{table*}

\begin{table*}[t]
\centering
\caption{Comparison on NLG tasks using Llama 2-7B tuned with a base learning rate of $2\times10^{-5}$. }
\label{tab:nlg_performance_final}
\small
\renewcommand{\arraystretch}{0.7}
\setlength{\tabcolsep}{6pt}
\begin{tabular*}{\textwidth}{@{\extracolsep{\fill}}clcccccc}
\toprule
\textbf{Rank ($r$)} & \textbf{Method} & \textbf{GSM8K} & \textbf{MATH} & \textbf{HumanEval} & \textbf{MBPP} & \textbf{Commonsense} & \textbf{Avg.} \\
\midrule
N/A & Full Fine-Tuning & 60.34 & 11.74 & 32.30 & 39.27 & 79.20 & 44.57 \\
\midrule
\multirow{8}{*}{16} 
& LoRA & 31.51 & 4.16 & 15.98 & \underline{28.65} & 66.56 & 29.37 \\
& RsLoRA & {39.04} & 4.94 & {18.85} & 28.10 & 73.24 & 32.83 \\
& LoRA+ & 31.69 & 3.98 & 18.54 & 28.00 & 72.19 & 30.88 \\
& PiSSA & 37.68 & {5.16} & 18.37 & 28.62 & {73.72} & 32.71 \\
& LoRAM & \underline{40.32} & \underline{5.30} & \textbf{18.92} & \textbf{28.83} & \underline{75.19} & \underline{33.71} \\
& LoRA-$\alpha$ & \textbf{47.61} & \textbf{6.84} & \underline{18.90} & 28.30 & \textbf{77.42} & \textbf{35.81} \\
\cmidrule{2-8}
& LoRA-$\eta^{*}$ & {54.28} & 9.50 & 24.40 & 34.70 & \textbf{78.19} & 40.21 \\
& {LoRA-$\alpha^{*}$} & \textbf{55.34} & \textbf{9.84} & \textbf{28.70} & \textbf{35.40} & 78.16 & \textbf{41.49} \\
\midrule
\multirow{8}{*}{128}
& LoRA & 40.27 & 4.72 & 20.11 & 28.84 & 73.64 & 33.52 \\
& RsLoRA & 50.38 & \underline{7.32} & 21.32 & 30.73 & 77.01 & 37.35 \\
& LoRA+ & 40.41 & 5.28 & 20.71 & 29.13 & \underline{78.19} & 34.74 \\
& PiSSA & \underline{51.48} & 7.04 & {21.62} & {31.07} & 77.28 & 37.70 \\
& LoRAM & 51.12 & 7.25 & \underline{22.03} & \underline{31.53} & 77.81 & \underline{37.95} \\
& LoRA-$\alpha$  & \textbf{53.15} & \textbf{8.30} & \textbf{23.20} & \textbf{33.90} & \textbf{78.38} & \textbf{39.39} \\
\cmidrule{2-8}
& LoRA-$\eta^{*}$ & 58.83 & 10.76 & 28.70 & 36.50 & 78.67 & 42.69 \\
& LoRA-$\alpha^{*}$ & \textbf{59.97} & \textbf{11.74} & \textbf{33.50} & \textbf{39.90} & \textbf{79.26} & \textbf{44.87} \\
\bottomrule
\end{tabular*}
\end{table*}

\subsection{Performance on Fundamental Adaptation Tasks}
\label{section:various_tasks}

We evaluate short-horizon adaptation within a few hours across both small- and large-scale models. We benchmark our approach against vanilla LoRA~\cite{Hu2021LoRALA} ($\alpha = r$) and representative methods, including scaling-based RsLoRA~\cite{kalajdzievski2023rank}, learning-rate-modified LoRA+~\cite{lora+}, spectral-initialized PiSSA~\cite{PiSSA}, and magnitude-initialized LoRAM~\cite{zhang2025primacy}. To ensure a fair comparison, we adopt the baseline results from~\citet{zhang2025primacy}, independently training our method under identical settings.

\textbf{Natural Language Understanding (NLU).} 
At first, we fine-tune and evaluate {DeBERTa-v3-base-184M}~\cite{he2021debertav3} on eight datasets from the GLUE benchmark~\cite{Wang2018GLUEAM}. 
All methods are trained with a {batch size of 32}, {learning rate of $1\times10^{-4}$}, and {LoRA rank $r=8$} for 3--5 epochs. 
As shown in Table~\ref{tab:NLU_comparison_final}, LoRA-$\alpha$ achieves the strongest or near-strongest performance across tasks, outperforming both vanilla LoRA and alternative scaling strategies in average performance.

\textbf{Natural Language Generation (NLG).}
We then fine-tune {Llama 2-7B}~\cite{touvron2023llama} on three datasets covering mathematical reasoning (MetaMathQA), code generation (CodeFeedback), and commonsense reasoning (Commonsense170K), respectively. 
Training is conducted with a {batch size of 128}, {learning rate of $2\times10^{-5}$}, and {LoRA ranks $r \in \{16,128\}$} for {one epoch over 100k samples}. 
As shown in Table~\ref{tab:nlg_performance_final}, LoRA-$\alpha$ matches or surpasses existing LoRA variants across all tasks and rank settings. The advantage becomes more pronounced at higher rank of 128, indicating that proper $\alpha$ scaling is increasingly critical as model capacity grows.

\textbf{Text-to-Image Synthesis.}
To validate scalability to multimodal generative settings, we fine-tune Flux.1-12B~\cite{flux2024} using the DreamBooth protocol~\cite{ruiz2023dreambooth}. 
All methods are trained with a {batch size of 1}, {learning rate of $1\times10^{-4}$}, and {LoRA rank $r=8$} for {1,000 training steps}, supervised by flow matching loss. 
As illustrated in Fig.~\ref{fig:flux}, LoRA-$\alpha$ produces images with higher fidelity to reference concepts compared to PiSSA and LoRAM, demonstrating that improved scaling generalizes beyond language tasks to complex generative domains.

\textbf{Comparison of $\alpha$-scaling and $\eta$-scaling}.
We perform hyperparameter sweeps over both $\alpha$ and $\eta$ on NLU and NLG tasks, with full results provided  in Appendix \ref{appendix:Implementation}. As reported in Tables~\ref{tab:NLU_comparison_final} and~\ref{tab:nlg_performance_final}, $\alpha$-scaling consistently yields superior performance compared to $\eta$-scaling. Moreover, the performance gap widens significantly at larger ranks, corroborating the initial scaling findings from Fig.~\ref{fig:llama3_tulu3}.

\begin{figure}[t]
    \centering
\includegraphics[width=0.98\linewidth]{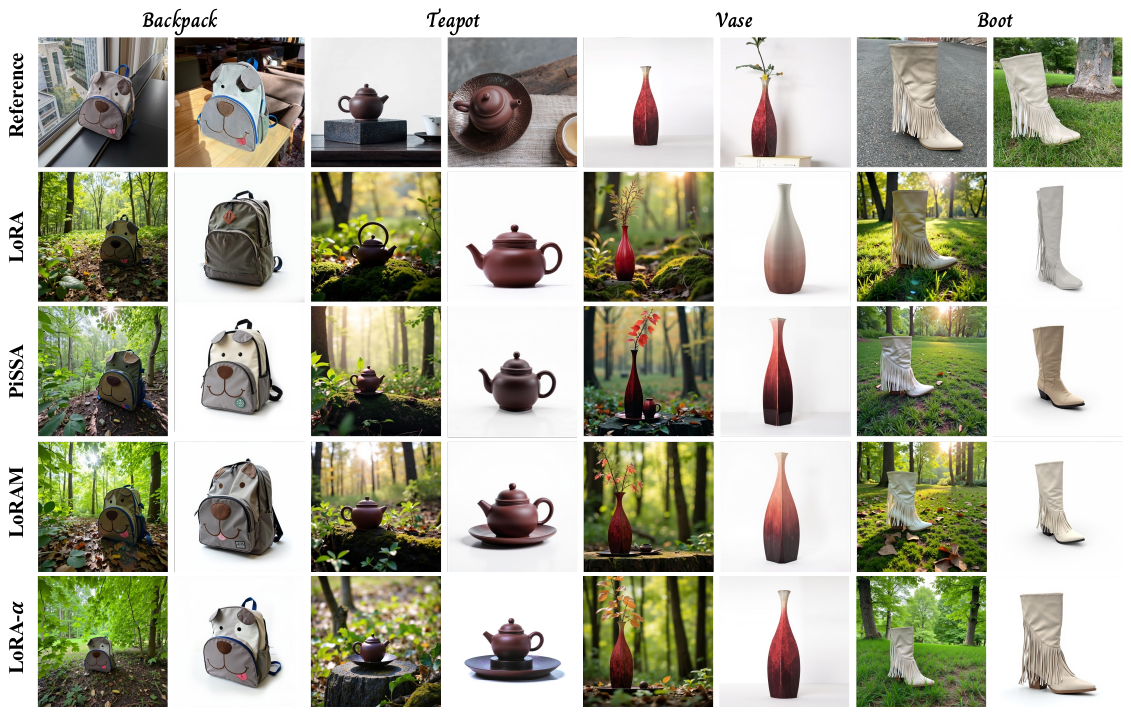}
    \caption{Qualitative comparison of image customization on Flux.1-12B with a base learning rate of $1\times10^{-4}$ and a rank of 8. LoRA-$\alpha$ consistently achieves higher fidelity to reference objects.}
    \label{fig:flux}
\end{figure}

\begin{table*}[t]
\centering
\caption{Comparison on the MMEB benchmark via post-tuning Qwen 2-VL ($r=16$, base $\eta=2\times10^{-5}$). ``VLM2VEC'' denotes officially released results. IND and OOD indicate in- and out-of-distribution splits, respectively, with scores reported to one decimal place per the VLM2Vec protocol. \textbf{Bold} and \underline{underlined} values denote the best and second-best results within each model group.}
\label{tab:embedding_performance_compact}
\small
\renewcommand{\arraystretch}{0.9} 
\setlength{\tabcolsep}{0pt}
\begin{tabular*}{\textwidth}{@{\extracolsep{\fill}}ll cccc c cc c}
\toprule
\multirow{2}{*}{\textbf{Model}} & \multirow{2}{*}{\textbf{Method}} & \multicolumn{4}{c}{\textbf{Meta-Task Average Score}} && \multicolumn{3}{c}{\textbf{Average Score}} \\
\cmidrule(lr){3-6} \cmidrule(lr){8-10}
& & Classification & VQA & Retrieval & Grounding && IND & OOD & Overall \\
\midrule
& \# of datasets $\rightarrow$ & 10 & 10 & 12 & 4 && 20 & 16 & 36 \\
\midrule
\multirow{7}{*}{2B} & VLM2Vec & \underline{59.0} & 49.4 & 65.4 & 73.4 && 66.0 & \underline{52.6} & 60.1 \\
& Full Fine-Tuning & 58.3 & 48.9 & 64.8 & \underline{74.5} && \textbf{67.2} & 50.3 & 59.6 \\
& LoRA & 58.7 & 47.2 & 64.5 & 70.5 && 64.0 & 52.1 & 58.7 \\
& DoRA & 57.3 & 43.2 & 60.8 & 67.0 && 61.3 & 48.6 & 55.6 \\
& PiSSA & 52.5 & 44.2 & 61.4 & 66.1 && 59.3 & 48.9 & 54.7 \\
& LoRA ($10\times$ LR) & \underline{59.0} & \underline{50.9} & \underline{65.8} & 73.0 && \underline{66.9} & 52.5 & \underline{60.5} \\
& {LoRA-$\alpha$} & \textbf{60.2} & \textbf{52.3} & \textbf{66.0} & \textbf{75.2} && \textbf{67.2} & \textbf{54.7} & \textbf{61.6} \\
\midrule
\multirow{6}{*}{7B} & VLM2Vec & \underline{62.6} & 57.8 & \underline{69.9} & \underline{81.7} && \underline{72.7} & \underline{57.8} & \underline{65.8} \\
& LoRA & 60.7 & 57.3 & 68.1 & 81.2 && 70.7 & 56.8 & 64.5 \\
& DoRA & 61.6 & 54.2 & 66.5 & 78.2 && 69.0 & 55.6 & 63.0 \\
& PiSSA & 58.2 & 55.0 & 65.4 & 77.9 && 67.9 & 54.4 & 61.9 \\
& LoRA ($10\times$ LR) & 62.0 & \underline{58.9} & 69.3 & 81.3 && 72.6 & 55.4 & 65.3 \\
& {LoRA-$\alpha$} & \textbf{63.0} & \textbf{59.4} & \textbf{70.6} & \textbf{84.2} && \textbf{73.3} & \textbf{58.8} & \textbf{66.9} \\
\bottomrule
\end{tabular*}
\end{table*}

\subsection{Performance on Multimodal Representation Learning}

Moving beyond short-horizon tasks, we evaluate LoRA-$\alpha$ in a long-horizon representation learning setting by transforming a multimodal LLM into a dense retriever on the MMEB benchmark~\cite{jiang2025vlmvec}. Following VLM2Vec~\cite{jiang2025vlmvec}, we train Qwen 2-VL (2B and 7B) using an InfoNCE objective, with a batch size of 1024,  learning rate of $2\times10^{-5}$, maximum sequence length of 4096 tokens, and LoRA rank of $r=16$, taking $\sim$3 days per run. We compare LoRA-$\alpha$ against {standard LoRA}, LoRA with $10\times$ learning rate~\cite{schulman2025lora}, {DoRA}~\cite{liu2024dora}, {PiSSA}, and  FFT, where FFT is only feasible for the 2B model due to memory constraints.

\textbf{Results.} 
As shown in Table~\ref{tab:embedding_performance_compact}, LoRA-$\alpha$ consistently outperforms all PEFT baselines and FFT across both model scales. Notably, it surpasses the best checkpoint from official VLM2Vec, indicating that improved scaling alone can unlock additional performance without architectural modifications.
The largest gains are observed on OOD evaluation, where LoRA-$\alpha$ significantly improves over FFT and VLM2Vec, demonstrating stronger robustness under distribution shifts.

\begin{table*}[t]
\centering
\caption{Comparison of Pass@1 accuracy across reasoning benchmarks. We train Qwen 2.5-Math-7B with a base learning rate of $4\times 10^{-5}$. $n$ denotes the number of samples used for Pass@1 estimation.}
\label{tab:reasoning_performance_final}
\small
\renewcommand{\arraystretch}{0.7}
\setlength{\tabcolsep}{4pt}
\begin{tabular*}{\textwidth}{@{\extracolsep{\fill}}llcccccc}
\toprule
\textbf{Rank ($r$)} & \textbf{Method} & \textbf{AIME 24} & \textbf{AIME 25} & \textbf{MATH-500} & \textbf{GPQA Dia.} & \textbf{LCB v4} & \textbf{Avg.} \\
&  & ($n=64$) & ($n=64$) & ($n=4$) & ($n=8$) & ($n=16$) & \\
\midrule
N/A & Full Fine-Tuning & 37.03 & 28.69 & 86.50 & 39.35 & 28.71 & 44.06 \\
\midrule
\multirow{3}{*}{64} 
& LoRA           & 23.07 & 19.73 & 83.00 & 34.63 & 14.85 & 35.06 \\
& LoRA ($10\times$ LR)   & \underline{28.29} & \textbf{25.83} & \textbf{86.00} & \textbf{37.82} & \textbf{19.80} & \textbf{39.55} \\
& {LoRA-$\alpha$} & \textbf{28.69} & \textbf{25.83} & \underline{85.80} & \underline{36.95} & \textbf{19.80} & \underline{39.41} \\
\midrule
\multirow{3}{*}{256}
& LoRA           & 26.51 & 23.69 & \underline{84.60} & 35.54 & 20.79 & 38.23 \\
& LoRA ($10\times$ LR)   & \underline{33.69} & \underline{26.71} & 83.00 & \underline{38.30} & \underline{22.77} & \underline{40.89} \\
& {LoRA-$\alpha$} & \textbf{34.21} & \textbf{28.95} & \textbf{85.80} & \textbf{38.50} & \textbf{26.73} & \textbf{42.84} \\
\bottomrule
\end{tabular*}
\end{table*}

\begin{table*}[t]
\renewcommand{\arraystretch}{0.7}
\setlength{\tabcolsep}{4pt}
\centering
\caption{Comparison of Pass@1 accuracy on reasoning benchmarks after GRPO learning. We train DeepSeek-R1-Distill-Qwen with a base learning rate of $1 \times 10^{-6}$ and a fixed LoRA rank of $64$.}
\label{tab:rl_performance}
\small
\begin{tabular}{llccccccc}
\toprule
\textbf{Model} & \textbf{Method} & \textbf{AIME 24} & \textbf{AIME 25} & \textbf{MATH-500} & \textbf{AMC 23} & \textbf{MINERVA} & \textbf{HMMT} & \textbf{Avg.} \\
& & ($n=64$) & ($n=64$) & ($n=4$) & ($n=32$) & ($n=8$) & ($n=32$) & \\
\midrule 
\multirow{5}{*}{1.5B} & Base model & 21.45 & 13.33 & 31.25 & 67.50 & 13.33 & \textbf{32.35} & 29.87 \\
& Full Fine-Tuning & \underline{29.79} & \underline{22.50} & \textbf{82.00} & 62.50 & \textbf{31.25} & 6.67 & \underline{39.12} \\
& LoRA & 22.29 & 6.67 & 32.35 & \textbf{72.50} & 27.57 & \underline{13.33} & 29.12 \\
& LoRA ($10\times$ LR) & 22.29 & 10.00 & 0.00 & 65.00 & 29.04 & \underline{13.33} & 23.27 \\
& {LoRA-$\alpha$} & \textbf{32.34} & \textbf{22.81} & \underline{79.80} & \underline{70.00} & \underline{29.41} & \underline{13.33} & \textbf{41.28} \\
\midrule
\multirow{5}{*}{7B} & Base model & 51.82 & \textbf{37.50} & 86.60 & \underline{90.00} & \underline{40.80} & \textbf{16.67} & \underline{53.90} \\
& Full Fine-Tuning & 38.95 & 16.67 & 42.65 & \textbf{92.50} & \underline{40.80} & \textbf{16.67} & 41.37 \\
& LoRA  & 51.82 & \underline{37.34} & 87.20 & \underline{90.00} & \textbf{41.54} & \underline{13.33} & 53.54 \\
& LoRA ($10\times$ LR) & \underline{53.33} & 36.51 & \underline{87.60} & 87.50 & 40.07 & \underline{13.33} & 53.06 \\
& {LoRA-$\alpha$} & \textbf{55.52} & 35.62 & \textbf{88.60} & \underline{90.00} & \textbf{41.54} & \textbf{16.67} & \textbf{54.66} \\
\bottomrule
\end{tabular}
\end{table*}

\subsection{Performance on Reasoning-based Supervised Fine-Tuning}

We evaluate LoRA-$\alpha$ in a long-context reasoning setting. Following~\cite{openr1}, we post-tune Qwen 2.5-Math-7B~\cite{qwenmath} on the Mixture-of-Thoughts dataset including 350k reasoning traces across math, programming, and science with a 32k token context. 
Training uses a batch size of 128, a learning rate of $4\times10^{-5}$, and LoRA ranks $r \in \{64, 256\}$ for one epoch, taking $\sim$2 days per run. 
We compare FFT, standard LoRA, LoRA with $10\times$ learning rate, and LoRA-$\alpha$ on AIME 2024/2025, MATH-500, GPQA Diamond, and LiveCodeBench v4 using Pass@1.

\textbf{Results.} As shown in Table~\ref{tab:reasoning_performance_final}, LoRA-$\alpha$ performs comparably to the $10\times$ learning rate variant at smaller ranks, but achieves stronger gains as the rank increases. In contrast, learning-rate scaling becomes unstable at higher ranks. Notably, due to the intrinsic difficulty of this task, LoRA-based methods do not surpass FFT, reflecting the substantial learning capacity required. Nevertheless, LoRA-$\alpha$ consistently emerges as the closest approximation to FFT.

\subsection{Performance on Reinforcement Learning Paradigm}

We evaluate LoRA-$\alpha$ under the reinforcement learning using {Group Relative Policy Optimization (GRPO)}~\cite{guo2025deepseek}. Following~\cite{yin2025evaluating}, 
experiments are conducted on {DeepSeek-R1-Distill-Qwen (1.5B and 7B)} models, trained on the {DAPO-Math-17k} dataset~\cite{yu2025dapo} with a {LoRA rank of $64$}, {learning rate of $1\times10^{-6}$} and {batch size of 128}. We use a {maximum completion length of 16k tokens} with {$G=8$ rollouts per prompt}, taking $\sim$2 days per run. 
Evaluation is conducted on a suite of challenging mathematical benchmarks, using Pass@1 as the primary metric.

\textbf{Results.}
As shown in Table~\ref{tab:rl_performance}, LoRA-$\alpha$ achieves the best overall performance across both model scales. In contrast, increasing the learning rate fails to yield reliable gains and often degrades performance, highlighting the instability induced by high-variance policy gradients. 
FFT also exhibits inconsistent behavior, with noticeable degradation on several benchmarks, suggesting that updating all parameters can disrupt pretrained reasoning structures. 
Combined with the results in Table~\ref{tab:reasoning_performance_final}, these findings echo the observations of~\cite{schulman2025lora} and suggest a broader implication: compared to SFT, reinforcement learning typically requires lower  learning capacity while being more susceptible to optimization instability. 
In this regime, LoRA is better suited than FFT, and properly scaling $\alpha$ preserves training stability and improves learning effectiveness.

\section{Conclusion}
This paper investigates the scaling mechanisms of LoRA, combining extensive empirical evidence with a novel Signal-Drift framework. We show that LoRA's stability stems from {spectral suppression} of the task Hessian, which creates a significant optimization gap. 
Crucially, we demonstrate that scaling $\alpha$ is more effective than increasing the learning rate, as it amplifies task-aligned signal without introducing additional drift. 
Building on this insight, we establish a sublinear scaling law and propose {LoRA-$\alpha$}, which restores $\alpha$ to its principled regime and enables direct reuse of FFT learning rates to substantially improve LoRA performance.
Overall, our work provides both theoretical insight and practical guidance for unlocking LoRA’s  potential through a simple yet effective scaling mechanism.

Looking ahead, developing novel adapter architectures that structurally mitigate this drift without compromising task-signal integrity remains an exciting frontier for future research.

\begingroup
\small
\bibliography{reference}
\bibliographystyle{icml_bib_style}

\endgroup

%%%%%%%%%%%%%%%%%%%%%%%%%%%%%%%%%%%%%%%%%%%%%%%%%%%%%%%%%%%%

%%%%%%%%%%%%%%%%%%%%%%%%%%%%%%%%%%%%%%%%%%%%%%%%%%%%%%%%%%%%

\newpage
\appendix

\section{Limitations}\label{sec:Limitations}
Our analysis of the asymmetric roles of the scaling factor $\alpha$ and the learning rate $\eta$ is grounded in the behavior of adaptive optimizers. 
While the Signal–Drift decomposition itself is optimizer-agnostic, the observed asymmetry, where $\alpha$ amplifies task-aligned signal while $\eta$ introduces additional drift, may not directly extend to vanilla SGD. 
We adopt this perspective to reflect practical training regimes, where adaptive optimizers are ubiquitous in deep learning and large model training.

\section{Broader Impact}\label{sec:Broader Impact}
This work studies the scaling mechanism of LoRA and proposes a principled scaling strategy that improves training effectiveness. By enabling more effective fine-tuning and reduced hyperparameter search, our method can lower the computational cost associated with adapting large models, potentially making such techniques more accessible.
At the same time, improved training effectiveness  may facilitate the broader deployment of large language models, including in settings where misuse is possible. Nevertheless, our work does not introduce new capabilities beyond improving optimization, and thus does not fundamentally alter the risk profile of existing models.
Overall, we view this work as a step toward more efficient and principled model adaptation, while emphasizing that downstream applications should be developed and deployed with appropriate safeguards.

\section{Related Work}

LoRA~\citep{Hu2021LoRALA} leverages low-rank decomposition to efficiently adapt large models, and has become a key technique for scalable deployment in modern large-scale systems, significantly reducing training~\cite{peft,schulman2025lora,wenbinwang2026loraoptimalrank} and serving costs~\cite{kwon2023efficient}. Comprehensive overviews of LoRA and its extensions can be found in~\citep{yang2024low,mao2025survey,he2026unified}. 
To improve the expressiveness of low-rank updates, prior works explore architectural enhancements such as stacking strategies~\citep{lialin2023relora,xia2024chain}, higher-order parameterizations~\citep{Edalati2022KronAPE}, and customized adapter designs~\citep{jiang2024mora,liu2024dora,zhong2024neat}. 
More relevant to our work, research focuses on improving optimization and training efficiency. Existing approaches investigate scaling strategies~\citep{kalajdzievski2023rank,biderman2024lora}, learning rate design~\citep{lora+}, initialization schemes~\citep{PiSSA,lora-ga,zhang2025primacy}, optimizer choices~\citep{li2024crucial}, and batch size configurations~\cite{lee2026beware}. 

\textbf{Optimization Dynamics of LoRA.}
The optimization behavior of LoRA remains poorly understood due to its bilinear parameterization and non-convex landscape~\citep{li2024crucial,xu2025understanding}. 
Existing theoretical analyses are often restricted to simplified regimes, such as lazy training or infinite-width limits~\citep{malladi2023kernel,hayou2024impact}. 
\citet{kalajdzievski2023rank,RoRA} study stability conditions, showing that improper scaling can lead to gradient collapse or inefficient training. 
~\citep{lora+,zhu2024asymmetry} further highlight structural asymmetries between the two low-rank factors, motivating asymmetric optimization strategies such as distinct learning rates.  ~\cite{mu2025convergence} studies a non-asymptotic convergence analysis of the original LoRA gradient descent algorithm.
While these efforts provide valuable insights, they do not fully explain how scaling choices affect optimization efficiency in practical settings.

\textbf{The $\eta$--$\alpha$ Relationship.}
In practice, LoRA is typically trained with a fixed scaling factor (often $\alpha \propto r$) while tuning the learning rate $\eta$~\cite{Hu2021LoRALA}. 
Empirically, LoRA requires significantly larger learning rates, often exceeding $10\times$ those of full fine-tuning, to achieve competitive performance~\citep{biderman2024lora,schulman2025lora}. 
Recent theoretical results reveal a coupling between $\eta$ and $\alpha$~\citep{zhang2025primacy,schulman2025lora}, showing that \textit{increasing $\alpha$ can mimic the effect of simultaneously scaling  the learning rate and initialization under adaptive optimizers. }
We are inspired by these theoretical analyses, which we interpret as a hint that $\alpha$ and $\eta$ are not interchangeable, with $\alpha$ exerting a stronger influence on optimization than the learning rate.
More recently, \citet{chen2026learning} propose the Maximal-Update Adaptation framework to study rank-invariant learning rates, while \citet{lee2026learning} show that different LoRA variants favor distinct learning rate ranges despite achieving similar peak performance.
Nevertheless, these studies are largely confined to the regime $\alpha = r$. Our empirical results show that this constraint limits LoRA’s fitting capacity and leaves a significant optimization gap.

\paragraph{Discussion.} 
\textit{To the best of our knowledge, this is the first work to systematically characterize the role of the scaling factor $\alpha$ in LoRA optimization.}
While existing analyses emphasize learning rate or initialization, we demonstrate that $\alpha$ plays a fundamentally different role. 
From a Hessian perspective, we explain why LoRA requires aggressive hyperparameters, and reveal a key asymmetry: $\alpha$ primarily amplifies task-aligned curvature, whereas $\eta$ amplifies the magnitude of bilinear-induced drift. 
This decomposition suggests an optimal regime that increases $\alpha$ while keeping $\eta$ conservative, enabling stronger signal amplification without destabilizing optimization.

\section{Details for Empirical Study}
\label{appendix:more_fitting}

In this section, we provide comprehensive details of the experimental setup and extended optimization results across different model scales and datasets.

\subsection{Experimental Setup and Search Strategy}

\textbf{Hyperparameter Search Strategy.} 
To characterize the optimization landscape of LoRA, we tailored our hyperparameter search strategies based on the dataset and our evolving empirical understanding:
\begin{itemize}[leftmargin=*]
    \item \textbf{Tulu 3:} Given the lack of established heuristics for the optimal scaling regime at the onset of our study, we adopted a two-stage approach for the Tulu 3 dataset. We first defined a broad, coarse-grained grid across the learning rate $\eta$ and the scaling factor $\alpha$. Upon identifying the general region of convergence, we conducted a finer-grained localized search around the minimum validation loss to accurately pinpoint the optimal configurations $\eta^*$ and $\alpha^*$.
    \item \textbf{OpenThoughts:} For the OpenThoughts dataset, rather than repeating an exhaustive grid search, we leveraged the empirical law derived from the Tulu 3 experiments, where the base scaling coefficient was identified near $C=256$. We anchored our search space around this empirical baseline, systematically sweeping $\alpha$ within a $10\times$ range (both scaling up and down) to validate the transferability of the sublinear scaling law.
\end{itemize}

\textbf{OpenThoughts Dataset Details.} 
To evaluate whether our findings generalize to reasoning-heavy tasks, which typically exhibit different optimization dynamics than general instruction tuning, we utilized the OpenThoughts dataset \cite{guha2025openthoughts}. We sampled a 10k training subset and a 1k validation set. Because reasoning traces inherently require substantially more context, we extended the maximum sequence length to $T=8192$ (compared to $T=1024$ for Tulu 3). Note that FFT experiments were omitted for OpenThoughts, as the primary objective here is to validate the relative scaling dynamics between $\alpha$ and $\eta$ established in the main text.

\begin{figure}[h]
    \centering
\includegraphics[width=1.0\linewidth]{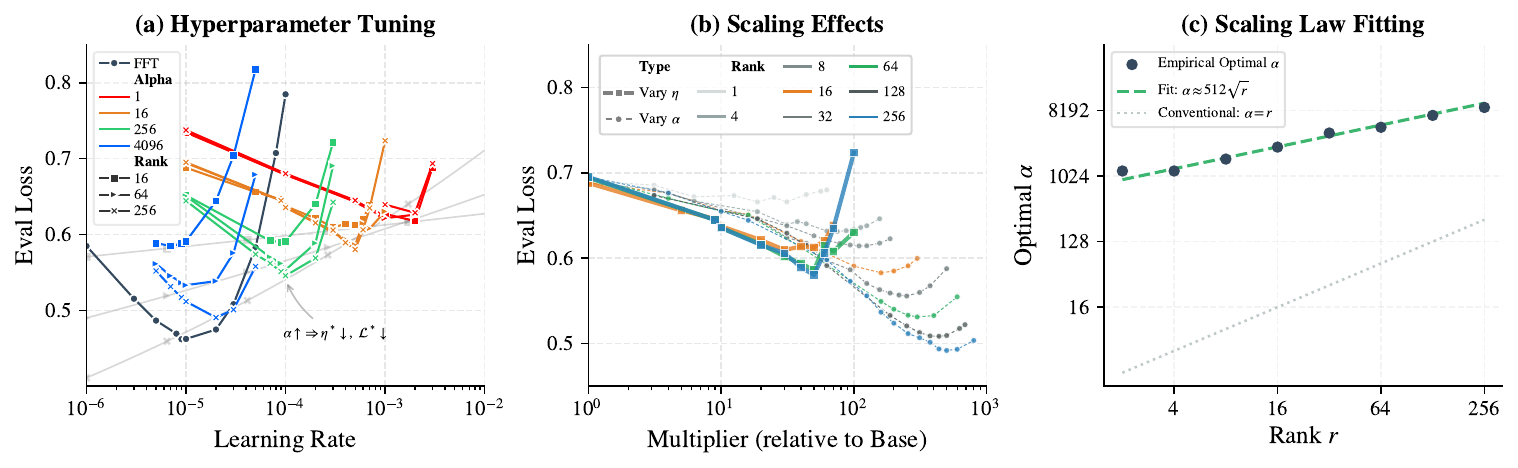}
\caption{Hyperparameter analysis of Llama 3-8B on the Tulu 3 dataset. 
(a) Evaluation loss as a function of the learning rate $\eta$ across different ranks $r$ and scaling factors $\alpha$. Gray lines denote linear fits of the minimum loss for each $(r, \alpha)$. Increasing $\alpha$ lowers the optimal loss and shifts $\eta^*$ downward.
(b) Scaling paths defined in Eq.~\eqref{eq:scaling_path} relative to the baseline ($\eta_{\text{FFT}} = 1\times10^{-5}$, $\alpha_0 = 16$), where $\eta_0$ denotes the optimal FFT learning rate. Varying $\alpha$ reaches lower-loss regimes that are inaccessible via $\eta$-tuning alone. 
(c) Optimal scaling factor $\alpha^*$ as a function of rank $r$, evaluated at $\eta_0$. The observed sublinear trend and large magnitude of $\alpha$ challenge conventional scaling heuristics.}
    \label{fig:llama3_tulu3_appendix}
\end{figure}

\begin{figure}[h]
    \centering
    \includegraphics[width=1.0\linewidth]{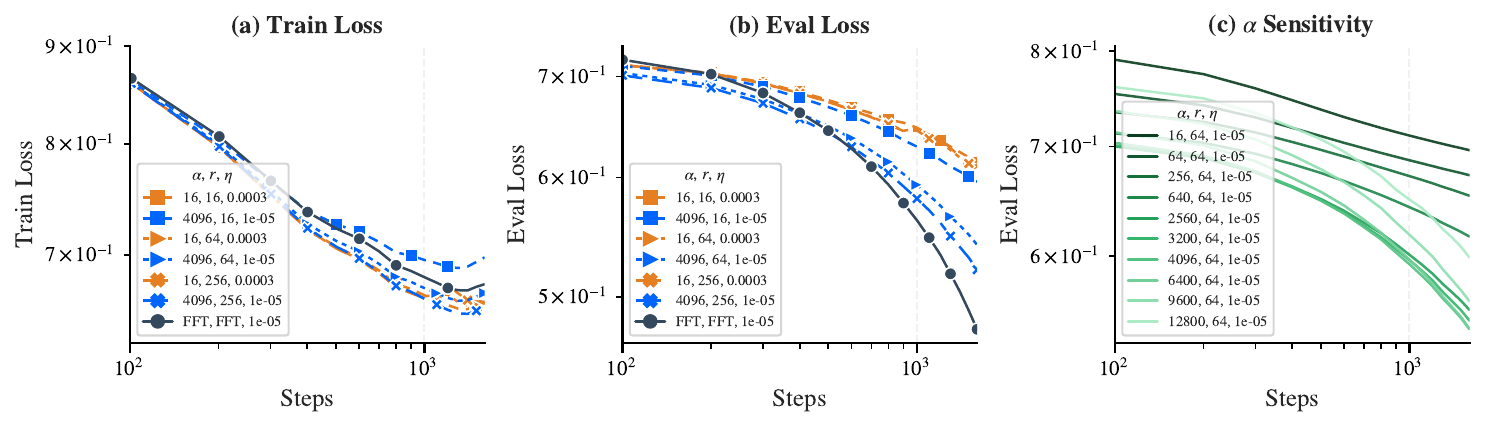}
    \caption{Optimization dynamics and $\alpha$ sensitivity for Llama 3-8B on the Tulu 3 dataset over training steps. (a) Training loss and (b) Evaluation loss curves for various combinations of rank $r$, scaling factor $\alpha$, and learning rate $\eta$, compared alongside FFT. Properly scaled configurations (\textit{e.g.}, larger $\alpha$ with standard $\eta$) tightly approximate the FFT trajectory. (c) Sensitivity analysis of the scaling factor $\alpha$, illustrating how increasing $\alpha$ smoothly accelerates convergence.}
    \label{fig:llama3-8b_loss}
\end{figure}

\subsection{Scaling Behavior of Llama 3-8B on Tulu 3}

Fig.~\ref{fig:llama3_tulu3_appendix} illustrates the hyperparameter analysis for the Llama 3-8B model on the Tulu 3 dataset. The observations strictly corroborate our findings from the 1B model. Notably, $\alpha$-scaling continues to demonstrate superior optimization capacity over $\eta$-scaling. The optimal scaling factor $\alpha^*$ fits a sublinear relationship $\alpha = 512\sqrt{r}$, confirming that larger models also operate optimally at an $\alpha$ magnitude substantially larger than conventional rank-tied heuristics.

Furthermore, Fig.~\ref{fig:llama3-8b_loss} provides a granular view of the optimization dynamics over training steps. The loss trajectories demonstrate that properly scaled configurations, \textit{i.e.}, operating with a large $\alpha$ and a standard small $\eta$, tightly approximate the descent trajectory of FFT. 

\subsection{Validation on OpenThoughts Reasoning Tasks}

Fig.~\ref{fig:1B-openthoughts} and \ref{fig:8B-openthoughts} present the optimization behaviors on the OpenThoughts dataset for Llama 3-1B and 8B, respectively. Despite the increased sequence length and task complexity, the fundamental asymmetry between $\alpha$ and $\eta$ remains persistent. Both models consistently avoid early saturation and reach deeper loss minima when $\alpha$ is scaled. Interestingly, the fitted optimal scaling laws yield large coefficients $C=1024$, indicating that complex reasoning tasks may demand an even more aggressive restoration of the task signal curvature.

\begin{figure}[h]
    \centering
\includegraphics[width=0.75\linewidth]{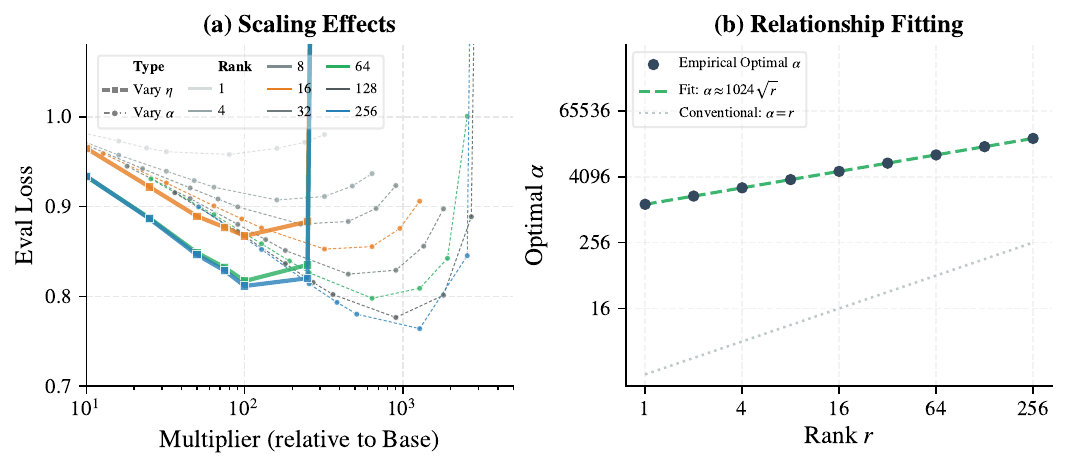}
    \caption{Hyperparameter analysis of Llama 3-1B on the OpenThoughts dataset with  $\eta_{\text{FFT}}=1\times 10^{-5}$ and $\alpha_0=16$. (a) Scaling paths comparing $\alpha$-scaling and $\eta$-scaling. Consistent with the main findings, increasing $\alpha$ enables the model to reach deeper loss minima that are inaccessible via $\eta$-tuning alone. (b) Optimal scaling factor $\alpha^*$ as a function of rank $r$. The fitted sublinear relationship confirms the scaling law established in the main text, demonstrating its robustness across different datasets.}
    \label{fig:1B-openthoughts}
\end{figure}

\begin{figure}[h]
    \centering
    \includegraphics[width=0.75\linewidth]{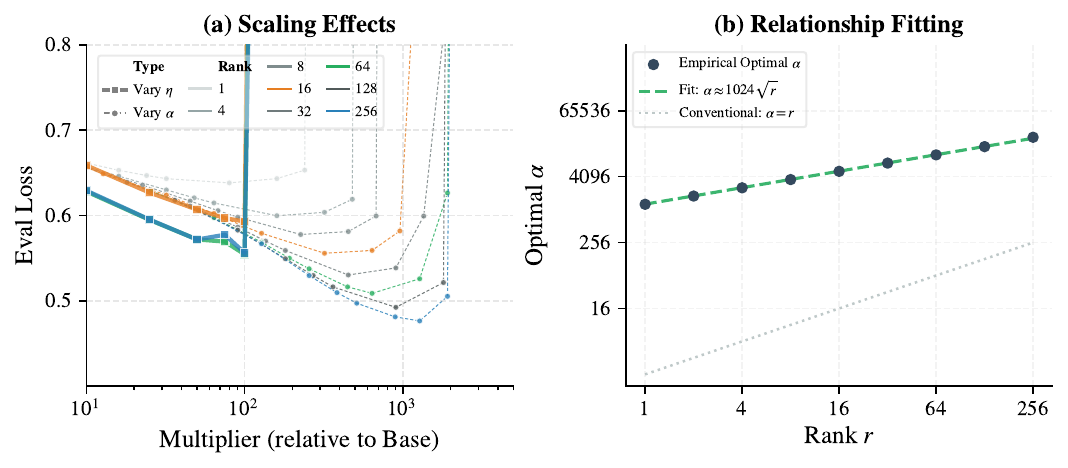}
    \caption{Hyperparameter analysis of Llama 3-8B on the OpenThoughts dataset with  $\eta_{\text{FFT}}=1\times 10^{-5}$ and $\alpha_0=16$. (a) Scaling paths comparing $\alpha$-scaling and $\eta$-scaling. Consistent with the main findings, increasing $\alpha$ enables the model to reach deeper loss minima that are inaccessible via $\eta$-tuning alone. (b) Optimal scaling factor $\alpha^*$ as a function of rank $r$. The fitted sublinear relationship confirms the scaling law established in the main text, demonstrating its robustness across different datasets.}
    \label{fig:8B-openthoughts}
\end{figure}

\section{Empirical Validation of Signal-Drift Dynamics}
\label{appendix:toy_example_details}

To empirically validate the Signal-Drift dynamics and visualize the intractable Hessian matrices without the memory bottleneck, we construct a controlled environment using a multi-layer perceptron (MLP). This setup allows us to precisely track the optimization trajectories and perform exact spectral analysis on the optimization landscape.

\paragraph{Setups.} 
We employ a 5-layer MLP utilizing the SiLU activation function, trained to fit synthetic Gaussian data. We generate a target projection matrix $W_{\text{true}}$ constructed via Singular Value Decomposition (SVD), where the singular values follow a strict decay rate of $\sigma_i = 1/i$. The training set consists of $n=100$ samples drawn from a standard normal distribution, with targets generated as $Y = XW_{\text{true}}^\top$. LoRA adapters are applied to all linear layers with a fixed rank of $r=16$. Pre-trained weights $W_0$ and adapter matrices $A$ are initialized from $\mathcal{N}(0, 1/d)$, while $B$ is initialized to zero. For general optimization tracking, we use a hidden dimension of $d=256$. For exact Hessian spectral analysis, which scales quadratically with parameter count, we reduce the dimension to $d=32$. 

\paragraph{Optimization Dynamics ($\eta$-scaling vs. $\alpha$-scaling).} 
The model is optimized using Adam for 100 steps with a MSE loss. The base learning rate and base scaling factor are set to $\eta_0 = 10^{-4}$ and $\alpha_0 = 1$, respectively. To compare their behavioral differences, we conduct experiments across two distinct regimes:
\begin{itemize}[leftmargin=*, topsep=0em, itemsep=0em]
    \item \textbf{$\eta$-scaling:} We fix $\alpha = \alpha_0$ and strictly scale the learning rate by factors of $\{16, 32, 64, 128\} \times \eta_0$.
    \item \textbf{$\alpha$-scaling:} We anchor the learning rate at $\eta = \eta_0$ and scale $\alpha$ by factors of $\{32, 64, 128, 256\} \times \alpha_0$.
\end{itemize}
These two regimes yield fundamentally different optimization dynamics. While configurations $\eta = 64\eta_0$ and $\alpha = 128\alpha_0$ initially exhibit comparable convergence trends, the $\eta$-scaled trajectory already manifests noticeable oscillations. Pushing the scaling further exposes the structural bottleneck: $\eta = 128\eta_0$ causes the training to catastrophically diverge, whereas $\alpha = 256\alpha_0$ remains strictly stable and continues to unlock deeper fitting (as evidenced in Fig.~\ref{fig:sdr_convergence}).

\begin{figure}[h]
    \centering
    \includegraphics[width=1.0\linewidth]{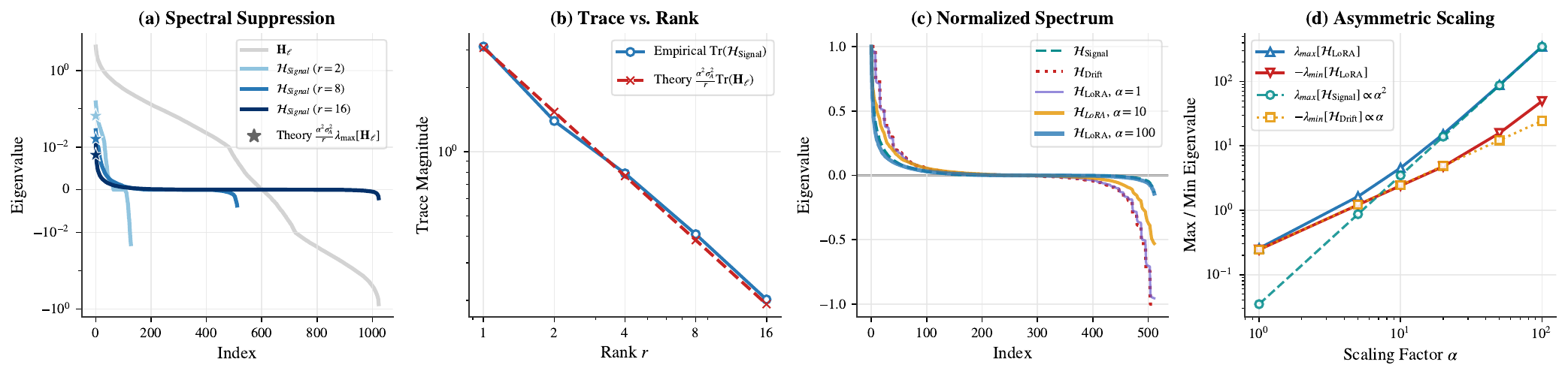}
    \caption{Spectral analysis of Hessian matrices.
    (a) Comparison between the full-parameter Hessian $\mathbf{H}_{\ell}$ and LoRA's signal Hessian $\mathcal{H}_{\text{Signal}}$, illustrating the compression of total curvature energy.
    (b) The predicted sum of eigenvalues from our Signal-Drift framework closely matches the empirical values across various ranks, validating our theoretical derivation.
    (c) The exact LoRA Hessian $\mathcal{H}_{\text{LoRA}}$ is inherently indefinite due to the structural drift $\mathcal{H}_{\text{Drift}}$. Increasing $\alpha$ triggers spectral purification, where the spectrum converges toward the positive semi-definite $\mathcal{H}_{\text{Signal}}$. 
    (d) Empirical verification of asymmetric scaling: the signal curvature magnitude grows as $\Theta(\alpha^2)$ while drift noise grows only as $\Theta(\alpha)$. Consequently, as $\alpha$ increases, $\lambda_{\max}[\mathcal{H}_{\text{LoRA}}]$ perfectly aligns with $\lambda_{\max}[\mathcal{H}_{\text{Signal}}]$.}
  \label{fig:spectral_properties}
\end{figure}

\paragraph{Spectral Compression under Low-Rank Parameterization.} To understand why LoRA scales differently from FFT, we explicitly compute the full-parameter Hessian $\mathbf{H}_{\ell}$, the LoRA Hessian $\mathcal{H}_{\text{LoRA}}$, and its components $\mathcal{H}_{\text{Signal}}$ and $\mathcal{H}_{\text{Drift}}$ via exact automatic differentiation. As shown in Fig.~\ref{fig:spectral_properties}(a), the spectrum of $\mathcal{H}_{\text{Signal}}$ is a compressed version of $\mathbf{H}_{\ell}$. The dominant eigenvalues are significantly reduced, and the trace explicitly follows $\operatorname{Tr}(\mathcal{H}_{\text{Signal}}) \approx \frac{\alpha^2\sigma_A^2}{r} \operatorname{Tr}(\mathbf{H}_{\ell})$. Notably, Fig.~\ref{fig:spectral_properties}(a) illustrates that the maximum eigenvalue $\lambda_{\max}$ empirically aligns with the predicted order of magnitude, although it is less strictly constrained than the trace. Since the trace equals the sum of all eigenvalues ($\operatorname{Tr}(\mathbf{H}) = \sum_i \lambda_i$), this trace-level compression inherently necessitates a suppression of the dominant spectral components. This confirms that low-rank parameterization severely restricts curvature energy, providing a quantitative explanation for why a larger $\alpha$ is fundamentally required to compensate for rank-induced suppression.

\paragraph{Indefiniteness and Spectral Purification.}
Unlike the full Hessian, $\mathcal{H}_{\text{LoRA}}$ is inherently indefinite due to the presence of the structural drift $\mathcal{H}_{\text{Drift}}$. As shown in Fig.~\ref{fig:spectral_properties}(c), this drift component introduces negative eigenvalues, which destabilize optimization. However, increasing $\alpha$ progressively amplifies the signal term relative to the drift, leading to a \emph{spectral purification} effect: the overall indefinite spectrum becomes increasingly aligned with the positive semi-definite $\mathcal{H}_{\text{Signal}}$.

\paragraph{Asymmetric Scaling of Signal and Drift.}
Fig.~\ref{fig:spectral_properties}(d) empirically verifies the asymmetric scaling behavior. We observe a strict divergence in growth rates:
\begin{align}
\lambda_{\max}[\mathcal{H}_{\text{Signal}}] &= \Theta(\alpha^2), \\
|\lambda_{\min}[\mathcal{H}_{\text{Drift}}]| &= \Theta(\alpha).
\end{align}
This mismatch guarantees that increasing $\alpha$ disproportionately strengthens the task-aligned curvature while only linearly amplifying the destabilizing drift. As a result, the dominant eigenvalue of $\mathcal{H}_{\text{LoRA}}$ increasingly aligns with that of $\mathcal{H}_{\text{Signal}}$.

Together, these controlled experiments provide a validated spectral explanation for the empirical superiority of $\alpha$-scaling: (i) low-rank parameterization suppresses curvature magnitude; (ii) $\alpha$ restores signal strength in a controlled manner; and (iii) the asymmetric scaling between signal and drift naturally dictates that increasing $\alpha$, rather than the learning rate, is the principled path to stable and accelerated optimization.

\section{Comparison of Scaling Factor  Magnitudes}\label{appendix:Comparison_Scaling_Factor}
\begin{table}[ht]
\centering
\caption{Comparison of the standard heuristic ($\alpha=r$) with our proposed empirical value ($256\sqrt{r}$) and  analytic value ($\frac{1}{\sigma_A}\sqrt{r} = \sqrt{3d_{in}r}$ with $d_{in}=4096$).}
\label{tab:magnitude_comparison}
\small
\begin{tabular}{@{}ccccc@{}}
\toprule
\textbf{Rank ($r$)} & \textbf{Heuristic ($\alpha=r$)} & \textbf{Empirical ($256\sqrt{r}$)} & \textbf{Theoretical ($\frac{1}{\sigma_A}\sqrt{r}$)} & \textbf{Magnitude Increase} \\ \midrule
1                   & 1                               & 256                               &  111                            & 256.0 $\times$              \\
4                   & 4                               & 512                               &  222                            & 128.0 $\times$              \\
8                   & 8                               & 724                               &  314                            & 90.5 $\times$               \\
16                  & 16                              & 1024                              &  443                            & 64.0 $\times$               \\
32                  & 32                              & 1448                              &  627                            & 45.3 $\times$               \\
64                  & 64                              & 2048                              &  887                            & 32.0 $\times$               \\
128                 & 128                             & 2896                              &  1254                           & 22.6 $\times$               \\
256                 & 256                             & 4096                              & 1774                           & 16.0 $\times$               \\ \bottomrule
\end{tabular}
\end{table}

\paragraph{Quantitative Comparison of Scaling Magnitudes.} To further illustrate the optimization gap, we provide a quantitative comparison between our proposed scaling law and prevailing heuristics in Table~\ref{tab:magnitude_comparison}. Conventional practices typically set $\alpha = r$, which implicitly confines the scaling factor to a severely under-scaled regime as the rank increases.
According to our Signal-Drift framework, restoring the curvature energy (Hessian trace) to a level comparable to FFT requires $\alpha$ to scale with $\frac{1}{\sigma_A}\sqrt{r}$. For a typical model dimension $d_{in}=4096$ and $\sigma_A^2 = 1/(3d_{in})$, the theoretical coefficient $\frac{1}{\sigma_A}$ is approximately $111$. Our empirical default $C=256$ is an even more optimized choice that ensures deep fitting. As shown in the table, for a standard rank $r=8$, LoRA-$\alpha$ provides a scaling magnitude that is 90.5$\times$ larger than the common heuristic. Even at $r=256$, the gap remains as high as 16$\times$. This comparison clarifies why standard FFT learning rates have historically been inadequate for driving LoRA to its full potential under legacy scaling rules.

\paragraph{Note on Initialization Variance.} The parameter $\sigma_A^2 = 1/(3d_{in})$ is chosen to align with the default initialization protocol in standard deep learning libraries (\textit{e.g.}, PyTorch and PEFT). Specifically, LoRA adapters are typically initialized using a Kaiming uniform distribution $\mathcal{U}(-\sqrt{1/d_{in}}, \sqrt{1/d_{in}})$. For a uniform distribution $\mathcal{U}(-k, k)$, the variance is given by $k^2/3$. By substituting $k^2 = 1/d_{in}$, we obtain $\sigma_A^2 = 1/(3d_{in})$. This ensures that the magnitudes predicted by our theoretical framework are consistent in order of magnitude with those used in practical implementations.

\section{Theoretical Proofs}\label{appendix:proof}

\textbf{Notations.} Consider a pre-trained weight $W_0 \in \mathbb{R}^{d_{out} \times d_{in}}$ and trainable adapters ${\theta} = \{A, B\}$, where $B \in \mathbb{R}^{d_{out} \times r}$ and $A \in \mathbb{R}^{r \times d_{in}}$. The LoRA mapping is defined as $W(\bm{\theta}) = W_0 + \frac{\alpha}{r}BA$. Let $\bm{w}(\bm{\theta}) = \operatorname{vec}(W(\bm{\theta}))\in \mathbb{R}^{D}$ denote the vectorized weight, and $\bm{\theta} =[\operatorname{vec}(A); \operatorname{vec}(B)] \in \mathbb{R}^p$ the concatenated parameters, where $D = d_{out}d_{in}$ and $p = r(d_{in} + d_{out})$.
For the objective $\ell(\bm{w})$, we define the task gradient $\bm{g} = \nabla_{\bm{w}} \ell \in \mathbb{R}^D$ and the task Hessian $\mathbf{H}_\ell = \nabla^2_{\bm{w}} \ell \in \mathbb{R}^{D \times D}$, with $g_k$ and $w_k$ representing their $k$-th scalar entries. The mapping geometry is governed by the Jacobian $J(\bm{\theta}) = \nabla_{\bm{\theta}} \bm{w} \in \mathbb{R}^{D \times p}$ and the structural Hessian $\nabla^2_{\bm{\theta}} \bm{w} \in \mathbb{R}^{D \times p \times p}$, interpreted as a bilinear operator: for any $\bm{v} \in \mathbb{R}^p$, $\nabla^2_{\bm{\theta}} \bm{w}[\bm{v}, \bm{v}] \in \mathbb{R}^D$ has entries $\bm{v}^\top (\nabla^2_{\bm{\theta}} w_k)\bm{v}$.

\subsection{Proof for Proposition~\ref{prop:sd_decomp}}\label{appendix:sd_decomp}
\begin{proposition*}[Signal-Drift Decomposition]
Let $\Delta \bm{w}_{\text{LoRA}} = \bm{w}(\bm{\theta} + \Delta\bm{\theta}) - \bm{w}(\bm{\theta})$ denote the effective step in the weight space, and $\mathcal{H}_{\text{LoRA}} = \nabla^2_{\bm{\theta}} \ell$ denote the parameter-space Hessian. Under the LoRA parameterization, both admit an exact decomposition into a task-aligned signal and a structural drift:
\begin{align}
\Delta \bm{w}_{\text{LoRA}}
= \underbrace{J(\bm{\theta})\Delta \bm{\theta}}_{\Delta \bm{w}_{\text{Signal}}}
+ \underbrace{\tfrac{1}{2}\nabla^2_{\bm{\theta}} \bm{w}[\Delta \bm{\theta}, \Delta \bm{\theta}]}_{\Delta \bm{w}_{\text{Drift}}}, \
\mathcal{H}_{\text{LoRA}} 
= \underbrace{J(\bm{\theta})^\top \mathbf{H}_\ell J(\bm{\theta})}_{\mathcal{H}_{\text{Signal}}}
+ \underbrace{\Sigma_{k=1}^{D} g_k \nabla^2_{\bm{\theta}} w_k}_{\mathcal{H}_{\text{Drift}}}.
\end{align}
\end{proposition*}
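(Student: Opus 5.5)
The plan is to exploit the fact that the LoRA map $\bm{w}(\bm{\theta}) = \operatorname{vec}(W_0 + \frac{\alpha}{r}BA)$ is a polynomial of degree exactly two in $\bm{\theta}$. Its Taylor expansion therefore terminates after the second-order term, and both claimed identities follow without any remainder term.

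\textbf{Step 1: the weight-space step.} Write $\Delta\bm{\theta} = [\operatorname{vec}(\Delta A); \operatorname{vec}(\Delta B)]$ and expand directly:
\[
W(\bm{\theta}+\Delta\bm{\theta}) - W(\bm{\theta}) = \tfrac{\alpha}{r}\big(B\,\Delta A + \Delta B\, A\big) + \tfrac{\alpha}{r}\,\Delta B\,\Delta A .
\]
I will identify the first term with $J(\bm{\theta})\Delta\bm{\theta}$. This holds because the directional derivative of $\bm{\theta}\mapsto \frac{\alpha}{r}BA$ along $\Delta\bm{\theta}$ is, by the product rule, exactly $\frac{\alpha}{r}(B\Delta A + \Delta B A)$.

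Next I will compute the second directional derivative: $\frac{d^2}{dt^2}\frac{\alpha}{r}(B+t\Delta B)(A+t\Delta A) = \frac{2\alpha}{r}\Delta B\Delta A$. Hence $\nabla^2_{\bm{\theta}}\bm{w}[\Delta\bm{\theta},\Delta\bm{\theta}] = \operatorname{vec}(\frac{2\alpha}{r}\Delta B\Delta A)$. Half of this is the remaining term, which gives the first identity.

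Equivalently, $t\mapsto \bm{w}(\bm{\theta}+t\Delta\bm{\theta})$ is a quadratic polynomial in $t$, so its Taylor series at $t=0$, evaluated at $t=1$, is exact. This also explains the algebraic forms stated after the proposition.

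\textbf{Step 2: the parameter-space Hessian.} I will apply the chain rule twice to $\ell(\bm{w}(\bm{\theta}))$. First, $\nabla_{\bm{\theta}}\ell = J(\bm{\theta})^\top \bm{g}(\bm{w}(\bm{\theta})) = \sum_k g_k \nabla_{\bm{\theta}} w_k$. Differentiating again, using the product rule on each summand, gives
\[
\nabla^2_{\bm{\theta}}\ell = \sum_{k,m} \frac{\partial^2 \ell}{\partial w_k \partial w_m}\nabla_{\bm{\theta}} w_k \nabla_{\bm{\theta}} w_m^\top + \sum_k g_k \nabla^2_{\bm{\theta}} w_k .
\]
The first sum is $J^\top \mathbf{H}_\ell J$ and the second is $\mathcal{H}_{\text{Drift}}$. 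This is the standard Gauss--Newton split. It is exact for any twice-differentiable $\ell$, and it requires only that $\bm{w}$ be $C^2$, which is immediate since $\bm{w}$ is polynomial.

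\textbf{Main obstacle.} The only real care needed is bookkeeping. I must check that the bilinear-operator notation $\nabla^2_{\bm{\theta}}\bm{w}[\bm{v},\bm{v}]$ matches the componentwise Hessians $\nabla^2_{\bm{\theta}} w_k$, with the factor $\frac12$ absorbing the symmetric cross terms $\Delta B\Delta A$ arising from the off-diagonal $(A,B)$ blocks. The diagonal $(A,A)$ and $(B,B)$ blocks of each $\nabla^2_{\bm{\theta}} w_k$ vanish because $\bm{w}$ is linear in each factor separately.

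I will verify this by writing $w_k = W_{0,ij} + \frac{\alpha}{r}\sum_s B_{is}A_{sj}$. Its only nonzero second partials are $\partial^2 w_k/\partial B_{is}\partial A_{sj} = \frac{\alpha}{r}$, which confirms the vectorized form above. No smallness assumption on $\Delta\bm{\theta}$ is needed, which is what makes the decomposition exact rather than approximate.
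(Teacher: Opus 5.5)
Your proposal is correct and follows essentially the same route as the paper. For the weight step, the second-order expansion is exact because the bilinear map has vanishing third derivatives; your restriction to the quadratic polynomial $t\mapsto \bm{w}(\bm{\theta}+t\Delta\bm{\theta})$ is the same point. For the Hessian, you apply the chain rule twice to get the Gauss--Newton split $J^\top\mathbf{H}_\ell J + \sum_k g_k\nabla^2_{\bm{\theta}} w_k$, and your explicit second directional derivative $\operatorname{vec}\big(\tfrac{2\alpha}{r}\Delta B\,\Delta A\big)$ also recovers the algebraic signal/drift forms the paper derives right after its proof.
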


\begin{proof}

\textbf{Part I: Decomposition of the Weight Update $\Delta \bm{w}$.} \\
Recall that the general multivariate Taylor series expansion of the mapping $\bm{w}$ around $\bm{\theta}$ for a parameter update $\Delta\bm{\theta}$ is given by:
\begin{equation}
    \bm{w}(\bm{\theta} + \Delta\bm{\theta}) = \bm{w}(\bm{\theta}) + \nabla_{\bm{\theta}} \bm{w}(\bm{\theta}) \Delta\bm{\theta} + \frac{1}{2} \nabla^2_{\bm{\theta}} \bm{w}[\Delta\bm{\theta}, \Delta\bm{\theta}] + \mathcal{O}(\|\Delta\bm{\theta}\|^3).
\end{equation}
Under the LoRA framework, the mapping from the parameters $\bm{\theta} \coloneqq [\operatorname{vec}(A); \operatorname{vec}(B)]$ to the vectorized weights $\bm{w}$ is strictly bilinear. 
Because each element of $\bm{w}(\bm{\theta})$ contains products of at most two parameter elements, all third-order and higher-order derivatives of $\bm{w}$ with respect to $\bm{\theta}$ vanish identically. Consequently, the higher-order residual term $\mathcal{O}(\|\Delta\bm{\theta}\|^3)$ is exactly zero, and the expansion terminates exactly at the second-order term. Thus, we have the exact equality:
\begin{equation}
    \bm{w}(\bm{\theta} + \Delta\bm{\theta}) = \bm{w}(\bm{\theta}) + \nabla_{\bm{\theta}} \bm{w}(\bm{\theta}) \Delta\bm{\theta} + \frac{1}{2} \nabla^2_{\bm{\theta}} \bm{w}[\Delta\bm{\theta}, \Delta\bm{\theta}].
\end{equation}
By definition, the weight displacement is $\Delta \bm{w}_{\text{LoRA}} \coloneqq \bm{w}(\bm{\theta} + \Delta\bm{\theta}) - \bm{w}(\bm{\theta})$ and the Jacobian is $J(\bm{\theta}) \coloneqq \nabla_{\bm{\theta}} \bm{w}(\bm{\theta})$. Substituting these definitions directly yields the first decomposition:
\begin{equation}
    \Delta \bm{w}_{\text{LoRA}} = \underbrace{J(\bm{\theta})\Delta \bm{\theta}}_{\Delta \bm{w}_{\text{Signal}}} + \underbrace{\frac{1}{2}\nabla^2_{\bm{\theta}} \bm{w}[\Delta \bm{\theta}, \Delta \bm{\theta}]}_{\Delta \bm{w}_{\text{Drift}}}.
\end{equation}

\textbf{Part II: Decomposition of the Hessian $\mathcal{H}_{\text{LoRA}}$.} \\
First, we apply the multivariate chain rule to find the gradient of $\ell$ with respect to the $i$-th parameter $\theta_i$:
\begin{equation}
    \frac{\partial \ell}{\partial \theta_i} = \sum_{k=1}^D \frac{\partial \ell}{\partial w_k} \frac{\partial w_k}{\partial \theta_i} = \sum_{k=1}^D g_k \frac{\partial w_k}{\partial \theta_i},
\end{equation}
where $g_k = \frac{\partial \ell}{\partial w_k}$ is the $k$-th element of the task gradient $\bm{g}$.

Next, we differentiate this expression with respect to another parameter $\theta_j$ to obtain the $(i, j)$-th entry of the parameter Hessian:
\begin{equation}
    [\mathcal{H}_{\text{LoRA}}]_{i,j} = \frac{\partial^2 \ell}{\partial \theta_i \partial \theta_j} = \frac{\partial}{\partial \theta_j} \left( \sum_{k=1}^D g_k \frac{\partial w_k}{\partial \theta_i} \right).
\end{equation}
Applying the product rule, we obtain two separate terms:
\begin{equation}
    [\mathcal{H}_{\text{LoRA}}]_{i,j} = \sum_{k=1}^D \left( \frac{\partial g_k}{\partial \theta_j} \frac{\partial w_k}{\partial \theta_i} + g_k \frac{\partial^2 w_k}{\partial \theta_i \partial \theta_j} \right).
\end{equation}
We now expand the term $\frac{\partial g_k}{\partial \theta_j}$ using the chain rule once more:
\begin{equation}
    \frac{\partial g_k}{\partial \theta_j} = \sum_{m=1}^D \frac{\partial g_k}{\partial w_m} \frac{\partial w_m}{\partial \theta_j} = \sum_{m=1}^D [\mathbf{H}_\ell]_{k,m} J_{m,j},
\end{equation}
where $[\mathbf{H}_\ell]_{k,m} = \frac{\partial^2 \ell}{\partial w_k \partial w_m}$ is the $(k, m)$-th entry of the task Hessian $\mathbf{H}_\ell$, and $J_{m,j} = \frac{\partial w_m}{\partial \theta_j}$ is the $(m, j)$-th entry of the Jacobian $J(\bm{\theta})$.
Substituting this expansion back into the expression for $[\mathcal{H}_{\text{LoRA}}]_{i,j}$:
\begin{equation}
    [\mathcal{H}_{\text{LoRA}}]_{i,j} = \sum_{k=1}^D \sum_{m=1}^D \underbrace{\frac{\partial w_k}{\partial \theta_i}}_{J_{k,i}} [\mathbf{H}_\ell]_{k,m} \underbrace{\frac{\partial w_m}{\partial \theta_j}}_{J_{m,j}} + \sum_{k=1}^D g_k \frac{\partial^2 w_k}{\partial \theta_i \partial \theta_j}.
\end{equation}
Observe that the double summation on the left corresponds exactly to the matrix multiplication $(J(\bm{\theta})^\top \mathbf{H}_\ell J(\bm{\theta}))_{i,j}$. The summation on the right corresponds exactly to the $(i, j)$-th entry of the matrix $\sum_{k=1}^D g_k \nabla^2_{\bm{\theta}} w_k$.

Converting this element-wise equality back into matrix notation, we arrive at the final decomposition:
\begin{equation}
    \mathcal{H}_{\text{LoRA}} = \underbrace{J(\bm{\theta})^\top \mathbf{H}_\ell J(\bm{\theta})}_{\mathcal{H}_{\text{Signal}}} + \underbrace{\sum_{k=1}^{D} g_k \nabla^2_{\bm{\theta}} w_k}_{\mathcal{H}_{\text{Drift}}}.
\end{equation}
This establishes both partitions and completes the proof.
\end{proof}

By the definition of the LoRA forward pass, the weight matrix is $W(\theta) = W + \frac{\alpha}{r}BA$. For a parameter perturbation $\Delta \theta = \{\Delta A, \Delta B\}$, the perturbed weight matrix is:
\begin{align}
    W(\theta + \Delta \theta) &= W + \frac{\alpha}{r} (B + \Delta B)(A + \Delta A) \\
    &= W + \frac{\alpha}{r}BA + \frac{\alpha}{r}(B\Delta A + \Delta BA) + \frac{\alpha}{r}\Delta B\Delta A.
\end{align}
The first-order variation with respect to the parameters corresponds strictly to the linear terms in $\Delta A$ and $\Delta B$. Vectorizing this linear component directly yields the exact algebraic form of the task signal:
\begin{equation}
    \Delta \bm{w}_{\text{Signal}} = J(\bm{\theta})\Delta \bm{\theta} = \operatorname{vec}\left( \frac{\alpha}{r} (B \Delta A + \Delta B A) \right).
\end{equation}

 Vectorizing this bilinear cross-term gives the structural drift:
\begin{equation}
    \Delta \bm{w}_{\text{Drift}} = \frac{1}{2}\nabla^2_{\bm{\theta}} \bm{w}[\Delta \bm{\theta}, \Delta \bm{\theta}] = \operatorname{vec}\left( \frac{\alpha}{r} \Delta B \Delta A \right).
\end{equation}

\subsection{Proof for Proposition~\ref{prop:geometric_properties}}
\label{app:proof_remark}

\begin{proposition*}[Geometric Properties of Signal and Drift]
For any gradient-based update step $\Delta \bm{\theta} = -\eta \nabla_{\bm{\theta}} \ell$, the decomposed components in Proposition~\ref{prop:sd_decomp} satisfy: 

\textbf{(i) Constructive Signal:} The signal component  aligns with the  descent direction, $\langle \Delta \bm{w}_{\text{Signal}}, -\bm{g} \rangle \ge 0$, and preserves the local convexity of the loss landscape, \textit{i.e.}, $\mathcal{H}_{\text{Signal}}  \succeq 0$ given $\mathbf{H}_\ell \succeq 0$.

\textbf{(ii) Adversarial Drift:} The structural Hessian $\mathcal{H}_{\text{Drift}}$ is strictly indefinite, and its induced update $\Delta \bm{w}_{\text{Drift}}$ exerts an uncontrolled force with no guaranteed alignment with $-\bm{g}$.
\end{proposition*}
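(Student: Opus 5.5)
The plan is to prove both parts directly from the chain-rule structure in Proposition~\ref{prop:sd_decomp}, using the explicit bilinear form of the LoRA map.

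For \textbf{(i)}, first compute the parameter gradient, $\nabla_{\bm{\theta}}\ell = J(\bm{\theta})^\top \bm{g}$, by the same chain rule used in Part II of the previous proof. With $\Delta\bm{\theta} = -\eta J^\top \bm{g}$ the signal is $\Delta\bm{w}_{\text{Signal}} = -\eta J J^\top \bm{g}$. This gives
\begin{equation*}
\langle \Delta\bm{w}_{\text{Signal}}, -\bm{g}\rangle = \eta\,\bm{g}^\top J J^\top \bm{g} = \eta\|J^\top\bm{g}\|^2 \ge 0 .
\end{equation*}
For the preconditioned case of the footnote, I would replace $\Delta\bm{\theta}=-\eta P J^\top\bm{g}$ with a positive semidefinite (e.g.\ diagonal) preconditioner $P$. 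The same computation then gives $\eta\,\bm{g}^\top J P J^\top \bm{g}\ge 0$.

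For the curvature claim in (i), take any $\bm{v}\in\mathbb{R}^p$. Then $\bm{v}^\top J^\top \mathbf{H}_\ell J\bm{v} = (J\bm{v})^\top \mathbf{H}_\ell (J\bm{v}) \ge 0$ whenever $\mathbf{H}_\ell\succeq 0$, so $\mathcal{H}_{\text{Signal}}\succeq 0$.

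For \textbf{(ii)}, use the explicit bilinear form. Since $w_k$ is a scaled product of an entry of $B$ and an entry of $A$, $\bm{v}^\top(\sum_k g_k\nabla^2_{\bm{\theta}} w_k)\bm{v}$ equals the second derivative of $t\mapsto \langle \bm{g}, \bm{w}(\bm{\theta}+t\bm{v})\rangle$. With $\bm{v}=(\operatorname{vec}(U),\operatorname{vec}(V))$ for the $(A,B)$ blocks, the quadratic form is
\begin{equation*}
\bm{v}^\top \mathcal{H}_{\text{Drift}} \bm{v} = \tfrac{2\alpha}{r}\operatorname{Tr}(G^\top V U),
\end{equation*}
where $G$ is $\bm{g}$ reshaped to a matrix. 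Three consequences follow:
\begin{itemize}
\item The diagonal $(A,A)$ and $(B,B)$ blocks vanish, so $\mathcal{H}_{\text{Drift}}$ has the off-diagonal block form $\begin{pmatrix}0 & M\\ M^\top & 0\end{pmatrix}$.
\item Its spectrum is symmetric: it consists of $\pm\sigma_i(M)$ together with zeros.
\item Replacing $U$ by $-U$ flips the sign of the quadratic form. Hence whenever $G\neq 0$ (choose $U,V$ with $\operatorname{Tr}(G^\top VU)\neq0$, e.g.\ rank-one matrices aligned with a nonzero entry of $G$) there are directions of strictly positive and strictly negative curvature, and $\mathcal{H}_{\text{Drift}}$ is indefinite.
\end{itemize}
I will state the caveat explicitly: "strictly indefinite" holds precisely when $\bm{g}\neq 0$, and the drift Hessian is zero at stationary points.

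For the drift update, substitute the gradient step. Writing $\nabla_A\ell = \frac{\alpha}{r}B^\top G$ and $\nabla_B\ell=\frac{\alpha}{r}GA^\top$ gives
\begin{equation*}
\Delta\bm{w}_{\text{Drift}} = \tfrac{\alpha}{r}\eta^2(\tfrac{\alpha}{r})^2\operatorname{vec}(GA^\top B^\top G),
\end{equation*}
so
\begin{equation*}
\langle\Delta\bm{w}_{\text{Drift}},-\bm{g}\rangle = -\eta^2(\tfrac{\alpha}{r})^3\operatorname{Tr}(G^\top G A^\top B^\top G).
\end{equation*}
The main obstacle is turning "no guaranteed alignment" into a rigorous statement. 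I would give an explicit counterexample, e.g.\ scalar $d_{in}=d_{out}=r=1$ with $A,B,G$ of chosen signs. There the inner product is $-\eta^2(\alpha/r)^3 g^3 ab$, whose sign is that of $-g\,ab$, so it can be made positive or negative. This shows the sign is not controlled by the task gradient.
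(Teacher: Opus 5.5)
Your proposal is correct and follows essentially the same route as the paper: you use the chain rule to reduce signal alignment to $\langle\Delta\bm{\theta},-\nabla_{\bm{\theta}}\ell\rangle$, congruence for $\mathcal{H}_{\text{Signal}}\succeq 0$, the block off-diagonal structure with $\pm$-paired eigenvalues for $\mathcal{H}_{\text{Drift}}$, and a scalar $w=ab$ counterexample; your formula at $a=b=g=\alpha=r=1$ gives exactly the paper's value $-\eta^2$. The differences are refinements only. Your single PSD-preconditioner computation covers the paper's separate GD, sign-Adam and Muon cases, since each is of that form with a gradient-dependent $P$. Your caveat that indefiniteness requires $\bm{g}\neq 0$ correctly fixes a point the paper skips, because $\mathcal{H}_{\text{Drift}}=0$ at stationary points.
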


\begin{proof}
\textbf{Part I: Properties of the Signal Terms.}

\textit{1. Gradient alignment ($\langle \Delta \bm{w}_{\text{Signal}}, -\bm{g} \rangle \ge 0$):} \\
We evaluate the inner product between the signal component and the negative task gradient $-\bm{g}$ (the ideal steepest descent direction in the full-weight space). By the definition of the signal term and the chain rule ($\nabla_{\bm{\theta}} \ell = J(\bm{\theta})^\top \bm{g}$), this inner product elegantly reduces to the parameter space:
\begin{equation}
    \langle \Delta \bm{w}_{\text{Signal}}, -\bm{g} \rangle = \langle J(\bm{\theta}) \Delta \bm{\theta}, -\bm{g} \rangle = \langle \Delta \bm{\theta}, -J(\bm{\theta})^\top \bm{g} \rangle = \langle \Delta \bm{\theta}, -\nabla_{\bm{\theta}} \ell \rangle.
\end{equation}
This establishes that the alignment of the projected signal in the full-weight space is mathematically equivalent to the alignment of the parameter update with the parameter gradient. We analyze this property under three distinct optimization regimes with learning rate $\eta > 0$:

\begin{itemize}[leftmargin=*, topsep=0.5em, itemsep=0.5em]
    \item \textbf{Case 1: Standard Gradient Descent (GD).} \\
    Under GD, $\Delta \bm{\theta} = -\eta \nabla_{\bm{\theta}} \ell$. The alignment yields the squared $\ell_2$-norm:
    \begin{equation}
        \langle \Delta \bm{\theta}, -\nabla_{\bm{\theta}} \ell \rangle = \langle -\eta \nabla_{\bm{\theta}} \ell, -\nabla_{\bm{\theta}} \ell \rangle = \eta \|\nabla_{\bm{\theta}} \ell\|_2^2 \ge 0.
    \end{equation}

    \item \textbf{Case 2: Adam (Sign Gradient).} \\
    To isolate the geometric effect of Adam's adaptive denominator, we abstract its update rule as scaled sign gradient descent, $\Delta \bm{\theta} = -\eta \operatorname{sign}(\nabla_{\bm{\theta}} \ell)$, applied element-wise. Since $x \cdot \operatorname{sign}(x) = |x|$, the alignment resolves to the $\ell_1$-norm:
    \begin{equation}
        \langle \Delta \bm{\theta}, -\nabla_{\bm{\theta}} \ell \rangle = \eta \sum_{i} \operatorname{sign}([\nabla_{\bm{\theta}} \ell]_i) [\nabla_{\bm{\theta}} \ell]_i = \eta \|\nabla_{\bm{\theta}} \ell\|_1 \ge 0.
    \end{equation}
    
    \item \textbf{Case 3: Muon (Orthogonalized Gradient).} \\
    Muon operates on the matrix parameters directly. Let $X \in \{A, B\}$ denote a parameter block with gradient $G_X = \nabla_X \ell$. Using the compact SVD, $G_X = U_X \Sigma_X V_X^\top$. Muon drops the singular values to yield the orthogonalized update $\Delta X = -\eta U_X V_X^\top$. 
    The total parameter alignment is the sum of alignments over the blocks. Using the trace inner product $\langle A, B \rangle = \operatorname{Tr}(A^\top B)$ and its cyclic permutation property:
    \begin{align}
        \langle \Delta X, -G_X \rangle &= \operatorname{Tr}\big( (-\eta U_X V_X^\top)^\top (-U_X \Sigma_X V_X^\top) \big) \\
        &= \eta \operatorname{Tr}\big( V_X U_X^\top U_X \Sigma_X V_X^\top \big) \\
        &= \eta \operatorname{Tr}\big( \Sigma_X V_X^\top V_X \big) = \eta \operatorname{Tr}(\Sigma_X).
    \end{align}
    Since $\operatorname{Tr}(\Sigma_X)$ is the sum of singular values, it exactly equals the nuclear norm $\|G_X\|_*$. Summing over the adapter blocks yields:
    \begin{equation}
        \langle \Delta \bm{\theta}, -\nabla_{\bm{\theta}} \ell \rangle = \eta (\|G_A\|_* + \|G_B\|_*) \ge 0.
    \end{equation}
\end{itemize}
In all cases, the update strictly aligns with the negative gradient, guaranteeing that the signal term mimics a valid FFT descent direction measured in different geometric norms.

\textit{2. Positive semi-definiteness of $\mathcal{H}_{\text{Signal}}$:} \\
Recall from the decomposition that $\mathcal{H}_{\text{Signal}} = J(\bm{\theta})^\top \mathbf{H}_\ell J(\bm{\theta})$. Assume the task landscape is locally convex, meaning the task Hessian is positive semi-definite ($\mathbf{H}_\ell \succeq 0$). For any arbitrary non-zero vector $\bm{v} \in \mathbb{R}^p$, we have:
\begin{equation}
    \bm{v}^\top \mathcal{H}_{\text{Signal}} \bm{v} = \bm{v}^\top (J(\bm{\theta})^\top \mathbf{H}_\ell J(\bm{\theta})) \bm{v} = (J(\bm{\theta})\bm{v})^\top \mathbf{H}_\ell (J(\bm{\theta})\bm{v}).
\end{equation}
Let $\bm{u} \coloneqq J(\bm{\theta})\bm{v} \in \mathbb{R}^D$. Since $\mathbf{H}_\ell \succeq 0$, it follows that $\bm{u}^\top \mathbf{H}_\ell \bm{u} \ge 0$ for all $\bm{u}$. Therefore, $\bm{v}^\top \mathcal{H}_{\text{Signal}} \bm{v} \ge 0$, which proves $\mathcal{H}_{\text{Signal}} \succeq 0$.

\textbf{Part II: Properties of the Drift Terms.}

\textit{1. Indefinite saddle structure of $\mathcal{H}_{\text{Drift}}$:} \\
To prove that the structural drift introduces a saddle point, we analyze the block structure of $\mathcal{H}_{\text{Drift}}$. 
Since the mapping $W(\theta) = W_0 + \frac{\alpha}{r}BA$ is strictly bilinear, all intra-variable second derivatives (\textit{e.g.}, with respect to $A$ or $B$ alone) vanish identically. The parameter Hessian of any scalar entry $w_k$ thus consists solely of cross-derivatives, yielding a strictly block off-diagonal form. The aggregated drift Hessian naturally inherits this structure:
\begin{equation}
    \nabla^2_{\bm{\theta}} w_k = \frac{\alpha}{r} \begin{bmatrix} \mathbf{0} & \mathbf{C}_k \\ \mathbf{C}_k^\top & \mathbf{0} \end{bmatrix} \quad \implies \quad \mathcal{H}_{\text{Drift}} = \sum_{k=1}^D g_k \nabla^2_{\bm{\theta}} w_k = \frac{\alpha}{r} \begin{bmatrix} \mathbf{0} & \mathbf{C} \\ \mathbf{C}^\top & \mathbf{0} \end{bmatrix},
\end{equation}
where $\mathbf{C} = \sum_{k=1}^D g_k \mathbf{C}_k$ aggregates the gradient-weighted cross-derivatives.

By the spectral properties of symmetric block off-diagonal matrices, their non-zero eigenvalues strictly appear in symmetric pairs ($\pm \lambda$). This characterizes $\mathcal{H}_{\text{Drift}}$ as an \textit{indefinite matrix}, establishing that the bilinear parameterization inherently injects a disruptive saddle-point geometry into the landscape, independent of the task curvature $\mathbf{H}_\ell$.

\textit{2. Uncontrolled force of $\Delta \bm{w}_{\text{Drift}}$:} \\
To demonstrate that the structural drift can actively degrade optimization, we examine its alignment with the negative task gradient ($-\bm{g}$) under standard gradient descent. 
Recall from Proposition~\ref{prop:sd_decomp} that the weight-space drift is defined via the structural Hessian tensor as $\Delta \bm{w}_{\text{Drift}} = \frac{1}{2}\nabla^2_{\bm{\theta}} \bm{w}[\Delta \bm{\theta}, \Delta \bm{\theta}]$. Under GD with learning rate $\eta > 0$, the parameter update is $\Delta \bm{\theta} = -\eta \nabla_{\bm{\theta}} \ell = -\eta J(\theta)^\top \bm{g}$.
Evaluating the inner product between the drift step and the ideal descent direction yields:
\begin{equation}
\label{eq:drift_inner_product}
    \langle \Delta \bm{w}_{\text{Drift}}, -\bm{g} \rangle = -\frac{1}{2} \bm{g}^\top \Big( \nabla^2_{\bm{\theta}} \bm{w}[\Delta \bm{\theta}, \Delta \bm{\theta}] \Big) = -\frac{1}{2} \Delta \bm{\theta}^\top \left( \sum_{k=1}^{D} g_k \nabla^2_{\bm{\theta}} w_k \right) \Delta \bm{\theta}.
\end{equation}
Remarkably, the term inside the parenthesis is exactly the parameter-space drift Hessian defined in Proposition~\ref{prop:sd_decomp}. Thus, the inner product simplifies to a quadratic form:
\begin{equation}
    \langle \Delta \bm{w}_{\text{Drift}}, -\bm{g} \rangle = -\frac{1}{2} \Delta \bm{\theta}^\top \mathcal{H}_{\text{Drift}} \Delta \bm{\theta}.
\end{equation}
Because $\mathcal{H}_{\text{Drift}}$ is a block off-diagonal indefinite matrix, this quadratic form is not guaranteed to be non-negative, meaning the drift can easily oppose the descent direction.

\textbf{Counterexample:} Consider a scalar network ($D=1$, rank $r=1$) with $\alpha=1$. The effective weight is $w = ab$, parameterized by $\bm{\theta} = [a, b]^\top$. 
Assume the current parameters are $\bm{\theta} = [1, 1]^\top$ and the task gradient is $g = 1$. The Jacobian is $J(\theta) = \nabla_{\bm{\theta}} w = [b, a] = [1, 1]$, and the parameter update is $\Delta \bm{\theta} = -\eta J(\theta)^\top g = [-\eta, -\eta]^\top$. 
The drift Hessian evaluates to $\mathcal{H}_{\text{Drift}} = g \nabla^2_{\bm{\theta}} w = 1 \cdot \begin{bmatrix} 0 & 1 \\ 1 & 0 \end{bmatrix}$. 
Substituting these into Eq.~\eqref{eq:drift_inner_product}, the inner product is:
\begin{equation}
    \langle \Delta \bm{w}_{\text{Drift}}, -g \rangle = -\frac{1}{2} \begin{bmatrix} -\eta & -\eta \end{bmatrix} \begin{bmatrix} 0 & 1 \\ 1 & 0 \end{bmatrix} \begin{bmatrix} -\eta \\ -\eta \end{bmatrix} = -\frac{1}{2} (2\eta^2) = -\eta^2 < 0.
\end{equation}
\end{proof}

\subsection{Proof for Proposition~\ref{prop:suppression}}
\begin{proposition*}[Spectral Suppression]
Under $B=0$ and $A_{i,j} \sim \mathcal{N}(0, \sigma^2_{A})$, the expected signal curvature captured by LoRA satisfies $\mathbb{E}[\operatorname{Tr}(\mathcal{H}_{\text{Signal}})] = \alpha^2\rho \operatorname{Tr}(\mathbf{H}_{\ell})$, where $\rho = \sigma^2_{A}/r$.
\end{proposition*}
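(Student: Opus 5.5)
The plan is to write the trace in weight space using cyclicity, $\operatorname{Tr}(\mathcal{H}_{\text{Signal}}) = \operatorname{Tr}(J(\bm{\theta})^\top \mathbf{H}_\ell J(\bm{\theta})) = \operatorname{Tr}(\mathbf{H}_\ell\, J(\bm{\theta}) J(\bm{\theta})^\top)$. This reduces the problem to computing the $D\times D$ matrix $J J^\top$ at initialization and then taking its expectation over $A$.

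\textbf{Step 1: the Jacobian at $B=0$.} I would use the algebraic form of the signal derived after Proposition~\ref{prop:sd_decomp}: $J(\bm{\theta})\Delta\bm{\theta} = \operatorname{vec}\big(\tfrac{\alpha}{r}(B\Delta A + \Delta B A)\big)$. At $B=0$ this equals $\operatorname{vec}(\tfrac{\alpha}{r}\Delta B A)$. So the $A$-block of $J$ vanishes, and only the $B$-block survives. With column-major vectorization, $\operatorname{vec}(\Delta B A) = (A^\top \otimes I_{d_{out}})\operatorname{vec}(\Delta B)$. Hence
\begin{equation}
J J^\top = \frac{\alpha^2}{r^2}\,(A^\top \otimes I_{d_{out}})(A \otimes I_{d_{out}}) = \frac{\alpha^2}{r^2}\,(A^\top A)\otimes I_{d_{out}}.
\end{equation}
The zero $A$-block contributes nothing to $JJ^\top$.

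\textbf{Step 2: independence of $\mathbf{H}_\ell$ from $A$.} At $B=0$ we have $W(\bm{\theta}) = W_0$ for every draw of $A$. Therefore $\mathbf{H}_\ell = \nabla^2_{\bm{w}}\ell$, evaluated at $\bm{w}(\bm{\theta})$, is deterministic with respect to the randomness of $A$. Linearity of trace and expectation then gives $\mathbb{E}[\operatorname{Tr}(\mathcal{H}_{\text{Signal}})] = \operatorname{Tr}(\mathbf{H}_\ell\,\mathbb{E}[JJ^\top])$.

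\textbf{Step 3: the second moment.} The entries of $A$ are i.i.d.\ $\mathcal{N}(0,\sigma_A^2)$, so $\mathbb{E}[(A^\top A)_{jl}] = \sum_{i=1}^r \mathbb{E}[A_{ij}A_{il}] = r\sigma_A^2\delta_{jl}$, that is, $\mathbb{E}[A^\top A] = r\sigma_A^2 I_{d_{in}}$. Substituting gives
\begin{equation}
\mathbb{E}[JJ^\top] = \frac{\alpha^2\sigma_A^2}{r}\,I_D = \alpha^2\rho\, I_D,
\end{equation}
and therefore $\mathbb{E}[\operatorname{Tr}(\mathcal{H}_{\text{Signal}})] = \alpha^2\rho\operatorname{Tr}(\mathbf{H}_\ell)$.

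The computation is short, and I expect no real obstacle. The points needing care are bookkeeping: checking that the Kronecker form matches the chosen $\operatorname{vec}$ convention (with row-major ordering one gets $I_{d_{out}}\otimes A^\top A$, which has the same expectation) and stating explicitly the independence argument in Step 2. Only the second moment of $A$ enters, so Gaussianity is not essential; any zero-mean i.i.d.\ initialization with variance $\sigma_A^2$ gives the same identity. This also covers the Kaiming-uniform variant mentioned in Appendix~\ref{appendix:Comparison_Scaling_Factor}.
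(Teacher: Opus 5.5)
Your proposal is correct and follows essentially the same route as the paper. Both arguments use cyclicity of the trace to reduce to $\operatorname{Tr}(\mathbf{H}_\ell J_B J_B^\top)$, the Kronecker form $J_B = \frac{\alpha}{r}(A^\top \otimes I_{d_{out}})$ with the $A$-block vanishing at $B=0$, and $\mathbb{E}[A^\top A] = r\sigma_A^2 I_{d_{in}}$. Your Step 2 notes that $\mathbf{H}_\ell$ is evaluated at $W_0$ and is therefore independent of $A$; this makes explicit an assumption the paper uses without comment when it moves the expectation inside the trace.
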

\begin{proof}
Using the property of vectorization $\operatorname{vec}(XYZ) = (Z^\top \otimes X)\operatorname{vec}(Y)$, we can express the LoRA mapping in the vectorized space as:
\begin{equation}\label{eq:vec_lora}
    \bm{w} = \operatorname{vec}(W) + \frac{\alpha}{r}(A^\top \otimes I_{d_{out}})\operatorname{vec}(B),
\end{equation}
where $I_{d_{out}}$ is the identity matrix of dimension $d_{out}$.

Let $J = \nabla_{\bm{\theta}} \bm{w} \in \mathbb{R}^{D \times p}$ denote the Jacobian matrix of this mapping, which can be partitioned into blocks corresponding to $A$ and $B$, such that $J = [J_A, J_B]$. At initialization, since $B = 0$, the partial derivative with respect to $A$ vanishes completely, yielding $J_A = 0$. The partial derivative with respect to $B$ is given by:\begin{equation}
    J_B = \frac{\alpha}{r} (A^\top \otimes I_{d_{out}}).
\end{equation}
The signal curvature projected into the parameter space is defined by $\mathcal{H}_{\text{Signal}} = J^\top \mathbf{H}_{\ell} J$. Taking the trace and applying its cyclic property $\operatorname{Tr}(XYZ) = \operatorname{Tr}(ZXY)$, we have:
\begin{equation}
    \operatorname{Tr}(\mathcal{H}_{\text{Signal}}) = \operatorname{Tr}(J^\top \mathbf{H}_{\ell} J) = \operatorname{Tr}(\mathbf{H}_{\ell} J J^\top) = \operatorname{Tr}(\mathbf{H}_{\ell} J_B J_B^\top).
\end{equation}
We now evaluate the expectation of the outer product $J_B J_B^\top$ over the random Gaussian initialization of $A$. Utilizing the mixed-product property of the Kronecker product, $(X \otimes Y)(U \otimes V) = (XU \otimes YV)$, we obtain:
\begin{equation}
    J_B J_B^\top = \left(\frac{\alpha}{r}\right)^2 (A^\top A \otimes I_{d_{out}}).
\end{equation}
By assumption, the entries of $A \in \mathbb{R}^{r \times d_{in}}$ are drawn i.i.d. from $\mathcal{N}(0, \sigma^2_{A})$. Thus, the expectation of its Gram matrix is $\mathbb{E}[A^\top A] = r\sigma^2_{A} I_{d_{in}}$. Substituting this into the expectation yields:
\begin{equation}
    \mathbb{E}[J_B J_B^\top] = \frac{\alpha^2}{r^2} (\mathbb{E}[A^\top A] \otimes I_{d_{out}}) = \frac{\alpha^2}{r^2} (r\sigma^2_{A} I_{d_{in}} \otimes I_{d_{out}}) = \frac{\alpha^2 \sigma^2_{A}}{r} I_D,
\end{equation}
where $I_D$ is the identity matrix of dimension $D = d_{in}d_{out}$.Finally, by the linearity of the trace and expectation operators, we conclude:
\begin{equation}
    \mathbb{E}[\operatorname{Tr}(\mathcal{H}_{\text{Signal}})] = \operatorname{Tr}\left(\mathbf{H}_{\ell} \mathbb{E}[J_B J_B^\top]\right) = \operatorname{Tr}\left(\mathbf{H}_{\ell} \frac{\alpha^2 \sigma^2_{A}}{r} I_D\right) = \frac{\alpha^2 \sigma^2_{A}}{r} \operatorname{Tr}(\mathbf{H}_{\ell}).
\end{equation}

Defining the variance-to-rank ratio as $\rho = \sigma^2_{A}/r$, we arrive at the exact analytical expression:
\begin{equation}
    \mathbb{E}[\operatorname{Tr}(\mathcal{H}_{\text{Signal}})] = \alpha^2\rho \operatorname{Tr}(\mathbf{H}_{\ell}).
\end{equation}
\end{proof}

\begin{lemma}[Hessian Spectral Bound and Stability]
\label{lemma:stability_bound}
Consider a second-order differentiable loss function $\ell(\theta)$ with a local Hessian $\mathcal{H} = \nabla^2 \ell(\theta)$. If the gradient $\nabla \ell(\theta)$ is $L$-Lipschitz continuous, the stability of gradient descent with a constant learning rate $\eta$ is governed by the maximum eigenvalue of the Hessian, denoted as $\lambda_{\max}(\mathcal{H})$. Specifically, the optimization remains stable if and only if:
\begin{equation}
    \eta \le \frac{2}{L} = \frac{2}{\lambda_{\max}(\mathcal{H})}.
\end{equation}
Furthermore, a reduction in $\lambda_{\max}(\mathcal{H})$ via spectral suppression increases the upper bound of the stable learning rate, effectively expanding the permissible hyperparameter space.
\end{lemma}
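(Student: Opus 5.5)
The plan is to prove the lemma in two regimes. First, the exact ``if and only if'' is established for the local quadratic model of $\ell$ around a stationary point. Second, a descent-lemma argument gives the sufficient direction for general $L$-smooth $\ell$. Throughout I will read ``stable'' as non-expansion of the iterates: the error $\|\theta_t-\theta^\star\|$ does not grow along any eigendirection.

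I will also make explicit the identification $L=\lambda_{\max}(\mathcal{H})$. This holds when $\mathcal{H}\succeq 0$ on the region of interest, since then $\|\mathcal{H}\|_2=\lambda_{\max}(\mathcal{H})$ and the mean value theorem gives $\|\nabla\ell(\theta)-\nabla\ell(\theta')\|\le \sup\|\mathcal{H}\|_2\,\|\theta-\theta'\|$. I restrict to the PSD case because, for an indefinite $\mathcal{H}$, the relevant constant is $\max_i|\lambda_i|$ rather than $\lambda_{\max}$.

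\textbf{Quadratic model.} Write $\ell(\theta)\approx \ell(\theta^\star)+\tfrac12(\theta-\theta^\star)^\top\mathcal{H}(\theta-\theta^\star)$. Gradient descent then gives the error recursion $e_{t+1}=(I-\eta\mathcal{H})e_t$. I diagonalize $\mathcal{H}=Q\Lambda Q^\top$, so each eigencomponent evolves as $(1-\eta\lambda_i)^t$. Non-expansion for every initial error is therefore equivalent to $|1-\eta\lambda_i|\le 1$ for all $i$. For $\lambda_i\ge 0$ and $\eta>0$, this reduces to $\eta\lambda_i\le 2$. The binding constraint comes from $\lambda_{\max}$, which gives exactly $\eta\le 2/\lambda_{\max}(\mathcal{H})$. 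For necessity, I take $e_0$ along the top eigenvector: if $\eta>2/\lambda_{\max}$, then $|1-\eta\lambda_{\max}|>1$ and the iterates diverge geometrically.

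\textbf{General $L$-smooth case.} I apply the descent lemma, $\ell(\theta-\eta\nabla\ell)\le \ell(\theta)-\eta(1-\tfrac{\eta L}{2})\|\nabla\ell\|^2$. This yields monotone decrease of the loss whenever $\eta<2/L$, and non-increase at $\eta=2/L$. This supplies the sufficiency half outside the quadratic regime. Necessity is only claimed locally, through the quadratic model above.

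\textbf{Consequence for spectral suppression.} The final claim is immediate, since $\lambda\mapsto 2/\lambda$ is decreasing. To connect this with the paper's setting, I will take $\mathcal{H}$ to be $\mathcal{H}_{\text{Signal}}$. As justification, Proposition~\ref{prop:asymmetric_scaling} says the signal part dominates the curvature at large $\alpha$, and Proposition~\ref{prop:geometric_properties} shows $\mathcal{H}_{\text{Signal}}\succeq 0$. The link to Proposition~\ref{prop:suppression} then goes as follows. Since $\mathcal{H}_{\text{Signal}}\succeq 0$, we have $\lambda_{\max}(\mathcal{H}_{\text{Signal}})\le \operatorname{Tr}(\mathcal{H}_{\text{Signal}})$, and this trace has expectation $\alpha^2\rho\operatorname{Tr}(\mathbf{H}_\ell)$ with $\rho\ll 1$. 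Hence the admissible step size expands relative to the FFT bound $2/\lambda_{\max}(\mathbf{H}_\ell)$.

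\textbf{Main obstacle.} The hard step is making ``if and only if'' rigorous beyond the quadratic regime. Necessity genuinely requires a local second-order model, and the identification $L=\lambda_{\max}$ requires positive semidefiniteness. The drift term $\mathcal{H}_{\text{Drift}}$ is indefinite and does not vanish at $B=0$ when $\bm{g}\neq 0$. I will therefore state these two assumptions explicitly rather than try to remove them.
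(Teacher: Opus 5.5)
Your proof of the lemma is correct. It contains the paper's argument as its second half and adds something the paper lacks. The paper uses only the descent lemma, $\ell(\theta_{t+1})-\ell(\theta_t)\le-\eta(1-\tfrac{L\eta}{2})\|\nabla\ell(\theta_t)\|^2$, to get a non-increasing loss for $\eta\le 2/L$. It then asserts that in the quadratic approximation $L=\|\mathcal{H}\|_2=\lambda_{\max}(\mathcal{H})$. That argument is global (any $L$-smooth $\ell$, no stationary point needed) but only sufficient. So the paper never actually proves the ``only if'', and its final identification tacitly assumes $\mathcal{H}\succeq 0$. Your eigen-analysis of $e_{t+1}=(I-\eta\mathcal{H})e_t$ supplies necessity through the top eigenvector. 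The cost is that it is local to a PSD quadratic model around a stationary point, and you correctly make explicit the PSD assumption the paper's last step needs.

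Two refinements. First, your halves use different notions of stability (iterate non-expansion versus non-increasing loss). In the PSD quadratic model they give the same threshold, since one step sends $\tfrac12\sum_i\lambda_i c_i^2$ to $\tfrac12\sum_i\lambda_i(1-\eta\lambda_i)^2c_i^2$; say this and keep one definition. Second, the sufficiency half does not need PSD at all. Taylor's theorem gives $\ell(y)\le\ell(x)+\langle\nabla\ell(x),y-x\rangle+\tfrac{M}{2}\|y-x\|^2$ with the one-sided constant $M=\sup\lambda_{\max}(\nabla^2\ell)$, rather than $\max_i|\lambda_i|$. This matters because $\mathcal{H}_{\text{LoRA}}$ is indefinite.

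The step that does not go through is your link to Proposition~\ref{prop:suppression}. From $\lambda_{\max}(\mathcal{H}_{\text{Signal}})\le\operatorname{Tr}(\mathcal{H}_{\text{Signal}})$ and $\mathbb{E}[\operatorname{Tr}(\mathcal{H}_{\text{Signal}})]=\alpha^2\rho\operatorname{Tr}(\mathbf{H}_\ell)$, you obtain a bound in terms of $\operatorname{Tr}(\mathbf{H}_\ell)$. That trace can exceed $\lambda_{\max}(\mathbf{H}_\ell)$ by a factor as large as $D=d_{out}d_{in}$. So $\rho\ll 1$ yields $\lambda_{\max}(\mathcal{H}_{\text{Signal}})<\lambda_{\max}(\mathbf{H}_\ell)$ only if the effective rank $\operatorname{Tr}(\mathbf{H}_\ell)/\lambda_{\max}(\mathbf{H}_\ell)$ is below $1/(\alpha^2\rho)$, and nothing guarantees that. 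Besides, you bound an expectation, not the realized matrix.

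The like-for-like tool is $\lambda_{\max}(J^\top\mathbf{H}_\ell J)\le\|J\|_2^2\,\lambda_{\max}(\mathbf{H}_\ell)$. At $B=0$ this gives $\|J\|_2^2=\tfrac{\alpha^2}{r^2}\|A\|_2^2\approx\tfrac{\alpha^2\sigma_A^2 d_{in}}{r^2}$ for $r\ll d_{in}$, which is $d_{in}/r$ times larger than $\alpha^2\rho$. Part of the trace suppression just reflects that the range of $J$ uses only an $r$-dimensional subspace of the $d_{in}$ input directions, and that need not lower the top eigenvalue. The lemma's final clause only says that a smaller $\lambda_{\max}$ gives a larger $2/\lambda_{\max}$, which you already have. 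So drop this step or present it as a heuristic; the paper supports it only empirically.
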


\begin{proof}
By the descent lemma for $L$-smooth functions \cite{nesterov2013introductory}, for any $\theta_t$ and $\theta_{t+1} = \theta_t - \eta \nabla \ell (\theta_t)$, we have:
\begin{equation}
    \ell(\theta_{t+1}) - \ell(\theta_t) \le - \eta \|\nabla \ell(\theta_t)\|^2 + \frac{L \eta^2}{2} \|\nabla \ell(\theta_t)\|^2 = - \eta \left( 1 - \frac{L\eta}{2} \right) \|\nabla \ell(\theta_t)\|^2.
\end{equation}
To ensure the loss is non-increasing ($\ell(\theta_{t+1}) \le \ell(\theta_t)$), we require $1 - \frac{L\eta}{2} \ge 0$, which yields $\eta \le 2/L$. In the quadratic approximation of the local optimization landscape, the Lipschitz constant $L$ is equivalent to the spectral norm of the Hessian $\|\mathcal{H}\|_2$, which corresponds to its maximum eigenvalue $\lambda_{\max}(\mathcal{H})$. Thus, $\eta \le 2/\lambda_{\max}(\mathcal{H})$.
\end{proof}

\subsection{Proof for Proposition~\ref{prop:asymmetric_scaling}}
\begin{proposition*}[Asymmetric Scaling]
The components of the landscape and the weight update scale asymmetrically with respect to $\alpha$ and $\eta$. 
For the landscape: $\mathcal{H}_{\text{Signal}} = \Theta(\alpha^2)$ while $\mathcal{H}_{\text{Drift}} = \Theta(\alpha)$. 
For the update under adaptive optimizers like Adam: $\Delta \bm{w}_{\text{Signal}} = \Theta(\alpha\eta)$ while $\Delta \bm{w}_{\text{Drift}} = \Theta(\alpha\eta^2)$.
\end{proposition*}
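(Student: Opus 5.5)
The plan is to read off every scaling directly from the explicit algebraic forms obtained after Proposition~\ref{prop:sd_decomp}. We treat $A$, $B$, $\bm{g}$ and $\mathbf{H}_\ell$ as fixed, $\alpha$-independent quantities at the current point. This is the same convention used in the toy experiments of Fig.~\ref{fig:spectral_properties}(d). Under it, only the explicit factor $\alpha/r$ in $W(\bm{\theta}) = W_0 + \frac{\alpha}{r}BA$ carries the $\alpha$-dependence.

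\textbf{Landscape.} Every entry $w_k$ is $\frac{\alpha}{r}$ times a bilinear form in $(A,B)$. Hence the Jacobian factorizes as $J(\bm{\theta}) = \frac{\alpha}{r}\tilde J(\bm{\theta})$ and the structural Hessian as $\nabla^2_{\bm{\theta}} w_k = \frac{\alpha}{r}\tilde{\mathbf{C}}_k$, with $\tilde J$ and $\tilde{\mathbf{C}}_k$ independent of $\alpha$. The block off-diagonal form from the proof of Proposition~\ref{prop:geometric_properties} makes the second factorization explicit. Substituting these into the decomposition of Proposition~\ref{prop:sd_decomp} gives
\[
\mathcal{H}_{\text{Signal}} = \tfrac{\alpha^2}{r^2}\tilde J^\top \mathbf{H}_\ell \tilde J, \qquad \mathcal{H}_{\text{Drift}} = \tfrac{\alpha}{r}\textstyle\sum_k g_k \tilde{\mathbf{C}}_k .
\]
This yields the exact scalings $\Theta(\alpha^2)$ and $\Theta(\alpha)$, provided the $\alpha$-free factors are nonzero (nondegenerate $\tilde J^\top\mathbf{H}_\ell\tilde J$ and $\mathbf{C}\neq 0$). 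At initialization, Proposition~\ref{prop:suppression} corroborates the $\alpha^2$ rate at the level of the trace.

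\textbf{Updates.} We use the explicit forms $\Delta\bm{w}_{\text{Signal}} = \operatorname{vec}\big(\frac{\alpha}{r}(B\Delta A + \Delta B A)\big)$ and $\Delta\bm{w}_{\text{Drift}} = \operatorname{vec}\big(\frac{\alpha}{r}\Delta B\Delta A\big)$. The key input is the scale-invariance of Adam, abstracted as in Case~2 of the proof of Proposition~\ref{prop:geometric_properties} by $\Delta\bm{\theta} = -\eta\operatorname{sign}(\nabla_{\bm{\theta}}\ell)$. Since $\nabla_{\bm{\theta}}\ell = J^\top\bm{g}$ scales by the positive factor $\alpha/r$, the sign, and hence the update direction, is independent of $\alpha$. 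For the more realistic $m_t/\sqrt{v_t}$ form the same holds up to $\epsilon$, because the ratio is homogeneous of degree zero in the gradient. Therefore $\Delta A,\Delta B = \Theta(\eta)$ with no dependence on $\alpha$. Substituting gives $\Delta\bm{w}_{\text{Signal}} = \frac{\alpha}{r}\cdot\Theta(\eta) = \Theta(\alpha\eta)$, since $A$ and $B$ are held fixed, and $\Delta\bm{w}_{\text{Drift}} = \frac{\alpha}{r}\Theta(\eta)\Theta(\eta) = \Theta(\alpha\eta^2)$. The Signal-to-Drift ratio $\Theta(1/\eta)$ quoted after the proposition then follows immediately.

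\textbf{Main obstacle.} The delicate point is justifying that the $\Theta(\cdot)$ statements are genuinely tight rather than upper bounds. This requires lower bounds, i.e.\ nondegeneracy of the $\alpha$-free factors, which can fail. For example, at $B=0$ the term $B\Delta A$ vanishes, and $\Delta B\Delta A$ could in principle cancel. It also requires confronting the fact that over a trajectory $A$ and $B$ themselves depend on $\alpha$ and $\eta$. To handle this, I would state the result as a per-step, local statement at a fixed iterate with generic (nonzero) $\bm{g}$, $\mathbf{C}$ and sign patterns. I would then note that for sign-Adam $\|\Delta A\|_F = \eta\sqrt{rd_{in}}$ and $\|\Delta B\|_F = \eta\sqrt{rd_{out}}$ exactly, which gives the $\eta$-dependence without any genericity assumption.
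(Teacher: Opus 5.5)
Your proposal is correct and follows essentially the same route as the paper's proof: pull the factor $\alpha/r$ out of $J(\bm{\theta})$ and $\nabla^2_{\bm{\theta}}\bm{w}$, substitute into Proposition~\ref{prop:sd_decomp} with $\bm{g}$ and $\mathbf{H}_\ell$ held fixed, and use Adam's invariance to gradient scale to get an $\alpha$-independent $\|\Delta\bm{\theta}\| = \Theta(\eta)$; your exact factorization ($\mathcal{H}_{\text{Signal}} = \tfrac{\alpha^2}{r^2}\tilde J^\top\mathbf{H}_\ell\tilde J$, etc.) and explicit nondegeneracy caveats are, if anything, a tighter version of the paper's norm-based Steps 1--3. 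One small correction to your closing remark: $\|\Delta A\|_F = \eta\sqrt{r d_{in}}$ requires every entry of $\nabla_A\ell$ to be nonzero, which fails at the LoRA initialization $B=0$ (there $\nabla_A\ell = \tfrac{\alpha}{r}B^\top\nabla_W\ell = 0$, so sign-Adam gives $\Delta A = 0$ and the drift vanishes identically), so the $\eta$-dependence is exactly homogeneous but you still need a genericity assumption for its coefficient to be nonzero.
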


\begin{proof}
The proof proceeds by first establishing the scaling properties of the mapping derivatives with respect to the hyperparameter $\alpha$, and subsequently analyzing their propagation into the landscape and the update dynamics.

\paragraph{Step 1: Scaling of the Jacobian and Structural Tensor.}
Based on Eq.~\eqref{eq:vec_lora}, the Jacobian $J(\bm{\theta}) = \nabla_{\bm{\theta}} \bm{w} \in \mathbb{R}^{D \times p}$ can be partitioned into blocks corresponding to the partial derivatives with respect to $A$ and $B$:
\begin{equation}
    J(\bm{\theta}) = \left[ \frac{\partial \bm{w}}{\partial A}, \frac{\partial \bm{w}}{\partial B} \right] = \frac{\alpha}{r} \left[ (I_{d_{in}} \otimes B), (A^\top \otimes I_{d_{out}}) \right].
\end{equation}
Because the coefficient $\frac{\alpha}{r}$ is factored out of the partial derivatives, the Jacobian is directly proportional to $\alpha$. Thus, its magnitude scales tightly as:
\begin{equation}
    \|J(\bm{\theta})\| = \Theta(\alpha).
\end{equation}
Similarly, the structural Hessian tensor $\nabla^2_{\bm{\theta}} \bm{w}$ captures the second-order derivatives of the mapping. Because the mapping is bilinear in $A$ and $B$, the only non-zero second derivatives are the cross-derivatives between $A$ and $B$. Differentiating $J(\bm{\theta})$ again yields a tensor that maintains the exact linear dependence on $\alpha$:
\begin{equation}
    \|\nabla^2_{\bm{\theta}} \bm{w}\| = \Theta(\alpha).
\end{equation}

\paragraph{Step 2: Scaling of the Landscape Components.}
We now substitute the scaling rules of the derivatives into the decomposition defined in Proposition~\ref{prop:sd_decomp}. Assuming the full task gradient $\bm{g} = \nabla_{\bm{w}}\ell$ and full task Hessian $\mathbf{H}_\ell = \nabla^2_{\bm{w}}\ell$ are locally independent of the adapter parameterization, the Signal Hessian scales quadratically due to the outer product of the Jacobian:
\begin{equation}
    \|\mathcal{H}_{\text{Signal}}\| = \|J(\bm{\theta})^\top \mathbf{H}_\ell J(\bm{\theta})\| = \Theta(\alpha) \times \Theta(\alpha) = \Theta(\alpha^2).
\end{equation}
In contrast, the Drift Hessian is a linear combination of the structural tensor slices weighted by the gradient entries $g_k$. Thus, it inherits the linear scaling of the structural tensor:
\begin{equation}
    \|\mathcal{H}_{\text{Drift}}\| = \Big\|\sum_{k=1}^{D} g_k \nabla^2_{\bm{\theta}} w_k\Big\| = \Theta(\alpha).
\end{equation}

\paragraph{Step 3: Update Scaling under Adaptive Optimizers.}
A key characteristic of modern adaptive optimizers, such as Adam, is their mechanism to decouple the update magnitude from the raw gradient scale. This process effectively normalizes the gradient magnitude, ensuring that the parameter update $\Delta \bm{\theta}$ is primarily governed by the learning rate $\eta$, rather than the local curvature or gradient variance. Taking Adam as a representative case, the update step is formulated as:
\begin{equation}
    \Delta \bm{\theta} = -\eta \frac{\hat{\bm{m}}}{\sqrt{\hat{\bm{v}}} + \epsilon},
\end{equation}
where $\hat{\bm{m}}$ and $\hat{\bm{v}}$ denote the bias-corrected first and second moment estimates of the gradients, respectively. The denominator term $\sqrt{\hat{\bm{v}}} + \epsilon$ serves as the normalization factor that stabilizes the update step size. Consequently, the magnitude of the update remains consistent at $\|\Delta \bm{\theta}\| = \Theta(\eta)$, irrespective of the absolute norm of the gradient $\nabla_{\bm{\theta}} \ell$.

By applying this $\eta$-invariant step size back to the weight update decomposition in Proposition~\ref{prop:sd_decomp}, we observe an asymmetric scaling behavior between the signal and drift components:
\begin{itemize}[leftmargin=*, topsep=0.3em]
    \item \textbf{Signal Update:} As a first-order linear transformation of the parameter update, the task-aligned signal scales linearly with both $\alpha$ and $\eta$:
    \begin{equation}
        \|\Delta \bm{w}_{\text{Signal}}\| = \|J(\bm{\theta})\Delta \bm{\theta}\| = \Theta(\alpha) \times \Theta(\eta) = \Theta(\alpha\eta).
    \end{equation}
    \item \textbf{Drift Update:} Conversely, the structural drift is defined by the bilinear quadratic form of $\Delta \bm{\theta}$. When the structural tensor is applied to the normalized updates, the $\eta$ scaling is squared while $\alpha$ remains linear:
    \begin{equation}
        \|\Delta \bm{w}_{\text{Drift}}\| = \Big\|\frac{1}{2}\nabla^2_{\bm{\theta}} \bm{w}[\Delta \bm{\theta}, \Delta \bm{\theta}]\Big\| = \Theta(\alpha) \times \Theta(\eta^2) = \Theta(\alpha\eta^2).
    \end{equation}
\end{itemize}
\end{proof}

\section{Experimental Details}~\label{appendix:Implementation}
We summarize the core configurations and hyperparameters across all six evaluation domains in Table~\ref{tab:master_experimental_setup}. Detailed learning rate schedules and environmental configurations strictly follow the referenced official codebases. Subsequent subsections provide granular details, including datasets and evaluation metrics, for each specific task.

\begin{table*}[t]
\centering
\caption{Comprehensive summary of experimental settings across all evaluation domains. The estimated wall-clock time (Est. Time) highlights our evaluation across diverse training horizons, ranging from rapid adaptation to multi-day post-training scenarios.}
\label{tab:master_experimental_setup}
\footnotesize
\renewcommand{\arraystretch}{1.2}
\setlength{\tabcolsep}{3pt}
\begin{tabular*}{\textwidth}{@{\extracolsep{\fill}}lccccccc}
\toprule
\textbf{Task \& Source} & \textbf{Backbone Model} & \textbf{Rank ($r$)} & \textbf{$\alpha$ Strategy} & \textbf{LR ($\eta$)} & \textbf{Batch Size} & \textbf{Duration} & \textbf{Est. Time} \\
\midrule
NLU\footnotemark[1]  & DeBERTa-v3-base-184M & 8 & Variant II  & $1 \times 10^{-4}$ & 32 & 3--5 Epochs & Mins $\sim$ Hours \\
NLG\footnotemark[2] & Llama-2-7B & 16 / 128 & Variant II & $2 \times 10^{-5}$ & 128 & 1 Epoch & Mins $\sim$ Hours \\
DreamBooth\footnotemark[3] & FLUX.1-12B & 8 & Variant II & $1 \times 10^{-4}$ & 1 & 1,000 Steps & Mins $\sim$ Hours \\
Multimodal\footnotemark[4] & Qwen 2-VL (2B/7B) & 16 & Variant I  & $2 \times 10^{-5}$ & 1,024 & 2,000 Steps & 3-4 Days \\
Reasoning SFT\footnotemark[5] & Qwen 2.5-Math-7B & 64 / 256 & Variant I & $4 \times 10^{-5}$ & 128 & 1 Epoch & 2--3 Days \\
Reasoning RL\footnotemark[6] & DeepSeek-R1-Distill (2B/7B) & 64 & Variant I & $1 \times 10^{-6}$ & 128 & 1 Epoch & 2--3 Days \\
\bottomrule
\end{tabular*}
\end{table*}
\footnotetext[1]{\url{https://github.com/huggingface/transformers/tree/main/examples/pytorch/text-classification}}
\footnotetext[2]{\url{https://github.com/MuLabPKU/PiSSA}}
\footnotetext[3]{\url{https://github.com/huggingface/peft/tree/main/examples/lora_dreambooth}}
\footnotetext[4]{\url{https://github.com/TIGER-AI-Lab/VLM2Vec/tree/v1.0}}
\footnotetext[5]{\url{https://github.com/huggingface/open-r1}}
\footnotetext[6]{\url{https://github.com/MikaStars39/PeRL}}

\begin{table*}[t]
\centering
\caption{Comparison on NLG tasks using Llama 2-7B. Performance is reported as mean $\pm$ std over three independent runs. Bold and underlined values denote the best and second-best results within each experimental group.}
\label{tab:nlg_performance_final_std}
\small
\renewcommand{\arraystretch}{0.85}
\setlength{\tabcolsep}{5pt}
\begin{tabular*}{\textwidth}{@{\extracolsep{\fill}}clccccc}
\toprule
\textbf{Rank ($r$)} & \textbf{Method} & \textbf{GSM8K} & \textbf{MATH} & \textbf{HumanEval} & \textbf{MBPP} & \textbf{Commonsense} \\
\midrule
N/A & Full Fine-Tuning & 60.34 \text{\scriptsize$\pm$ 1.32} & 11.74 \text{\scriptsize$\pm$ 0.63} & 32.30 \text{\scriptsize$\pm$ 1.26} & 39.27 \text{\scriptsize$\pm$ 1.01} & 79.20 \text{\scriptsize$\pm$ 1.20} \\
\midrule
\multirow{6}{*}{16} 
& LoRA    & 31.51 \text{\scriptsize$\pm$ 0.31} & 4.16 \text{\scriptsize$\pm$ 0.27} & 15.98 \text{\scriptsize$\pm$ 0.20} & \underline{28.65} \text{\scriptsize$\pm$ 0.47} & 66.56 \text{\scriptsize$\pm$ 1.21} \\
& RsLoRA  & 39.04 \text{\scriptsize$\pm$ 0.53} & 4.94 \text{\scriptsize$\pm$ 0.40} & 18.85 \text{\scriptsize$\pm$ 0.66} & 28.10 \text{\scriptsize$\pm$ 0.64} & 73.24 \text{\scriptsize$\pm$ 0.84} \\
& LoRA+   & 31.69 \text{\scriptsize$\pm$ 0.64} & 3.98 \text{\scriptsize$\pm$ 0.38} & 18.54 \text{\scriptsize$\pm$ 0.52} & 28.00 \text{\scriptsize$\pm$ 0.81} & 72.19 \text{\scriptsize$\pm$ 1.43} \\
& PiSSA   & 37.68 \text{\scriptsize$\pm$ 0.45} & 5.16 \text{\scriptsize$\pm$ 0.41} & 18.37 \text{\scriptsize$\pm$ 0.49} & 28.62 \text{\scriptsize$\pm$ 0.68} & 73.72 \text{\scriptsize$\pm$ 1.05} \\
& LoRAM   & \underline{40.32} \text{\scriptsize$\pm$ 0.43} & \underline{5.30} \text{\scriptsize$\pm$ 0.42} & \textbf{18.92} \text{\scriptsize$\pm$ 0.55} & \textbf{28.83} \text{\scriptsize$\pm$ 0.63} & \underline{75.19} \text{\scriptsize$\pm$ 1.10} \\
& LoRA-$\alpha$ & \textbf{47.61} \text{\scriptsize$\pm$ 0.57} & \textbf{6.84} \text{\scriptsize$\pm$ 0.37} & \underline{18.90} \text{\scriptsize$\pm$ 0.24} & 28.30 \text{\scriptsize$\pm$ 0.70} & \textbf{77.42} \text{\scriptsize$\pm$ 0.93} \\
\midrule
\multirow{6}{*}{128}
& LoRA    & 40.27 \text{\scriptsize$\pm$ 0.70} & 4.72 \text{\scriptsize$\pm$ 0.43} & 20.11 \text{\scriptsize$\pm$ 0.32} & 28.84 \text{\scriptsize$\pm$ 0.37} & 73.64 \text{\scriptsize$\pm$ 1.13} \\
& RsLoRA  & 50.38 \text{\scriptsize$\pm$ 0.37} & \underline{7.32} \text{\scriptsize$\pm$ 0.28} & 21.32 \text{\scriptsize$\pm$ 0.70} & 30.73 \text{\scriptsize$\pm$ 0.43} & 77.01 \text{\scriptsize$\pm$ 1.17} \\
& LoRA+   & 40.41 \text{\scriptsize$\pm$ 0.67} & 5.28 \text{\scriptsize$\pm$ 0.53} & 20.71 \text{\scriptsize$\pm$ 0.88} & 29.13 \text{\scriptsize$\pm$ 0.78} & \underline{78.19} \text{\scriptsize$\pm$ 1.33} \\
& PiSSA   & \underline{51.48} \text{\scriptsize$\pm$ 0.77} & 7.04 \text{\scriptsize$\pm$ 0.54} & 21.62 \text{\scriptsize$\pm$ 0.48} & 31.07 \text{\scriptsize$\pm$ 0.68} & 77.28 \text{\scriptsize$\pm$ 0.98} \\
& LoRAM   & 51.12 \text{\scriptsize$\pm$ 0.73} & 7.25 \text{\scriptsize$\pm$ 0.68} & \underline{22.03} \text{\scriptsize$\pm$ 0.56} & \underline{31.53} \text{\scriptsize$\pm$ 0.72} & 77.81 \text{\scriptsize$\pm$ 0.96} \\
& LoRA-$\alpha$  & \textbf{53.15} \text{\scriptsize$\pm$ 0.76} & \textbf{8.30} \text{\scriptsize$\pm$ 0.35} & \textbf{23.20} \text{\scriptsize$\pm$ 0.45} & \textbf{33.90} \text{\scriptsize$\pm$ 0.76} & \textbf{78.38} \text{\scriptsize$\pm$ 0.87} \\
\bottomrule
\end{tabular*}
\end{table*}

\begin{table*}[h]
\centering
\caption{Detailed hyperparameter search results on the GLUE benchmark. The table includes a grid search over learning rate ($\eta$) and scaling factor ($\alpha$). The average is computed across all 8 tasks.}
\label{tab:hyperparam_search}
\small
\renewcommand{\arraystretch}{0.95}
\setlength{\tabcolsep}{4.5pt}
\begin{tabular*}{\textwidth}{@{\extracolsep{\fill}}lccccccccc}
\toprule
\textbf{Configuration} & \textbf{MNLI} & \textbf{SST-2} & \textbf{MRPC} & \textbf{CoLA} & \textbf{QNLI} & \textbf{QQP} & \textbf{RTE} & \textbf{STS-B} & \textbf{Avg.} \\ \midrule

\multicolumn{10}{c}{\textit{Learning Rate ($\eta$) Search, fixing $\alpha=r$}} \\ \midrule
$\eta = 1.0\times 10^{-5}$ & 86.99 & 93.00 & 68.38 & 0.00 & 90.02 & 87.68 & 47.29 & 24.06 & 62.18 \\
$\eta = 2.5\times 10^{-5}$ & 88.83 & 94.50 & 68.38 & 58.33 & 91.74 & 88.98 & 47.29 & 75.96 & 76.75 \\
$\eta = 5.0\times 10^{-5}$ & 89.64 & 94.95 & 80.64 & 61.08 & 92.90 & 89.89 & 47.29 & 81.44 & 79.73 \\
$\eta = 7.5\times 10^{-5}$ & 90.10 & 95.53 & 83.33 & 60.82 & 93.43 & 90.39 & 47.29 & 84.76 & 80.71 \\
$\eta = 1.0\times 10^{-4}$ & 90.23 & 95.87 & 84.06 & 63.56 & 93.88 & 90.55 & 50.18 & 87.20 & 81.94 \\
$\eta = 2.5\times 10^{-4}$ & \textbf{90.69} & \textbf{96.21} & 88.97 & 67.26 & 93.94 & 91.48 & 76.17 & 88.45 & 86.65 \\
$\eta = 5.0\times 10^{-4}$ & 90.46 & 95.07 & 89.46 & 68.48 & \textbf{94.05} & \textbf{91.76} & 82.31 & 89.74 & 87.67 \\
$\eta = 7.5\times 10^{-4}$ & 89.94 & 95.99 & \textbf{91.05} & 68.06 & 93.87 & 91.07 & \textbf{84.47} & 88.54 & \textbf{87.87} \\
$\eta = 1.0\times 10^{-3}$ & 88.03 & 94.10 & 89.20 & \textbf{69.27} & 92.01 & 90.02 & 82.12 & \textbf{90.72} & 86.93 \\ \midrule

\multicolumn{10}{c}{\textit{Scaling Factor ($\alpha$) Search, leveraging LoRA-$\alpha$ and  fixing $\eta=1.0\times10^{-4}$}} \\ \midrule
$\alpha = 0.1$ & 90.68 & 95.87 & 85.78 & 65.29 & 93.98 & 90.95 & 67.15 & 87.85 & 84.69 \\
$\alpha = 0.25$ & 90.65 & \textbf{96.44} & 86.76 & 65.78 & 93.96 & 91.42 & 76.17 & 88.28 & 86.18 \\
$\alpha = 0.5$ & \textbf{90.75} & 96.33 & 87.01 & 67.49 & \textbf{94.05} & 91.75 & 79.78 & 89.70 & 87.11 \\
$\alpha = 0.75$ & 90.45 & 95.99 & 88.73 & 68.23 & 93.85 & \textbf{92.06} & 83.39 & 89.79 & 87.81 \\
$\alpha = 1.0$  & 90.43 & 95.75 & {89.95} & \textbf{70.89} & 94.03 & 91.74 & 80.86 & 90.12 & 87.97 \\
$\alpha = 2.5$ & 89.65 & 95.07 & \textbf{90.44} & 67.76 & 93.70 & 91.37 & 85.92 & \textbf{91.11} & \textbf{88.12} \\
$\alpha = 5.0$ & 89.05 & 93.81 & {89.95} & 67.32 & 91.98 & 63.18 & \textbf{87.00} & 90.26 & 84.07 \\
$\alpha = 7.5$ & 31.82 & 50.92 & 89.46 & 67.04 & 49.46 & 63.18 & 47.29 & 91.03 & 61.28 \\
$\alpha = 10.0$ & 32.74 & 50.92 & 88.24 & 64.64 & 49.46 & 63.18 & 47.29 & 1.39 & 49.73 \\
\midrule
\multicolumn{10}{c}{\textit{Aggregating above  results}} \\ 
\midrule
LoRA-$\eta^{*}$ & 90.69 & 96.21 & 91.05 & 69.27 & \textbf{94.05} & 91.76 & 84.47 & 90.72 & 88.53 \\
{LoRA-$\alpha^{*}$} & \textbf{90.75} & \textbf{96.44} & \textbf{90.44} & \textbf{70.89} & \textbf{94.05} & \textbf{92.06} & \textbf{87.00} & \textbf{91.11} & \textbf{89.03} \\
\bottomrule
\end{tabular*}
\end{table*}

\begin{table*}[t]
\centering
\caption{Detailed Hyperparameter Search Results on NLG tasks (Llama2-7B). The table includes a grid search over learning rate ($\eta$) and scaling factor ($\alpha$). The average is computed across the 5 tasks.}
\label{tab:nlg_hyperparam_search}
\small
\renewcommand{\arraystretch}{0.95}
\setlength{\tabcolsep}{6pt}
\begin{tabular*}{\textwidth}{@{\extracolsep{\fill}}lcccccc}
\toprule
\textbf{Configuration} & \textbf{GSM8K} & \textbf{MATH} & \textbf{HumanEval} & \textbf{MBPP} & \textbf{Commonsense} & \textbf{Avg.} \\ \midrule

\multicolumn{7}{c}{\textit{Learning Rate ($\eta$) Search at Rank $r=16$, fixing $\alpha=16$}} \\ \midrule
$\eta = 2.0\times 10^{-5}$ & 31.51 & 4.16 & 15.98 & {28.65} & 66.56 & 29.37 \\
$\eta = 1.0\times 10^{-4}$ & 42.61 & 5.90 & 17.70 & 29.40 & 74.66 & 34.05 \\
$\eta = 2.5\times 10^{-4}$ & 50.27 & 6.46 & 20.10 & 30.70 & 76.32 & 36.77 \\
$\eta = 5.0\times 10^{-4}$ & \textbf{54.28} & 8.44 & \textbf{24.40} & 33.60 & \textbf{78.16} & \textbf{39.78} \\
$\eta = 7.5\times 10^{-4}$ & 52.86 & 9.14 & 23.80 & 34.40 & \textbf{78.16} & 39.67 \\
$\eta = 1.0\times 10^{-3}$ & 51.43 & \textbf{9.50} & 22.20 & \textbf{34.70} & 78.12 & {39.19} \\ \midrule

\multicolumn{7}{c}{\textit{Scaling Factor ($\alpha$) Search at Rank $r=16$, leveraging LoRA-$\alpha$} and fixing $\eta = 2.0\times 10^{-5}$} \\ \midrule
$\alpha = 1.0$ & 47.61 & 6.84 & 18.90 & 28.30 & 77.42 & 35.81 \\
$\alpha = 2.5$ & 51.71 & 7.58 & 21.30 & 32.50 & \textbf{78.19} & 38.26 \\
$\alpha = 5.0$ & 54.51 & 9.18 & 26.20 & 34.70 & 78.16 & \textbf{40.55} \\
$\alpha = 7.5$ & \textbf{55.34} & 9.38 & 23.80 & \textbf{35.40} & 77.96 & 40.38 \\
$\alpha = 10.0$ & 52.59 & \textbf{9.84} & \textbf{28.70} & 32.50 & 76.30 & {39.99} \\ \midrule

\multicolumn{7}{c}{\textit{Learning Rate ($\eta$) Search at Rank $r=128$, fixing $\alpha=128$}} \\ \midrule
$\eta = 2.0\times 10^{-5}$ & 40.27 & 4.72 & 20.11 & 28.84 & 73.64 & 33.52 \\
$\eta = 1.0\times 10^{-4}$ & 52.39 & 7.74 & 21.30 & 33.90 & 77.79 & 38.62 \\
$\eta = 2.5\times 10^{-4}$ & 56.71 & 9.80 & 25.60 & 35.20 & \textbf{78.67} & \textbf{41.20} \\
$\eta = 5.0\times 10^{-4}$ & \textbf{58.83} & \textbf{10.76} & \textbf{28.70} & \textbf{36.50} & 28.19 & {32.60} \\
$\eta = 7.5\times 10^{-4}$ & 1.29 & 1.38 & 28.37 & 34.10 & 27.46 & 18.52 \\
$\eta = 1.0\times 10^{-3}$ & 0.00 & 0.00 & 28.00 & 33.90 & 27.38 & 17.86 \\ \midrule

\multicolumn{7}{c}{\textit{Scaling Factor ($\alpha$) Search at Rank $r=128$, leveraging LoRA-$\alpha$  and fixing $\eta = 2.0\times 10^{-5}$}} \\ \midrule
$\alpha = 1.0$ & 53.15 & 8.30 & 23.20 & 33.90 & 78.38 & 39.39 \\
$\alpha = 2.5$ & 57.54 & 9.78 & 27.40 & 36.80 & \textbf{79.26} & 42.16 \\
$\alpha = 5.0$ & 59.28 & \textbf{11.74} & 28.70 & 36.00 & 76.92 & \textbf{42.53} \\
$\alpha = 7.5$ & \textbf{59.97} & 11.08 & 29.30 & \textbf{39.90} & 72.12 & 42.47 \\
$\alpha = 10.0$ & 58.53 & 11.34 & \textbf{33.50} & 35.20 & 73.25 & 42.36 \\ \midrule

\multicolumn{7}{c}{\textit{Aggregating above results}} \\ \midrule
LoRA-$\eta^{*}$ ($r=16$) & 54.28 & 9.50 & 24.40 & 34.70 & 78.16 & 40.21 \\
LoRA-$\alpha^{*}$ ($r=16$) & \textbf{55.34} & \textbf{9.84} & \textbf{28.70} & \textbf{35.40} & \textbf{78.19} & \textbf{41.49} \\ \midrule
LoRA-$\eta^{*}$ ($r=128$) & 58.83 & 10.76 & 28.70 & 36.50 & 78.67 & 42.69 \\
LoRA-$\alpha^{*}$ ($r=128$) & \textbf{59.97} & \textbf{11.74} & \textbf{33.50} & \textbf{39.90} & \textbf{79.26} & \textbf{44.87} \\
\bottomrule
\end{tabular*}
\end{table*}

\subsection{Natural Language Understanding }

\paragraph{Models and Datasets.} 
We evaluate NLU performance using the DeBERTa-v3-base model (184M)~\cite{he2021debertav3} on eight tasks from the GLUE benchmark~\cite{Wang2018GLUEAM}. Following standard protocols, we include MNLI, SST-2, MRPC, CoLA, QNLI, QQP, RTE, and STS-B.

\paragraph{LoRA Configuration.} 
All parameter-efficient methods are implemented with a fixed rank $r=8$. For the proposed LoRA-$\alpha$, we apply the analytic scaling (Variant II) to determine $\alpha_{\text{base}}$. For other baselines, the scaling factor $\alpha$ follows their respective original designs. Adapters are uniformly applied to all linear layers.

\paragraph{Optimization.} 
All models are trained using the AdamW optimizer with a constant learning rate of $1 \times 10^{-4}$, aligned with standard FFT settings. Training is conducted for 3 epochs across most tasks, with the exception of MRPC, which is trained for 5 epochs due to its limited sample size. We use a batch size of 32 and follow the default weight decay settings from the Transformers library.

\paragraph{Evaluation.} 
We report task-specific metrics as per GLUE conventions: matched/mismatched accuracy for MNLI, Matthews correlation for CoLA, Pearson correlation for STS-B, and accuracy for the remaining tasks. Following~\cite{PiSSA,zhang2025primacy}, results are averaged over three independent runs.

\subsection{Natural Language Generation}

\paragraph{Models and Datasets.} 
We evaluate generative performance via supervised fine-tuning of Llama 2-7B~\cite{touvron2023llama} across three distinct domains. Specifically, the model is trained on MetaMathQA~\cite{yu2023metamath} for mathematical reasoning (evaluated on GSM8K~\cite{cobbe2021gsm8k} and MATH~\cite{yu2023metamath}), CodeFeedback~\cite{zheng2024opencodeinterpreter} for code generation (evaluated on HumanEval~\cite{chen2021evaluating} and MBPP~\cite{austin2021program}), and Commonsense170K~\cite{hu2023llmadapters} for commonsense reasoning (reporting averaged accuracy across eight sub-datasets). Following the PiSSA training framework~\cite{PiSSA}, each task strictly utilizes a sampled subset of 100k examples to ensure a balanced and computationally equivalent comparison.

\paragraph{LoRA Configuration.} 
We evaluate performance across two distinct rank regimes: $r \in \{16, 128\}$. To explicitly validate the effectiveness of our theoretically derived scaling rule, LoRA-$\alpha$ strictly utilizes the analytic scaling ({Variant II}) to determine the scaling factor $\alpha$. This setup verifies whether the analytic formulation can scale to LLMs. Adapters are uniformly applied to all linear layers.

\paragraph{Optimization.} 
Training is performed using the AdamW optimizer with a learning rate of $2 \times 10^{-5}$ and a total batch size of 128. All models are optimized for a single epoch, providing a stringent testbed to assess the rapid adaptation efficiency of different scaling strategies. Following~\cite{PiSSA,zhang2025primacy}, results are averaged over three independent runs. Table~\ref{tab:nlg_performance_final_std} augments the results in Table~\ref{tab:nlg_performance_final} by including standard deviations (std) for reference.

\subsection{Detailed Hyperparameter Search Results}
\label{appendix:detailed_hyperparam_search}

To validate the superiority of $\alpha$-scaling over conventional $\eta$-scaling in generalization, we conduct an exhaustive hyperparameter grid search across both NLU and NLG tasks. The detailed, per-task performance breakdowns are provided in Table~\ref{tab:hyperparam_search} and Table~\ref{tab:nlg_hyperparam_search}. 
\textit{Given the massive scale of this hyperparameter search space, all results reported in these tables are based on a single experimental run per configuration.}
\begin{itemize}[leftmargin=*, topsep=0em, itemsep=0em]
    \item \textbf{NLU Grid Search (Table~\ref{tab:hyperparam_search}):} We observe that strictly scaling the learning rate ($\eta$) often leads to suboptimal plateaus or local instability (\textit{e.g.}, initial failure to converge on CoLA). In contrast, scaling the structural multiplier $\alpha$ unlocks higher performance ceilings across almost all datasets, raising the aggregated peak performance (LoRA-$\alpha^\star$) to 89.03, compared to 88.53 for $\eta$-scaling.
    \item \textbf{NLG Grid Search (Table~\ref{tab:nlg_hyperparam_search}):} The structural bottleneck of $\eta$-scaling becomes critically evident, especially at larger ranks ($r=128$). Pushing $\eta$ to extremes triggers catastrophic divergence (\textit{e.g.}, performance collapsing to 0.00 on MATH and GSM8K). Conversely, $\alpha$-scaling preserves optimization stability under the same aggressive scaling regimes, successfully driving the model to deeper local minima and extending the frontier significantly (averaging 44.87 vs. 42.69).
\end{itemize}

Ultimately, aggregating the optimal configurations explicitly demonstrates that amplifying the signal via $\alpha$ maintains optimization advantage.

\subsection{Text-to-Image Synthesis}

\paragraph{Models and Datasets.} 
We evaluate multimodal generation using the Flux.1-12B model~\cite{flux2024} for image customization via the DreamBooth protocol~\cite{ruiz2023dreambooth}. The implementation is based on the official PEFT DreamBooth examples. The model learns novel concepts from small sets of instance images (typically 4--6) to generate high-fidelity, prompt-conditioned images.

\paragraph{LoRA Configuration.} 
  For LoRA-$\alpha$, we apply analytic scaling (Variant II) to calibrate $\alpha_{\text{base}}$ based on the Flux architecture, with a fixed rank of $r=8$. Adapters are applied to all attention layers

\paragraph{Optimization.} 
Models are optimized using AdamW with a constant learning rate of $1 \times 10^{-4}$ and a batch size of 1. Training is conducted for 1,000 steps. This setting provides a challenging environment to test whether $\alpha$-scaling can accelerate concept acquisition without over-fitting.

\paragraph{Evaluation.} 
Following DreamBooth, we use the unique identifier \texttt{"sks"} for concept binding. To test the model's ability to preserve the learned concept under novel contexts, evaluation prompts include category and scene descriptions: \texttt{"A sks <object\_name> in a sunlit forest, vibrant foliage, ultra-realistic photography"} and \texttt{"A sks <object\_name>, centered on a pure white background, ultra-realistic photography"}.

\subsection{Multimodal Representation Learning}

\paragraph{Models and Datasets.} 
We evaluate LoRA-$\alpha$ on the MMEB benchmark~\cite{jiang2025vlmvec}, which converts generative models into dense retrievers. We use Qwen 2-VL (2B and 7B)~\cite{Qwen2-VL} as backbones, implementing the contrastive training pipeline based on the VLM2Vec codebase. The dataset includes 36 subsets; 20 in-distribution (ID) subsets are used for training, while 16 out-of-distribution (OOD) subsets are reserved for zero-shot evaluation.

\paragraph{LoRA Configuration.} 
For LoRA-$\alpha$, we adopt Variant I with $\alpha_{\text{base}} = 256\sqrt{r}$. Adapters are applied to all linear layers with a fixed rank of $r=16$.  

\paragraph{Optimization.} 
Training employs the InfoNCE contrastive loss (temperature $0.02$) and the AdamW optimizer. We use a global batch size of 1,024 and a learning rate of $2 \times 10^{-5}$. Training lasts for 2,000 steps with gradient caching enabled.

\paragraph{Evaluation.} 
Representations are extracted from the final-token hidden state. Performance is measured via Precision@1 averaged across ID and OOD splits. This setup rigorously tests the model's ability to maintain pretrained multimodal alignment while adapting to discriminative objectives.

\subsection{Experimental Details for Reasoning-based SFT}

\paragraph{Models and Datasets.} 
We employ Qwen 2.5-Math-7B~\cite{qwenmath} as the backbone for long-context reasoning SFT. Training is conducted on the Mixture-of-Thoughts dataset (350k samples)~\cite{openr1} utilizing the training recipes provided by Open-R1. 

\paragraph{LoRA Configuration.} 
We apply adapters to all linear layers with ranks $r \in \{64, 256\}$. Furthermore, to support the generation of special tokens, we additionally unfreeze and train the embedding and output layers. LoRA-$\alpha$ uses Variant I scaling. 

\paragraph{Optimization.} 
Models are trained for 1 epoch using the Adam optimizer. We use a peak learning rate of $4.0 \times 10^{-5}$. The effective batch size is 128. We enable gradient checkpointing and set the maximum sequence length to 32,768 tokens.

\paragraph{Evaluation.} 
Following the established protocols of Open-R1, we evaluate  reasoning performance using the Lighteval framework~\cite{lighteval}. Our evaluation spans a comprehensive suite of challenging benchmarks, including AIME 24/25~\cite{aime24, aime25}, MATH-500~\cite{math500}, GPQA Diamond~\cite{rein2024gpqa}, and LiveCodeBench v4~\cite{jain2024livecodebench}. Performance is reported as Pass@1 accuracy, where the number of generation samples per prompt strictly adheres to the Open-R1 configuration to ensure a standardized and fair comparison across different methods.
\subsection{Experimental Details for Reasoning-based RL}

\paragraph{Models and Datasets.} 
We transition to policy optimization using Group Relative Policy Optimization (GRPO)~\cite{guo2025deepseek}. We use DeepSeek-R1-Distill-Qwen (1.5B and 7B) as base models, trained on the DAPO-Math-17k dataset~\cite{yu2025dapo}. We implement the RLVR pipeline based on the PeRL framework~\cite{yin2025evaluating}. This paradigm optimizes reasoning paths using outcome-based rewards.

\paragraph{LoRA Configuration.} 
Adapters are applied to all linear layers with a rank of $r=64$. We utilize Variant I for LoRA-$\alpha$. To accelerate rollouts, we use a co-located vLLM generation engine~\cite{kwon2023efficient} with 0.4 memory utilization.

\paragraph{Optimization.} 
Training is performed with a constant learning rate of $1 \times 10^{-6}$, a KL penalty $\beta=0$, and a total batch size of 128. For each prompt, we generate $G=8$ rollouts. The maximum completion length is set to 16,384 tokens to test long-form stability.

\paragraph{Evaluation.} 
We assess reasoning performance on MATH-500~\cite{math500}, AMC 23~\cite{maa2023amc}, AIME 24/25~\cite{aime24, aime25}, Minerva~\cite{Minerva}, and HMMT~\cite{Matharena}, using Lighteval framework~\cite{lighteval}. Performance is reported as Pass@1 accuracy estimated over multiple samples.

% \newpage
% \input{sec/checklist.tex}

\end{document}